\documentclass{article}
\usepackage{geometry}
\usepackage{microtype}
\usepackage{graphicx}
\usepackage{subfigure}
\usepackage{booktabs} 
\usepackage{amsmath, amsthm, amsfonts, amssymb}
\usepackage{pifont}
\usepackage{algorithm, algorithmic}
\usepackage{appendix}
\usepackage[colorlinks, citecolor=blue]{hyperref}
\usepackage{color}
\usepackage{url}

\newcommand{\tabincell}[2]{\begin{tabular}{@{}#1@{}}#2\end{tabular}}
\newcommand{\mX}{\mathcal X}
\newcommand{\mY}{\mathcal Y}
\newcommand{\mZ}{\mathcal Z}

\newtheorem{Theorem}{Theorem}[section]
\newtheorem{Lemma}{Lemma}[section]

\newtheorem{Assumption}{Assumption}
\newtheorem*{Remark}{Remark}

\title{On the One-sided Convergence of Adam-type Algorithms in Non-convex Non-concave Min-max Optimization}

\author{
Zehao Dou\\Yale University\\\texttt{zehao.dou@yale.edu}\\
\and Yuanzhi Li\\ Carnegie Mellon University\\ \texttt{yuanzhil@andrew.cmu.edu}\\
}
\date{}

\begin{document}
\maketitle
\allowdisplaybreaks[4]
\begin{abstract}
Adam-type methods, the extension of adaptive gradient methods, have shown great performance in the training of both supervised and unsupervised machine learning models.   
In particular, Adam-type optimizers have been widely used empirically as the default tool for training generative adversarial networks (GANs). On the theory side, however, despite the existence of theoretical results showing the efficiency of Adam-type methods in minimization problems, the reason of their wonderful performance still remains absent in GAN's training. In existing works, the fast convergence has long been considered as one of the most important reasons and multiple works have been proposed to give a theoretical guarantee of the convergence to a critical point of min-max optimization algorithms under certain assumptions.  
In this paper, we firstly argue empirically that in GAN's training, Adam does not converge to a critical point even upon successful training: Only the generator is converging while the discriminator's gradient norm remains high throughout the training. We name this one-sided convergence. Then we bridge the gap between experiments and theory by showing that Adam-type algorithms provably converge to a one-sided first order stationary points in min-max optimization problems under the one-sided MVI condition. We also empirically verify that such one-sided MVI condition is satisfied for standard GANs after trained over standard data sets. To the best of our knowledge, this is the very first result which provides an empirical observation and a strict theoretical guarantee on the one-sided convergence of Adam-type algorithms in min-max optimization. 


\end{abstract}

\newpage
\tableofcontents
\newpage

\section{Introduction}


As one of the most popular optimizers in supervised deep learning tasks like natural language processing \cite{chowdhury2003natural} as well as the main workhorse of generative adversarial network training \cite{goodfellow2014generative}, Adam-type methods are widely used because of their minimal need for learning rate tuning and their coordinate-wise adaptivity on local geometry. 
Starting from AdaGrad \cite{duchi2011adaptive}, adaptive gradient methods have evolved into a variety of different Adam-type algorithms, such as Adam \cite{kingma2015adam}, RMSprop, AMSGrad \cite{reddi2018on} and AdaDelta \cite{zeiler2012adadelta}. 
In supervised deep learning, adaptive gradient methods and Adam-type algorithms play important roles. Especially in the field of NLP (natural language processing), Adam-type algorithms are the goto optimizer for NLP tasks. Multiple NLP experiments show that sparse Adam outperforms other non-adaptive algorithms like Stochastic Gradient Descent (SGD) not only on the solution performance and the loss curvature smoothness, but also on both the training and testing error's convergence rates. It's worth mentioned that the most popular pre-training language model BERT \cite{devlin2018bert} also uses Adam as its optimizer, which shows the power of Adam-type algorithms. 

Also, Adam-type algorithms are very effective in min-max optimization. As a direct and widely used application of min-max optimization, generative adversarial networks (GANs) are notorious for the training difficulty. Training by SGD will easily diverge to nowhere or converge to a limiting cycle, both of which will lead to an ill-performing solution, while Adam optimizer, as the default optimizer for GANs \cite{hsieh2020the}, can obtain better performance. The reason why these two optimizers have so much difference in GANs' training has long been an open problem. Traditionally, the training performance of min-max optimization is measured according to its first-order convergence, which means the norm of the gradient, but is it really true in GANs' training? 

After training GAN on two relatively simple datasets, MNIST and Fashion-MNIST, we can find that, in a practical training process of GAN, Adam optimizer does not perfectly converge since the norm of discriminator's gradient remains quite high through out the training process. Instead, it only has a one-sided convergence as the norm of generator's gradient actually converges to 0.

\begin{figure}[!ht]
\centering
\subfigure[MNIST]{\includegraphics[width=0.3\linewidth]{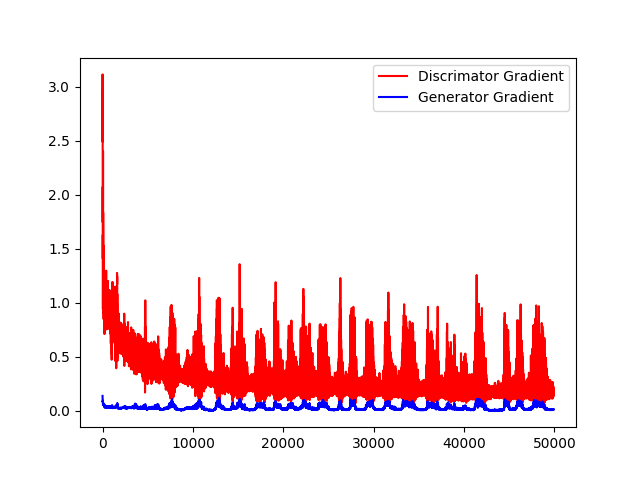}}
\subfigure[Fashion-MNIST]{\includegraphics[width=0.3\linewidth]{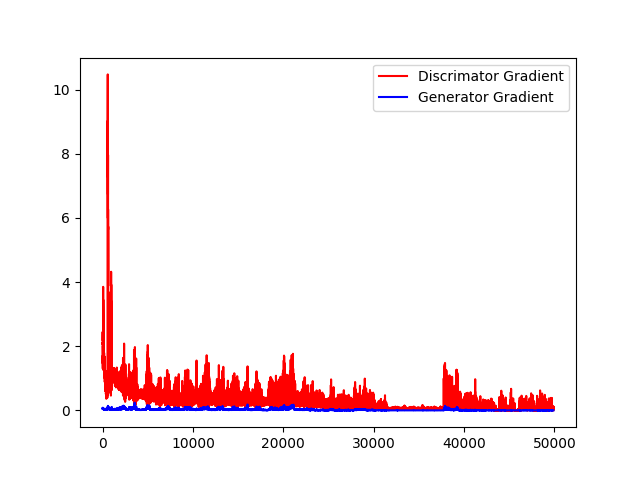}}
\caption{We train GAN on the dataset MNIST and Fashion-MNIST. The two figures above show us the Frobenius norm of the gradients of discriminator and generator.}
\label{fig:intro1}
\end{figure}

\begin{figure}[!ht]
\centering
\subfigure[MNIST figures]{\includegraphics[width=0.3\linewidth]{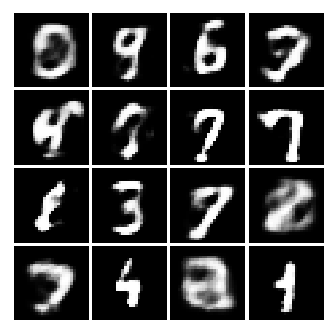}}
\subfigure[Fashion-MNIST figures]{\includegraphics[width=0.3\linewidth]{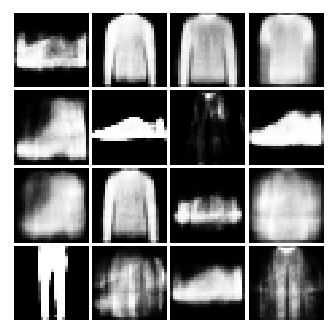}}
\caption{After 50k iterations, we obtain these figures by using Adam. Despite its one-sided convergence, the min-max training actually succeeds.}
\label{fig:intro2}
\end{figure}

This paper thus aims to explain this phenomenon by bridging the gap between theory and practice. On one hand, we understand under which conditions Adam-type optimization algorithms have \emph{provable convergence} for min-max optimization. Towards this end, a recent work \cite{liu2020towards} designs two algorithms, Optimistic Stochastic Gradient (OSG) and Optimistic AdaGrad (OAdaGrad) for solving a class of non-convex non-concave min-max problems and gives theoretical guarantee on their convergence. \cite{liu2020towards} also proposes an open problem on the convergence proof of Adam-type algorithms, which is solved by this paper. On the other hand, we find that the MVI condition needed for our convergence proof does not practically hold for GANs. Instead, we propose the much milder one-sided MVI condition, which tends to hold practically and under which we provide the theoretical guarantee of the one-sided convergence of Adam-type algorithms in GANs' training.  


Despite some theoretical guarantee made on the convergence of Adam-type algorithms on convex concave or non-convex concave min-max optimization, in the non-convex non-concave setting which is most general, there is no theoretical guarantee on convergence. Comparatively speaking, proving the convergence of Adam-type algorithms is much more difficult since they use an empirical version of Momentum. Although it has been shown to perform well in practice, it is actually difficult to analyze theoretically. Even in the standard convex setting, proving the convergence of Adam-type algorithms~\cite{reddi2018on,zou2021understanding} is much harder than other adaptive algorithms such as AdaGrad \cite{duchi2011adaptive}. Actually, the original version of Adam is known not to converge in convex settings. Therefore, to formally analyze the convergence of Adam-type algorithms in min-max optimization, we also consider a ``theoretically correct'' version of Adam, which is an analog of AMSGrad~\cite{reddi2018on}.
In this paper, there are three main contributions, which are listed as follows:
\begin{itemize}
\item We analyze \textbf{Extra Gradient AMSGrad}, which is an Adam-type algorithm used for solving non-convex non-concave min-max optimization problems as well as GANs' training. We prove that, under the assumption of standard MVI condition, the Extra Gradient AMSGrad algorithm provably converges to a $\varepsilon$-stationary point with $O(d\varepsilon^{-2})$ complexity in deterministic setting and $O(d\varepsilon^{-4})$ complexity in stochastic setting. 

\item Although the standard MVI condition above is a much milder assumption than convexity, we show by empirical experiments that MVI condition does not hold for GANs' objective functions in reality. Instead, the one-sided MVI condition proposed by us tends to hold, which is the mildest assumption ever used in all the convergence proofs for min-max optimization. Under the the one-sided MVI condition, we modify the algorithm above by using dual rate decay, and theoretically prove its convergence rate. 

\item We conduct empirical experiments on GANs' training by the Extra Gradient AMSGrad algorithm and the Extra Gradient AMSGrad with dual rate decay analyzed by us. We show that they have much better performance than the Stochastic Gradient Descent Ascent (SGDA) algorithm. Also, we empirically verify that \textbf{our new one-sided MVI condition is indeed satisfied during GAN's training while the previously proposed standard MVI condition is not}, which makes the one-sided MVI condition much closer to reality than the standard version. 

\end{itemize} 
After achieving all these results, we are eventually able to understand the one-sided convergence of Adam-type algorithms in min-max optimization as well as in GAN's training.

\section{Background and Related Works}
In this section, we will introduce the background knowledge as well as related works on the following three fields: adaptive gradient methods, min-max optimization, convergence properties of multiple algorithms for min-max optimization problems.

\subsection{Adaptive Gradient Methods and Adam-type Methods} 
We consider the simplest 1-dimensional unconstrained minimization problem:
\begin{equation*}
\min_{x\in\mathcal D \subseteq\mathbb{R}} f(x).
\end{equation*}
where $f:\mathcal D\rightarrow \mathbb{R}$ is a continuously differentiable function. As one of the most dominant algorithms on the optimization problem above, Stochastic Gradient Descent (SGD) was originally proposed by \cite{goodfellow2016deep}, which has been both empirically and theoretically proved effective, especially when facing large datasets and complicated models. To further improve the performance of SGD, several adaptive variants of SGD have been proposed, such as RMSprop, Adam \cite{kingma2015adam}, AdaGrad \cite{duchi2011adaptive}, AMSGrad \cite{reddi2018on} and AdaDelta \cite{zeiler2012adadelta}. Distinguished from the vanilla gradient descent or its stochastic version SGD, adaptive gradient methods use a coordinate-wise scaling of the updating direction and each iteration relies on the history information of past gradients. In AdaGrad, we use arithmetic average when adopting history gradient information of each iteration while in Adam, RMSprop etc., we use exponential moving average instead because its believed that the more current gradient information is more important. Although adaptive gradient methods and momentum based methods are two different routes on optimization, they are combined perfectly in Adam. Now we introduce the family of adaptive gradient methods and Adam-type, and all of them have the following form:
\begin{equation*}
\begin{aligned}
m_{t+1}&=h_t \nabla f(x_t)+r_t\cdot m_t\\
v_{t+1}&=p_t (\nabla f(x_t))^2 + q_t\cdot v_t\\
x_{t+1}&=x_t-\lambda_t\cdot \frac{m_{t+1}}{\sqrt{v_{t+1}}+\varepsilon}.
\end{aligned}
\eqno[\mathrm{Adaptive}]
\end{equation*}
Here, $f$ is the objective function to minimize. $h,r,p,q$ are scalars depending on $t$, $\lambda_t$ is the learning rate of the $t$-th iteration and $\varepsilon>0$ is a small constant used to protect the denominator from being close to 0. From the formula above, we see that the momentum $m_t$ is the weighted sum of the past gradients and $v_{t}$ is the weighted sum of the past squared gradients. When $h=1, r=0$, $m_{t+1}=\nabla f(x_t)$ is just the current gradient.

We start with the original Adam. 
\begin{itemize}
\item Adam:
    \begin{equation*}
    \begin{aligned}
    v_{t+1} &= \alpha_t v_t+(1-\alpha_t)(\nabla f(x_t))^2\\
    m_{t+1} &= \beta_t m_t+(1-\beta_t)\nabla f(x_t)\\
    x_{t+1} &= x_t-\lambda\cdot\frac{m_{t+1}}{\sqrt{v_{t+1}}+\varepsilon}.
    \end{aligned}
    \eqno[\mathrm{Adam}]
    \end{equation*}
    As we can see, Adam is a combination of adaptive gradient method and momentum method. Here, the momentum term is empirical, meaning that it does not coincide with acceleration techniques that are theoretically sound, which creates extra difficult for the analysis. In Adam, we have $h_t+r_t=p_t+q_t=1$. When the $\alpha_t=\alpha, \beta_t=\beta$ remains constant, there is a bias correction step where $v_{t+1}\leftarrow \frac{v_{t+1}}{1-\alpha^t}$ and $m_{t+1}\leftarrow \frac{m_{t+1}}{1-\beta^t}$. However, we may practically ignore this bias correction step since $\frac{1}{1-\alpha^t}$ and $\frac{1}{1-\beta^t}$ rapidly approach to 1. 
    
\item AMSGrad:
    \begin{equation*}
    \begin{aligned}
    \hat{v}_{t+1} &= \alpha_t v_t+(1-\alpha_t)(\nabla f(x_t))^2\\
    v_{t+1} &= \max(v_t, \hat{v}_{t+1})\\
    m_{t+1} &= \beta_t v_t+(1-\beta_t)\nabla f(x_t)\\
    x_{t+1} &= x_t-\lambda\cdot\frac{m_{t+1}}{\sqrt{v_{t+1}}+\varepsilon}.
    \end{aligned}
    \eqno[\mathrm{AMSGrad}]
    \end{equation*}
    As we can see, AMSGrad is a variant of Adam. Their difference is that the velocity term $v_{t}$ keeps increasing in AMSGrad. 
\end{itemize}

After showing the details of these traditional adaptive gradient methods and Adam-type methods, we introduce their convergence properties as well as their further variants. \cite{reddi2018on} shows that Adam does not converge in some settings where large gradient information is rarely encountered and it will die out quickly because of the “short memory” property of the exponential moving average. However, under some conditions, the convergence proofs of adaptive gradient methods have been obtained. \cite{basu2018convergence} proved the convergence rate of RMSprop and Adam when using deterministic gradients instead of stochastic gradients. \cite{li2018on} analyzed the convergence rate of AdaGrad under both convex and non-convex settings. All the papers above provide theoretical guarantee for the convergence of different types of adaptive gradient descent. After that, \cite{chen2019on} extends Adam to a broader class of Adam-type algorithms and provides its convergence analysis for non-convex optimization problems. In order to combine the fast convergence of adaptive methods and better generalization with momentum based methods, a number of new algorithms are proposed, such as SC-AdaGrad / SC-RMSprop \cite{mukkamala2017variants}, AdamW \cite{loshchilov2019decoupled}, AdaBound \cite{luo2019adaptive} etc..

\subsection{Min-max Optimization} 
In the min-max optimization problem, which is also known as saddle point problem, we have to solve:
\begin{equation*}
\min_{x\in\mX}\max_{y\in\mY}\phi(x,y),\eqno[\mathrm{SP}]
\end{equation*}
where $\mX\subseteq \mathbb{R}^{n_1}, \mY\subseteq\mathbb{R}^{n_2}$, and $\phi:\mX\times\mY\rightarrow\mathbb{R}$ is the objective function. When $\phi$ is convex on $x$ and concave on $y$, we call it a convex-concave min-max optimization. Otherwise, it’s a more general non-convex non-concave min-max optimization. For the brevity, we denote $z=(x,y)$ and $\mZ=\mX\times\mY\subseteq \mathbb{R}^{n_1+n_2}$. We also introduce our gradient vector field: $V(z)=(-\nabla_x \phi(x,y), \nabla_y \phi(x,y))$, which are the update directions on both sides. The goal of $[\mathrm{SP}]$ is to find a tuple $z^*=(x^*,y^*)$ such that $\phi(x^*,y)\leqslant \phi(x^*,y^*)\leqslant \phi(x,y^*)$ holds for $\forall x\in\mX, y\in\mY$, which is called the solution of $[\mathrm{SP}]$. If the inequality above only holds in the local neighbourhood of $z^*$, then $z^*$ can only be called a local solution. Notice that the necessary condition of being a solution (or even a local solution) is to be a stationary point of $\phi$, which means $V(z^*)=0$. Furthermore, if $V$ is $C^1$, any local solution of $[\mathrm{SP}]$ must be stable, which means $\nabla_{xx}^2\phi(x^*,y^*)\succeq 0$ and $\nabla_{yy}^2\phi(x^*,y^*)\preceq 0$. Next, we will introduce several commonly-used algorithms which are designed to solve $[\mathrm{SP}]$. 
\paragraph{Stochastic Gradient Descent Ascent (SGDA)} This is a simple extension of Stochastic Gradient Descent (SGD) algorithm for minimization problems \cite{johnsen1959arrow}. In the $t$-th iteration:
\begin{equation*}
z_{t+1}=z_t+\gamma_t\cdot V(z_t;\omega_t), \eqno[\mathrm{SGDA}]
\end{equation*}
where $\omega_1, \omega_2, \cdots$ are the independent and identically distributed sequence of noises. $V(z,\omega)$ can be treated as a query to the stochastic first-order oracle (SFO). In each iteration of SGDA, we need to query SFO once. Notice that, we simultaneously update $x,y$ in each iteration of SGDA. Therefore, if we alternate the updates of $x$ and $y$, we obtain a variant of SGDA, which is named as the alternating stochastic gradient descent ascent (AltSGDA) algorithm:
\begin{equation*}
\begin{aligned}
x_{t+1}&=x_t+\gamma_t\cdot V_x(x_t,y_t;\omega_t^{(1)})\\
y_{t+1}&=y_t+\gamma_t\cdot V_y(x_{t+1},y_t;\omega_t^{(2)})
\end{aligned}
\eqno[\mathrm{AltSGDA}]
\end{equation*}
Different from original SGDA, we have to make two queries to SFO in each iteration. One for $z_t=(x_t,y_t)$, and the other for the intermediate step $(x_{t+1},y_t)$. Since original SGDA is not going to work even in the convex-concave setting (such as $\min_x\max_y f(x,y)=xy$), so researchers propose the following ``theoretical correct modification''. Prior theoretical works~\cite{allen2021forward} that shows the global convergence of GANs training over real-world distributions also focus on SGDA due to its simplicity. 

\paragraph{Stochastic Extra-gradient (SEG)} This is a different algorithm with the above SGDA, and it is originally proposed for solving the convex-concave setting of min-max optimization problems by \cite{korpelevich1976the}. Given $z_t$ as a base, we take a virtual gradient descent ascent step and obtain a $\tilde{z}_t$, which can be treated as the shadow of $z_t$. Then we use the gradient at $z_t'$ as the update direction of $z_t$. This process can be described as:
\begin{equation*}
\begin{aligned}
z_t'&=z_t+\gamma_t\cdot V(z_t;\omega_t^{(1)})\\
z_{t+1}&=z_t+\gamma_t\cdot V(z_t'; \omega_t^{(2)}).
\end{aligned}
\eqno[\mathrm{SEG}]
\end{equation*}
In each iteration, we need to make two queries to the SFO. One for the base $z_t$ and the other for the shadow $z_t'$. However, in the first step of $[\mathrm{SEG}]$, we can use the gradient at the previous shadow $z_{t-1}'$ so that we only have to make only one query in each iteration and remember the query's result of the previous step. This algorithm is called Optimistic Gradient or Popov's Extra-gradient \cite{popov1980a} which can be described as:
\begin{equation*}
\begin{aligned}
z_t'&=z_t+\gamma_t\cdot V(z_{t-1}';\omega_{t-1})\\
z_{t+1}&=z_t+\gamma_t\cdot V(z_t'; \omega_t).
\end{aligned}
\eqno[\mathrm{OG}]
\end{equation*}
As a widely used algorithm, it has been applied in multiple works \cite{daskalakis2018training, mertikopoulos2019optimistic}. Under some mild assumptions, convergence rates are proved by many theoretical works and we will summarize them in the next section.

\subsection{Convergence Rates of Multiple Min-max Algorithms}
In this section, we summarize the convergence rates of different algorithms as well as the assumptions needed. First, we start with the Mirror-Prox algorithm. \cite{nesterov2007dual} provided the convergence guarantee of Mirror-Prox on in terms of the duality gap, and the rate is $O(1/T)$ where $T$ is the iteration number. As one of the most important algorithms for convex-concave optimization, \cite{juditsky2011solving} introduced its stochastic version where only the stochastic first order oracle can be accessed, and also provided its convergence rate. When combined with \cite{darzentas1983problem}, we can conclude that the convergence rates for both  deterministic and stochastic mirror-prox algorithm are optimal. When it comes to the more challenging non-convex non-concave min-max optimization, the two methods above are still useful. \cite{dang2015on} showed that the deterministic extragradient method can converge to $\varepsilon$-first order stationary point with non-asymptotic guarantee. Another interesting algorithm Inexact Proximal Point (IPP) method \cite{lin2018solving}, which is a stage-wise algorithm performs well under the condition that the objective function is weakly-convex weakly-concave. In each stage, we construct a strongly-convex strongly-concave sub-problem by adding quadratic regularizers. Then, by using stochastic algorithms, we can approximately solve the original problem. It's known that IPP also has a convergence guarantee to first order stationary points. Also, \cite{sanjabi2018solving} proposed an alternating deterministic optimization algorithm, where multiple steps of gradient ascents are conducted before one gradient descent step. Therefore, we can approximately make sure that the max step always reaches near optimal. However, in order to guarantee its convergence to first order stationary point, we have to assume that the inner maximization problem satisfies PL condition \cite{polyak1969minimization}. For the details of convergence rate, we summarize them into Table \ref{tab-1}. Now we explain some terms in the table. 

\begin{table}[!ht]
\centering
\begin{tabular}{|c|c|c|c|c|}
\hline
\hline
~ & Assumption & IC & Guarantee\\
\hline
\tabincell{c}{OMD \cite{daskalakis2018training}\\ (deterministic)}& bilinear & N/A & asymptotic\\
\hline
\tabincell{c}{OMD \cite{mertikopoulos2019optimistic}\\(stochastic)} & coherence & N/A & asymptotic\\
\hline
\tabincell{c}{OG \cite{liu2020towards}\\(stochastic)} & MVI has solution & $\mathcal O(\varepsilon^{-4})$ & $\varepsilon$-SP\\
\hline
\tabincell{c}{OAdaGrad \cite{liu2020towards}\\(stochastic)} & \tabincell{c}{MVI has solution\\Bounded Cumulative Gradients} & $\tilde{\mathcal O}\left((d/\varepsilon^2)^{\frac{1}{1-\alpha}}\right)$ & $\varepsilon$-SP\\
\hline
\tabincell{c}{SEG \cite{korpelevich1976the,iusem2017extragradient} \\(stochastic)} & pseudo-monotonicity & $\mathcal O(\varepsilon^{-4})$ & $\varepsilon$-SP\\
\hline
\tabincell{c}{Extra-gradient \cite{azizian2019a}\\ (deterministic)} & strong-monotonicity & $\mathcal O(\log(1/\varepsilon))$ & $\varepsilon$-optim\\
\hline
\tabincell{c}{AltSGDA\cite{gidel2019negative} \\ (deterministic)} & bilinear & $\mathcal O(\log(1/\varepsilon))$ & $\varepsilon$-optim\\
\hline
\tabincell{c}{IPP \cite{lin2018solving}\\ (stochastic)} & MVI has solution & $\mathcal O(\varepsilon^{-6})$ & $\varepsilon$-SP\\
\hline
\tabincell{c}{AvgPastExtraSGD \cite{gidel2019a} \\ (stochastic)} & monotonicity & $\mathcal O(\varepsilon^{-2})$ & $\varepsilon$-DG\\
\hline
\tabincell{c}{Extra Gradient AMSGrad (ours) \\ (deterministic~\&~stochastic)} & MVI has solution & \tabincell{c}{$\mathcal O(d\varepsilon^{-2})~\&$\\$\mathcal O(d\varepsilon^{-4})$} & $\varepsilon$-SP\\
\hline
\tabincell{c}{Extra Gradient AMSGrad\\
with Dual Rate Decay (ours) \\ (deterministic~\&~stochastic)} & \textbf{one-sided MVI has solution} &\tabincell{c}{$\tilde{\mathcal O}(d\varepsilon^{-2})~\&$\\$\tilde{\mathcal O}(d\varepsilon^{-4})$} & $\varepsilon$-SP\\
\hline
\hline
\end{tabular}
\caption{Summary of different algorithms for min-max optimization. IC stands for iteration complexity, $\varepsilon$-SP stands for $\varepsilon$-first order stationary point, $\varepsilon$-DG stands for $\varepsilon$-duality gap, i.e. $\max_y \phi(x^*,y)-\min_x \phi(x,y^*)<\varepsilon$, and $\varepsilon$-optim stands for $\varepsilon$-close to the set of optimal solutions. The last two lines are algorithms proposed by us in this paper.}
\label{tab-1}
\end{table}

\begin{itemize}
    \item Bilinear form is defined as:
    \[\min_{u\in\mathbb{R}^p}\max_{v\in\mathbb{R}^q} ~u^{\top} Av+u^{\top}a+v^{\top} b,\]
    where $a\in\mathbb{R}^p, b\in\mathbb{R}^q$ and $A\in\mathbb{R}^{p\times q}$. Bilinear setting is often used as toy examples in min-max optimization.
    
    \item Let $K:\mathbb{R}^d\rightarrow \mathbb{R}^d$ be an operator and $\mX\subseteq \mathbb{R}^d$ is a closed convex domain. \cite{hartman1966on} proposed the Stampacchia Variational Inequality (SVI), which aims to find $z^*\in\mX$, such that $\langle K(z^*), z-z^*\rangle\geqslant 0$ holds for all $z\in\mX$. Similarly, \cite{minty1962monotone} proposed the Minty Variational Inequality (MVI) problem, which aims to find $z^*\in\mX$, such that $\langle K(z), z-z^*\rangle\geqslant 0$. In Table \ref{tab-1}, the operator $K(z)=(\nabla_x \phi(x,y), -\nabla_y \phi(x,y))^{\top}=-V(z)$ with $z=(x,y)$.
\end{itemize}

\section{Main Results}

In this section, we introduce the main results of this paper. We focus on two algorithms: Extra Gradient AMSGrad (AMSGrad-EG) and Extra Gradient AMSGrad with Dual Rate Decay (AMSGrad-EG-DRD) which inherit the idea of OAdaGrad into Adam-type algorithms. With AMSGrad-EG, we can prove its first-order convergence under MVI condition. However, as we stated above, Adam does not perfectly converge in GANs' training since MVI condition does not always hold practically for GANs' objective functions. We bridge the gap by proposing one-sided MVI condition which is shown to be more likely to hold. Under this condition, we prove that Extra Gradient AMSGrad with Dual Rate Decay (AMSGrad-DRD) converges one-sidedly, which matches our experiment results.   

\subsection{Problem Setting and Assumptions}
Throughout the paper, we analyze the min-max optimization problems (also known as saddle point problems):
\begin{equation*}
\min_{x\in\mX}\max_{y\in\mY}\phi(x,y),\eqno[\mathrm{SP}]
\end{equation*}
where $\mX\subseteq \mathbb{R}^{n_1}, \mY\subseteq\mathbb{R}^{n_2}$, and $\phi:\mX\times\mY\rightarrow\mathbb{R}$ is the objective function. We denote $z=(x,y)$ and $\mathcal Z=\mX\times\mY$. First, we state some useful assumptions on $\phi(x,y)$:

\begin{Assumption}\label{assumption-1} ~\\
(1) $V:=(-\nabla_x \phi, \nabla_y \phi)$ is $L$-Lipschitz continuous which means for $\forall z_1,z_2\in\mathcal Z$, it holds that: \[\|V(z_1)-V(z_2)\|_2\leqslant L\cdot\|z_1-z_2\|_2.\]
(2) The stochastic first order gradient oracle (SFO) is unbiased and has bounded variance:
\[\mathbb{E}[V(z;\xi)]=V(z)\text{~~and~~}\mathbb{E}\|V(z;\xi)-V(z)\|^2\leqslant \sigma^2.\]
(3) The Stochastic first-order Gradient Oracle (SFO) has bounded output: there exists $G>0$ and $\delta>0$ such that $\|V(z;\xi)\|_2\leqslant G$ and $\|V(z;\xi)\|_{\infty}\leqslant \delta$ almost surely holds.\\
(4) There exists a universal constant $D>0$, such that $\|z_k\|_2\leqslant D$ holds for all points $z_k$ on our trajectory and $\|z^*\|_2\leqslant D$. If the feasible set $\mathcal Z$ is bounded, then this assumption naturally holds. 
\end{Assumption}

\begin{Assumption}[Standard MVI condition]
\label{assumption-mvi} 
The MVI of $-V(z)$ has a solution, which means there exists a $z^*$, such that:
\[\langle -V(z), z-z^*\rangle\geqslant 0 \text{~holds for~} \forall z\in\mathcal Z\].
\end{Assumption}

\subsection{Extra Gradient AMSGrad (AMSGrad-EG)}
In this section, we analyze the Extra Gradient AMSGrad (AMSGrad-EG) algorithm, which is used for non-convex non-concave min-max optimization, and we theoretically provide its convergence rate. So far, the convergence rate of Adam-type algorithms for min-max optimization has long been an open problem, and this work is the very first to obtain a related result. AMSGrad-EG algorithm is described as Algorithm \ref{alg-amsgradeg}.

\begin{algorithm}[!ht]
\caption{Extra Gradient AMSGrad}\label{alg-amsgradeg}
\hspace*{0.02in}{\bf Input:} The initial state $z_0=m_0=v_0=0$, a constant learning rate $\eta$, momentum parameters $\beta_{1t}, \beta_2$, a Stochastic First-order Oracle (SFO) $V(z;\xi)$, a sequence of batch sizes $\{M_k\}$.\\
\hspace*{0.02in}{\bf Output:} $z_t$ where $t$ is uniformly chosen from $\{0,1,\ldots,N-1\}$.\\
\begin{algorithmic}[1]
\FOR{$k=1,\ldots,N$}
\STATE (Gradient Evaluation 1)
$g_{k-1} = \frac{1}{M_k}\sum_{i=1}^{M_k}V(z_{k-1};\xi_{k-1}^i).$
\STATE (Momentum Update 1)
$m_k = \beta_{1k}\hat{m}_{k-1}+(1-\beta_{1k})g_{k-1}.$
\STATE (Velocity Update 1) $v_k = \max(\beta_2 \hat{v}_{k-1}+(1-\beta_2)g_{k-1}^2, \hat{v}_{k-1}),~H_k = \delta I + \mathrm{Diag}(\sqrt{v_k}).$
\STATE (Shadow Update) $\hat{z}_k = z_{k-1}+\eta\cdot H_k^{-1}m_k$.
\STATE (Gradient Evaluation 2) $\hat{g}_k = \frac{1}{M_k}\sum_{i=1}^{M_k}V(\hat{z}_k;\xi_k^i)$.
\STATE (Momentum Update 2) $\hat{m}_k = \beta_{1k}m_k+(1-\beta_{1k})\hat{g}_k.$
\STATE (Velocity Update 2) $\hat{v}_k = \max(\beta_2 v_k+(1-\beta_2)\hat{g}_k^2, v_k), ~\hat{H}_k=\delta I + \mathrm{Diag}(\sqrt{\hat{v}_k})$
\STATE (Real Update) $z_k = z_{k-1}+\eta\cdot \hat{H}_k^{-1}\hat{m}_k$.
\ENDFOR
\end{algorithmic}
\end{algorithm}

Compared to the original Adam, we just add an extra-gradient technique and a taking-max process in velocity updates. It's worth mentioned that if we delete the maximizing operation in velocity update steps, then this algorithm degenerates to Extra-Gradient Adam, since the largest difference between Adam and AMSGrad is that the latter one guarantees that the velocity term is non-decreasing. 

\begin{Theorem} [Main Theorem 1] For the AMSGrad-EG algorithm, given:
\begin{itemize}
\item the objective function $\phi(x,y):\mathbb{R}^{n_1+n_2}\rightarrow\mathbb{R}$ and $V(z)=(-\nabla_x \phi, \nabla_y \phi)$ that satisfy Assumption \ref{assumption-1} and Assumption \ref{assumption-mvi},
\item the initial point $z_0\in\mathcal Z$, the iteration number $N$, a sequence of batch sizes $\{M_k\}$ and a constant learning rate $\eta\leqslant \frac{\delta}{3L}$,
\end{itemize}
the output of the algorithm satisfies the following inequality:
\begin{align}
\mathbb{E}\|V(z)\|_2^2 &\leqslant\frac{1}{N}\left[\frac{6dD^2(\delta+G)^2}{\eta^2}+\frac{12dG^2(\delta+G)^2}{\delta^2}\right]+\frac{150\sigma^2(\delta+G)}{N\delta}\sum_{t=1}^{N}\frac{1}{M_t}\notag\\
&~~+\frac{48GD(\delta+G)}{N\eta}\sum_{t=1}^{N}\beta_{1t}+\frac{108\eta^2G^2(\delta+G)}{N\delta}\sum_{t=1}^{N}\beta_{1t}^2.\notag
\end{align}
\label{thm-amsgradeg}
\end{Theorem}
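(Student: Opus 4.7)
The plan is to take the $\hat H_k$-weighted squared distance to the MVI reference as the Lyapunov potential,
\[
\Phi_k := (z_k - z^*)^{\top}\hat H_k (z_k - z^*),
\]
and to expand it along the real update $z_k = z_{k-1} + \eta\hat H_k^{-1}\hat m_k$. This gives
\[
\Phi_k - \Phi_{k-1} = (z_{k-1}-z^*)^{\top}(\hat H_k - \hat H_{k-1})(z_{k-1}-z^*) + 2\eta\langle \hat m_k, z_{k-1}-z^*\rangle + \eta^2\hat m_k^{\top}\hat H_k^{-1}\hat m_k,
\]
so the analysis reduces to three tasks: (i) controlling the metric-drift term, (ii) extracting a negative quadratic in $\|V(z_k)\|_2^2$ from the mixed and quadratic momentum terms, and (iii) absorbing stochastic and momentum residuals into explicit error terms. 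The metric drift is nonnegative by the AMSGrad monotonicity $\hat H_k \succeq \hat H_{k-1}$ but telescopes in trace, since $\operatorname{tr}(\hat H_N) \leq d(\delta+G)$ by Assumption \ref{assumption-1}(3); combined with $\|z_{k-1}-z^*\|_{\infty} \leq 2D$ from Assumption \ref{assumption-1}(4), it contributes at most $O(dD^2(\delta+G))$ over all $N$ steps.

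For the mixed term I would insert the shadow point via $z_{k-1} = \hat z_k - \eta H_k^{-1}m_k$ and split the momentum as $\hat m_k = V(\hat z_k) + r_k$, where
\[
r_k := (1-\beta_{1k})(\hat g_k - V(\hat z_k)) + \beta_{1k}(m_k - V(\hat z_k)).
\]
The leading piece $2\eta \langle V(\hat z_k), \hat z_k - z^*\rangle$ is $\leq 0$ by Assumption \ref{assumption-mvi}. The shadow-correction piece $-2\eta^2 \langle V(\hat z_k), H_k^{-1}m_k\rangle$, after using Lipschitzness of $V$ to exchange $V(\hat z_k)$ for $V(z_{k-1})$ (legal since $\|z_{k-1}-\hat z_k\|_2 = \eta\|H_k^{-1}m_k\|_2 \leq \eta G/\delta$ and $\eta \leq \delta/(3L)$), and a second Lipschitz step from $z_{k-1}$ to $z_k$, supplies the crucial negative $-\Omega(\eta^2/(\delta+G))\,\|V(z_k)\|_2^2$ contribution. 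The remaining $\eta^2\hat m_k^{\top}\hat H_k^{-1}\hat m_k \leq (\eta^2/\delta)\|\hat m_k\|_2^2$ is then split using $\|\hat m_k\|_2^2 \leq 2\|V(\hat z_k)\|_2^2 + 2\|r_k\|_2^2$; the first half is absorbed into the negative term under the stepsize restriction, and the second half feeds into the error budget.

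For the error accounting, the noise component $(1-\beta_{1k})(\hat g_k - V(\hat z_k))$ of $r_k$ is conditionally mean zero, so its contribution to $2\eta\langle r_k, z_{k-1}-z^*\rangle$ vanishes in expectation while its squared norm contributes $\sigma^2/M_k$ by Assumption \ref{assumption-1}(2); after the $(\delta+G)/\delta$ factor from rescaling, this produces exactly the $\sum_t 1/M_t$ term in the bound. The momentum component $\beta_{1k}(m_k - V(\hat z_k))$ is not mean zero; combining $\|m_k\|_2, \|V(\hat z_k)\|_2 \leq G$ with $\|z_{k-1}-z^*\|_2 \leq 2D$ gives the $\beta_{1k}GD$ per-step linear term and the $\beta_{1k}^2 G^2$ per-step quadratic term. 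Summing over $k = 1,\ldots,N$, taking expectations, using $\Phi_N \geq 0$ and $\Phi_0 \leq \delta D^2$, and dividing by $N$ times the constant in front of $\|V(z_k)\|_2^2$ rearranges into the stated bound.

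The hardest part will be reconciling the three different preconditioners $\hat H_{k-1}, H_k, \hat H_k$ that enter a single step: the potential uses $\hat H_k$, the shadow update uses $H_k$, and the monotone drift telescopes only along the chain $\hat H_{k-1} \preceq H_k \preceq \hat H_k$. Every transfer between these must preserve both the dimension scaling $d$ (forcing coordinate-wise rather than operator-norm bounds on $\hat H_k$) and the factor $(\delta+G)^2$ appearing in the final bound. Compounding this, the empirical momentum $\hat m_k$ is not the gradient at any single iterate, so each Lipschitz exchange must be explicitly charged against either the variance budget $\sum 1/M_k$ or the momentum-bias budget $\sum \beta_{1k}$, and keeping these separate while still producing the negative quadratic in $\|V(z_k)\|_2^2$ is where the bookkeeping becomes most delicate.
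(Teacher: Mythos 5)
Your overall architecture is the paper's: the same potential $\|z_k-z^*\|_{\hat H_k}^2$, the MVI condition applied at the shadow point $\hat z_k$, the metric drift telescoped through $\mathrm{tr}(\hat H_N)\leqslant d(\delta+G)$ (the paper's Lemma \ref{lemma-amsgrad-4}), the preconditioner-mismatch term $\|(\hat H_k^{-1}-H_k^{-1})m_k\|_{\hat H_k}^2$ handled by a trace-difference argument (Lemma \ref{lemma-amsgrad-5}), and the same three-way error split into variance $\sigma^2/M_k$, momentum bias $\beta_{1k}GD$, and $\beta_{1k}^2G^2$. The only structural difference is cosmetic: the paper completes the square around $\hat z_t$ to isolate $-\|z_{t-1}-\hat z_t\|_{\hat H_t}^2+\|\hat z_t-z_t\|_{\hat H_t}^2$ and then shows in Lemma \ref{lemma-amsgrad-3} that the positive piece is at most half the negative one, whereas you expand around $z_{k-1}$ and try to absorb $\eta^2\hat m_k^{\top}\hat H_k^{-1}\hat m_k$ into $-2\eta^2\langle V(\hat z_k),H_k^{-1}m_k\rangle$.

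That absorption step fails as you have written it. You bound the quadratic term by $(\eta^2/\delta)\|\hat m_k\|_2^2$, contributing $+\tfrac{2\eta^2}{\delta}\|V(\hat z_k)\|_2^2$, but the negative term you have extracted is only $-2\eta^2\|V(\hat z_k)\|_{H_k^{-1}}^2\geqslant -\tfrac{2\eta^2}{\delta+G}\|V(\hat z_k)\|_2^2$ in the worst case (e.g.\ when every coordinate of $v_k$ has saturated at $G^2$, so $H_k=(\delta+G)I$). Since $\tfrac{2}{\delta}>\tfrac{2}{\delta+G}$, the net contribution can be strictly positive, and no stepsize restriction repairs this because both terms scale as $\eta^2$; what is lost is precisely the $(\delta+G)$ factor you flag at the end as the delicate bookkeeping. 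The fix is to never leave the weighted metric: keep the quadratic term as $\eta^2\|\hat m_k\|_{\hat H_k^{-1}}^2$, use the AMSGrad monotonicity $\hat H_k^{-1}\preceq H_k^{-1}$ together with $\|\hat m_k-m_k\|\leqslant\beta_{1k}\|m_k-\hat m_{k-1}\|+(1-\beta_{1k})\|\hat g_k-g_{k-1}\|$ and the Lipschitz/noise bounds on $\hat g_k-g_{k-1}$, so that $\eta^2\|\hat m_k\|_{\hat H_k^{-1}}^2\leqslant\eta^2\|m_k\|_{H_k^{-1}}^2+(\text{small errors})$, which is exactly half of the available $2\eta^2\|m_k\|_{H_k^{-1}}^2$. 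This is precisely what the paper's Lemma \ref{lemma-amsgrad-3} accomplishes by writing $z_t-\hat z_t=\eta(\hat H_t^{-1}\hat m_t-H_t^{-1}m_t)$ and showing $\|\hat z_t-z_t\|_{\hat H_t}^2\leqslant\tfrac{1}{2}\|\hat z_t-z_{t-1}\|_{\hat H_t}^2+(\text{errors})$; only afterwards does one pay the single global factor $(\delta+G)$ to convert $\|V(z_{t-1})\|_{H_t^{-1}}^2$ into $\|V(z_{t-1})\|_2^2$, which is where the $(\delta+G)^2$ in the final bound comes from. The remainder of your error accounting matches the paper and would go through once this step is repaired.
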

Here, we analyze the conclusion above on two sides: parameter choosing on $\beta_{1k}$ and on $M_k$.
\begin{itemize}
\item There are two practical ways to choose the parameter sequence $\{\beta_{1t}\}$:~(1) $\beta_{1t}=\beta_1\cdot\lambda^{t-1}$ where $\beta_1,\lambda\in(0,1)$ and (2) $\beta_{1t}=1/t$. In both settings, $\sum_{t=1}^{N}\beta_{1t}^2=\mathcal O(1)$ and $\sum_{t=1}^{N}\beta_{1t}=\widetilde{\mathcal O}(1)$. Therefore, we can conclude from Theorem \ref{thm-amsgradeg} that: 
$\mathbb{E}\|V(z)\|_2^2\leqslant \mathcal O(d/N)+\mathcal O(1/N)\cdot\sum_{t=1}^{N}1/M_t$ holds after regarding $D,G,\delta,\eta$ as constants.
\item When the batch sizes $M_k$ are constant, let $M_k=\Theta(1/\varepsilon^2)$. To guarantee $\mathbb{E}\|V(z)\|_2^2\leqslant\varepsilon^2$, the total number of iterations should be $N=\mathcal O(d\varepsilon^{-2})$ and the total complexity is $\sum_{k=0}^{N}M_k=\mathcal O(d\varepsilon^{-4})$. When the batch sizes $M_k$ are increasing, let $M_k=k+1$. To guarantee $\mathbb{E}\|V(z)\|_2^2\leqslant\varepsilon^2$, the total number of iterations should be $N=\widetilde{\mathcal O}(d\varepsilon^{-2})$ and the total complexity is $\sum_{k=0}^{N}M_k=\widetilde{\mathcal O}(d^2\varepsilon^{-4})$. Obviously, using constant batch sizes obtains a better total complexity. 
\item In the deterministic setting, the first-order oracle directly outputs the accurate gradient $V(z;\xi)=V(z)$, which means $\sigma=0$. Theorem \ref{thm-amsgradeg} leads to $\mathbb{E}\|V(z)\|_2^2\leqslant\mathcal O(d/N)$. To guarantee $\mathbb{E}\|V(z)\|_2^2\leqslant\varepsilon^2$, the total number of iterations should be $N=\mathcal O(d\varepsilon^{-2})$.

\item In the AMSGrad-EG algorithm, the momentum term is a technical difficulty on the convergence proof. Proofs in the past works always use the MVI condition or convex condition like
$\langle V(z_k), z_k-z^*\rangle\leqslant 0~\Rightarrow \langle g_k, z_k-z^*\rangle\lessapprox 0$ to control the gradient norms. However, if we replace $g_k$ with the momentum term $m_k$, the inequality above will no longer hold, and then we have to find another way to control the upper bound of gradient norms. It's also worth mentioned that our proof can't be extended to Optimistic Adam (OAdam) since we need to guarantee that $H_1\preceq H_2\preceq\ldots$ in our proof. Actually, Adam may not even converge in convex case \cite{reddi2018on}.

\item Comparison with OAdaGrad: \cite{liu2020towards} proposes the Optimistic AdaGrad (OAdaGrad) algorithm and gives a convergence analysis on under Assumption \ref{assumption-1}, \ref{assumption-mvi} and Bounded Cumulative Gradient Assumption (which assumes the existence of a constant $0\leqslant \delta\leqslant 1/2$ such that the cumulative gradients are bounded as $\|\hat{g}_{1:k,i}\|_2\leqslant \delta k^{\alpha}$ for all $k$). Under these assumptions, they conclude that:
\[\frac{1}{N}\sum_{k=1}^{N}\mathbb{E}\|V(z_k)\|_{H_{k-1}^{-1}}^{2}\leqslant \mathcal O(1/N^{1-\alpha}).\]
On one hand, notice that $\frac{1}{N}\sum_{k=1}^{N}\mathbb{E}\|V(z_k)\|_{H_{k-1}^{-1}}^2$ is the average of the norms of $V(z_k)$. However, the norm keeps changing. Since $H_{k-1}^{-1}$ keeps decreasing and may limit to 0 as $k\rightarrow\infty$, its unclear what is the real convergence rate in terms of the size of the gradient. It would be more convincing if we can upper bound the average of constant norms like $\frac{1}{N}\sum_{k=1}^{N}\mathbb{E}\|V(z_k)\|_2^2$. On the other hand, the Bounded Cumulative Assumption  though widely used in related papers \cite{zhou2018convergence, reddi2018on, duchi2011adaptive}, is actually a very strong assumption: Under this assumption, it holds that $\|\hat{g}_{1:k, i}\|_2\leqslant \delta k^{\alpha}$, which naturally leads to:
\[\frac{1}{N}\sum_{k=1}^{N}\mathbb{E}\|V(z_k)\|_2^2\leqslant \frac{1}{N}\sum_{k=1}^{N}\mathbb{E}\|\hat{g}_k\|_2^2\leqslant \mathcal O\left(\frac{d}{N^{1-2\alpha}}\right).\]
It causes a possible circularity about the argument. In this paper, we successfully overcome these two shortcomings.

\end{itemize}
\vspace{-5mm}
\paragraph{Standard MVI condition and one-sided MVI} From Table \ref{tab-1}, we can see that many related convergence proofs rely on assuming the MVI condition of $-V(z)$, which means $\langle V(z), z-z^*\rangle\leqslant 0~~\forall z\in\mZ$. Although MVI condition is theoretically known to be true in many standard supervised deep learning settings~\cite{li2017convergence,kleinberg2018alternative,allen2020backward,allen2020feature,li2017algorithmic,li2020learning,allen2020towards,allen2019can}.  However, this is a rather unrealistic assumption for GANs: In some practical scenarios such as DCGAN \cite{radford2015unsupervised}, it is unclear whether the training objective can satisfy the MVI condition: While the generator might have a consistent gradient direction towards the optimal generator (which is the one that generates the target distribution), it is very unlikely that there is a ``optimal discriminator'' where the discriminator's gradient is pointing to through the course of the training. Indeed, different generator should in principle requires different discriminator to discriminate it from  the target distribution, which precludes the MVI condition to hold on $y$.

Also, in practical scenarios like GAN, we only care about the min-variable $x$ (which refers to the generator of GAN), and the optimally of $y$ is not needed. Therefore, in the following part, we propose a weaker version of MVI condition, which is the one-sided MVI condition. Recall that $z=(x,y), z^*=(x^*,y^*)$ where $x\in\mX, y\in\mY, \mZ=\mX\times \mY$, and $V(z)=(-\nabla_x \phi(z), \nabla_y \phi(z)):=(V_x(z),V_y(z))$. Then, the one-sided MVI condition implies that $\langle V_x(z), x-x^*\rangle\leqslant 0~~~\forall z\in\mZ$, which means for any $y\in\mY$, the $x$-part of function $V$, $-V(\cdot,y)$ satisfies the MVI condition. Now we empirically verify that one-sided MVI is more likely to hold in practice in some simple applications of GANs. For $z,z^*\in\mZ$:
\[\langle -V(z), z-z^*\rangle=\langle-V_x(z), x-x^*\rangle+\langle-V_y(z), y-y^*\rangle,\]
where $z=(x,y), z^*=(x^*,y^*)$. We call the three terms above as total MVI, $x$-sided MVI and $y$-sided MVI respectively. Assumption \ref{assumption-mvi} requires total MVI to be non-negative, and Assumption \ref{assumption-3} requires $x$-sided MVI to be non-negative. After training Wasserstein GAN on the MNIST/Fashion MNIST dataset with AMSGrad-EG optimizer, we denote $z_k:=(x_k,y_k)$ as the value of $z$ at the $k$-th iteration, and $z^*:=(x^*,y^*)$ as the value of $z$ at the last iteration. In the following Figure \ref{fig:experiment-2}, we plot the total MVI values $\langle -V(z_k), z_k-z^*\rangle$, $x$-sided  MVI values $\langle-V_x(z_k), x_k-x^*\rangle$, and $y$-sided MVI values $\langle-V_y(z_k), y_k-y^*\rangle$ along the training trajectory. We can see that $x$-sided MVI stays positive while the total MVI does not, which means the one-sided MVI condition proposed in Assumption \ref{assumption-3} is more realistic than the original MVI condition in Assumption \ref{assumption-mvi}. 

\begin{figure}[!ht]
\centering
\subfigure[MNIST]{
\begin{minipage}[b]{0.4\linewidth}
\includegraphics[width=1.0\linewidth]{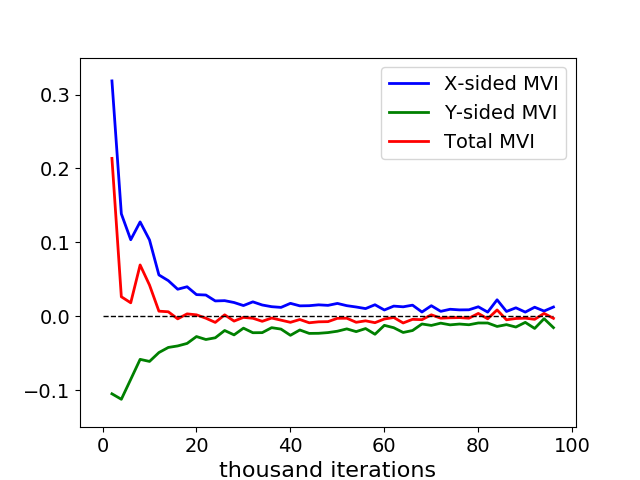}
\end{minipage}}
\subfigure[Fashion MNIST]{
\begin{minipage}[b]{0.4\linewidth}
\includegraphics[width=1.0\linewidth]{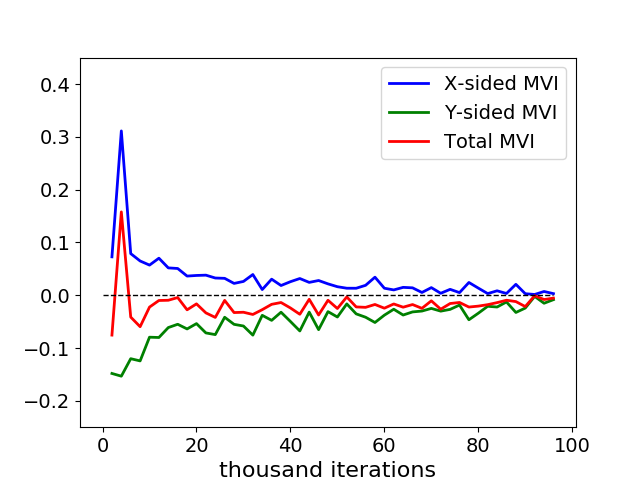}
\end{minipage}}
\caption{This figure shows the MVI values along the training trajectory. As we can see, the blue curve stays above $x$ axis in both experiments while the red curve does not. Since we use non-linear activations in the network architecture, this result is exciting. It's safe to say that the one-sided MVI condition proposed by us fits the reality since the $x$-sided MVI keeps positive.}
\label{fig:experiment-2}
\end{figure}

Under the one-sided MVI condition, we prove in our next theorem that, the conclusion of Theorem \ref{thm-amsgradeg} still holds once we slightly modify AMSGrad-EG to AMSGrad with Extra-Gradient \& Dual Rate Decay (AMSGrad-EG-DRD). To the best of our knowledge, this is a convergence guarantee of an adaptive min-max algorithm with the weakest assumption ever needed. In the next section, we introduce the AMSGrad-EG-DRD algorithm and its convergence property.

\subsection{Extra Gradient AMSGrad with Dual Rate Decay}
Now, we write down the one-sided MVI condition introduced above into the following Assumption \ref{assumption-3}, which is the weakest assumption ever needed to obtain a convergence guarantee in min-max optimization.
\begin{Assumption}[One-sided MVI condition]\label{assumption-3} ~\\
The one-sided MVI of $-V(z)$ has a solution, which means there exists a $z^*=(x^*,y^*)\in\mZ$, such that:
\[\langle -V_x(z), x-x^*\rangle\geqslant 0 \text{~holds for~} \forall z=(x,y)\in\mathcal Z.\]
\end{Assumption}
After slightly modifying AMSGrad-EG with a $\mathcal O(1/\sqrt{k})$ dual rate decay, we obtain the Algorithm \ref{alg-amsgrad-DRD}. Here, we denote $m_k=(m_k^x,m_k^y)$ and $H_k=\mathrm{Diag}(H_k^x,H_k^y)$ where $m_k^x\in\mathbb{R}^{n_1},m_k^y\in\mathbb{R}^{n_2},H_k^x\in\mathbb{R}^{n_1}\times\mathbb{R}^{n_1},H_k^y\in\mathbb{R}^{n_2}\times\mathbb{R}^{n_2}$. 

\begin{algorithm}[!ht]
\caption{Extra Gradient AMSGrad with Dual Rate Decay}\label{alg-amsgrad-DRD}
\hspace*{0.02in}{\bf Input:} The initial state $z_0=m_0=v_0=0$, a constant learning rate $\eta$, a Stochastic First-order Oracle (SFO) $V(z;\xi)$, momentum parameters $\beta_{1t}, \beta_2$, a sequence of batch sizes $\{M_k\}$.\\
\hspace*{0.02in}{\bf Output:} $z_t$ with $t$ uniformly chosen from $\{0,1,\ldots,N-1\}$.\\
\begin{algorithmic}[1]
\FOR{$k=1,\ldots,N$}
\STATE (Gradient Evaluation 1) $g_{k-1} = \frac{1}{m}\sum_{i=1}^{m}V(z_{k-1};\xi_{k-1}^i).$
\STATE (Momentum Update 1)$m_k = \beta_{1k}\hat{m}_{k-1}+(1-\beta_{1k})g_{k-1}$
\STATE (Velocity Update 1) $v_k = \max(\beta_2 \hat{v}_{k-1}+(1-\beta_2)g_{k-1}^2, \hat{v}_{k-1}),~H_k = \delta I + \mathrm{Diag}(\sqrt{v_k}).$
\STATE (Shadow Update) $\hat{x}_k = x_{k-1}+\eta\cdot \left(H_k^x\right)^{-1} m_k^x,$\\
$\hat{y}_k = y_{k-1}+\frac{\eta}{\sqrt{k}}\left(H_k^y\right)^{-1} m_k^y,~\hat{z}_k=(\hat{x}_k,\hat{y}_k).$
\STATE (Gradient Evaluation 2) $\hat{g}_k = \frac{1}{M_k}\sum_{i=1}^{M_k}V(\hat{z}_k;\xi_k^i)$.
\STATE (Momentum Update 2) $\hat{m}_k = \beta_{1k}m_k+(1-\beta_{1k})\hat{g}_k.$
\STATE (Velocity Update 2) $\hat{v}_k = \max(\beta_2 v_k+(1-\beta_2)\hat{g}_k^2, v_k), ~\hat{H}_k=\delta I + \mathrm{Diag}(\sqrt{\hat{v}_k})$
\STATE (Real Update) $z_k = z_{k-1}+\eta\cdot \hat{H}_k^{-1}\hat{m}_k$.
\STATE (Real Update) $x_k = x_{k-1}+\eta\cdot \left(\hat{H}_k^x\right)^{-1} \hat{m}_k^x,$\\
$y_k = y_{k-1}+\frac{\eta}{\sqrt{k}}\left(\hat{H}_k^y\right)^{-1} \hat{m}_k^y,~z_k=(x_k,y_k).$
\ENDFOR
\end{algorithmic}
\end{algorithm}

\begin{Theorem} [Main Theorem 2] For the AMSGrad with Extra-Gradient and Dual Rate Decay (AMSGrad-EG-DRD) algorithm, given:
\begin{itemize}
\item the objective function $\phi(x,y):\mathbb{R}^{n_1+n_2}\rightarrow\mathbb{R}$ and $V(z)=(-\nabla_x \phi, \nabla_y \phi):=(V_x(z),V_y(z))$ that satisfy Assumption \ref{assumption-1} and Assumption \ref{assumption-3},
\item the initial point $z_0\in\mathcal Z$, the iteration number $N$, a sequence of batch sizes $\{M_k\}$ and a constant learning rate $\eta\leqslant \frac{\delta}{3L}$,
\end{itemize}
the output of the algorithm satisfies the following inequality:
\begin{align}
&\mathbb{E}\|V_x(z)\|_2^2 \leqslant\frac{(15+3\log N)dG^2(\delta+G)^2}{N\delta^2}+\frac{6dD^2(\delta+G)^2}{N\eta^2}+\frac{150\sigma^2(\delta+G)}{N\delta}\sum_{t=1}^{N}\frac{1}{M_t}\notag\\
&~~+\frac{48GD(\delta+G)}{N\eta}\sum_{t=1}^{N}\beta_{1t}+\frac{108\eta^2G^2(\delta+G)}{N\delta}\sum_{t=1}^{N}\beta_{1t}^2.\notag
\end{align}
\label{thm-amsgradegDRD}
\end{Theorem}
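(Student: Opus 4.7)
The plan is to mirror the proof of Theorem \ref{thm-amsgradeg} but separate the update into an $x$-block and a $y$-block, replacing standard MVI by one-sided MVI wherever the argument invokes it. In the proof of Theorem \ref{thm-amsgradeg} the key inequality $\langle V(z_k), z_k - z^*\rangle \leqslant 0$ is used as a whole; under Assumption \ref{assumption-3} only $\langle V_x(z_k), x_k - x^*\rangle \leqslant 0$ is available, so I would set up the potential purely on the $x$-coordinates, e.g.\ $\Phi_k := \|(\hat H_k^x)^{1/2}(x_{k-1} - x^*)\|_2^2/\eta$, and derive a one-step descent of the form $\Phi_{k+1} - \Phi_k \leqslant -c\,\eta\|V_x(\hat z_k)\|_{(\hat H_k^x)^{-1}}^2 + (\text{remainders})$ that touches only the $x$-block. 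The stochastic-variance term, the momentum-bias term, and the $\beta_{1t}$-weighted terms can then be transcribed from the proof of Theorem \ref{thm-amsgradeg} essentially unchanged, since they concern quantities of the $x$-block alone.

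The role of the dual rate decay is to keep the $y$-coupling error small enough to absorb. Because $\hat y_k - y_{k-1} = (\eta/\sqrt{k})(H_k^y)^{-1} m_k^y$ has norm $O(1/\sqrt{k})$ by Assumption \ref{assumption-1}(3), the $L$-Lipschitz continuity of $V$ yields $\|V_x(\hat x_k, \hat y_k) - V_x(\hat x_k, y_{k-1})\|^2 \leqslant L^2\|\hat y_k - y_{k-1}\|^2 = O(1/k)$. Summing this perturbation over $k = 1,\dots,N$ produces a harmonic sum $\sum_{k=1}^N 1/k = O(\log N)$, which is precisely where the $(15 + 3\log N)$ factor in the stated bound originates. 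More concretely the steps are: (i) expand $x_k - x^* = x_{k-1} - x^* + \eta(\hat H_k^x)^{-1}\hat m_k^x$ and take weighted squared norms; (ii) trade $\hat m_k^x$ for $V_x(\hat z_k)$ at the cost of a momentum remainder that telescopes under $\sum \beta_{1t}$ and a stochastic remainder that telescopes under $\sum 1/M_t$; (iii) use one-sided MVI to sign $\langle V_x(\hat z_k), \hat x_k - x^*\rangle$, up to one perturbation from replacing $\hat x_k$ by $x_{k-1}$ (handled exactly as in Theorem \ref{thm-amsgradeg}) and a second perturbation from evaluating $V_x$ at $\hat y_k$ rather than at $y^*$; (iv) bound the second perturbation by Lipschitzness and the $\eta/\sqrt{k}$ step size and telescope to $O(\log N)$.

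The main obstacle is step (iii)-(iv): the one-sided MVI bound only controls $V_x$ evaluated at the running iterate, but the descent machinery naturally wants a comparison against the fixed reference $y^*$, and without any assumption on the $y$-side of $V$ we cannot accumulate a $\langle V_y, y - y^*\rangle$ term in the potential. The dual rate decay is exactly the device that keeps the $y$-trajectory from drifting fast enough to spoil the $x$-side descent: it makes the Lipschitz remainders $\sum_k L^2\|\hat y_k - y_{k-1}\|^2$ only logarithmic rather than polynomial in $N$, which is what allows the final bound on $\mathbb{E}\|V_x(z)\|_2^2$ to inherit the same $\mathcal O(d/N)$ decay as in Theorem \ref{thm-amsgradeg} with only a $\log N$ overhead, and lets the remaining $\sigma^2$, $\beta_{1t}$, and $\beta_{1t}^2$ terms be carried over verbatim.
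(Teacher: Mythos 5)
Your proposal follows essentially the same route as the paper: restrict the potential to the $x$-block, invoke the one-sided MVI to sign $\langle V_x(\hat z_k),\hat x_k-x^*\rangle$, carry the variance and $\beta_{1t}$ remainders over verbatim from Theorem \ref{thm-amsgradeg}, and let the $\eta/\sqrt{k}$ decay turn the leftover $y$-displacement into a harmonic sum giving the $3\log N$ term. One small correction: since Assumption \ref{assumption-3} holds for \emph{all} $z=(x,y)\in\mZ$, no comparison of $V_x$ against $y^*$ is ever needed, so your ``second perturbation'' in step (iii) does not arise there; the $y$-coupling actually enters through the Lipschitz bound $\|\hat g_k-g_{k-1}\|\leqslant L\|\hat z_k-z_{k-1}\|+\text{noise}$ used to control the shadow-to-real gap $\|\hat x_k-x_k\|_{\hat H^x_k}^2$, whose $\|y_{k-1}-\hat y_k\|_{\hat H^y_k}^2$ part cannot be absorbed by the $x$-only negative term and is exactly the $O(1/k)$ residue you sum to $O(\log N)$.
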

Similar to Theorem \ref{thm-amsgradeg}, in the deterministic setting where $\sigma=0$, the total complexity is $\widetilde{\mathcal O}(d\varepsilon^{-2})$. In the stochastic setting, we have:
$\mathbb{E}\|V_x(z)\|_2^2\leqslant \widetilde{\mathcal O}(d/N)+\mathcal O(1/N)\cdot\sum_{t=1}^{N}1/M_t.$
When we use constant batch sizes $M_k=\Theta(1/\varepsilon^2)$, iteration number $N$ should be $\widetilde{\mathcal O}(d\varepsilon^{-2})$ in order to guarantee that $\mathbb{E}\|V_x(z)\|_2^2\leqslant\varepsilon^2$. So the total complexity should be $\widetilde{\mathcal O}(d\varepsilon^{-4})$. 

\section{Experimental Results}
In this section, we use experiments to verify the effectiveness of AMSGrad-EG and AMSGrad-EG-DRD algorithms by applying Wasserstein GAN \cite{arjovsky2017wasserstein} on the MNIST \cite{lecun1998gradient} and Fashion MNIST \cite{xiao2017fashion} datasets in our experiments. More experiments will be shown in the appendix. The architectures of discriminator and generator are set to be MLP. The layer widths of generator MLP are 100, 128, 784 (figure's dimension) and the layer widths of discriminator MLP are 784, 128, 1. We set batch sizes as 64, learning rate as $1e$-4 and we compare AMSGrad-EG, AMSGrad-EG-DRD and SGDA by drawing their generated figures after 10k, 20k, 50k iterations in the following Figures \ref{fig:experiment-1} and \ref{fig:experiment-fashion-1}. We use the Tensorflow framework \cite{abadi2016tensorflow} to complete our experiments. As a result, unlike the non-adaptive SGDA algorithm, the two algorithms proposed by us perform better than the non-adaptive SGDA and their generated figures are almost real, which shows their effectiveness.
\begin{figure}
\centering
\subfigure[AMSGrad-EG]{
\begin{minipage}[b]{0.30\linewidth}
\includegraphics[width=1.0\linewidth]{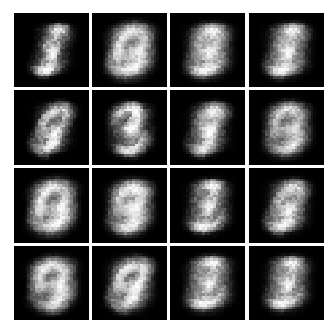}\vspace{1pt}
\includegraphics[width=1.0\linewidth]{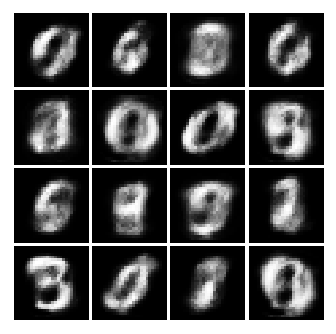}\vspace{1pt}
\includegraphics[width=1.0\linewidth]{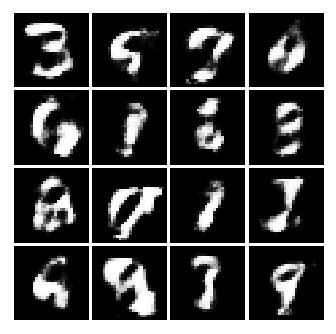}
\end{minipage}}
\subfigure[AMSGrad-EG-DRD]{
\begin{minipage}[b]{0.30\linewidth}
\includegraphics[width=1.0\linewidth]{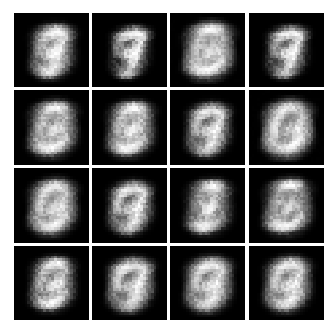}\vspace{1pt}
\includegraphics[width=1.0\linewidth]{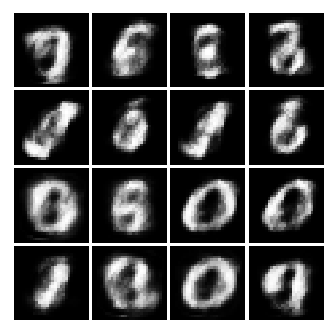}\vspace{1pt}
\includegraphics[width=1.0\linewidth]{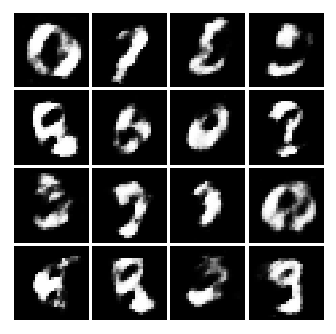}
\end{minipage}}
\subfigure[SGDA]{
\begin{minipage}[b]{0.30\linewidth}
\includegraphics[width=1.0\linewidth]{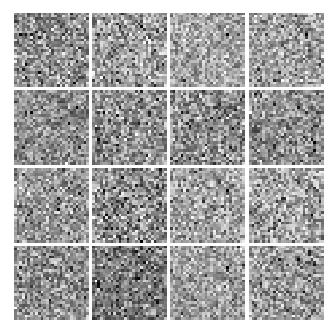}\vspace{2pt}
\includegraphics[width=1.0\linewidth]{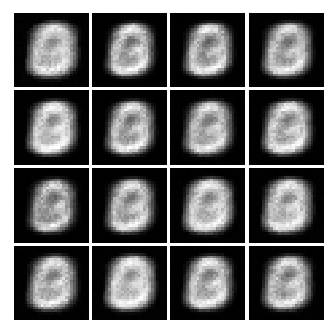}\vspace{2pt}
\includegraphics[width=1.0\linewidth]{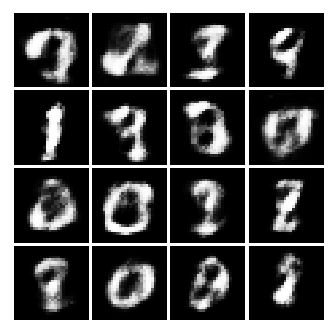}
\end{minipage}}
\caption{These are the generated MNIST figures by the three algorithms after 10k, 20k, 50k iterations. }
\label{fig:experiment-1}
\end{figure}

\begin{figure}
\centering
\subfigure[AMSGrad-EG]{
\begin{minipage}[b]{0.30\linewidth}
\includegraphics[width=1.0\linewidth]{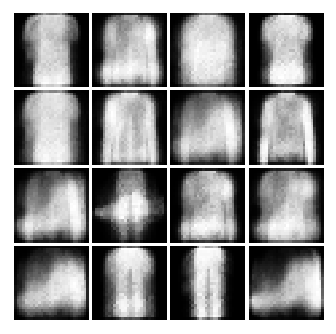}\vspace{1pt}
\includegraphics[width=1.0\linewidth]{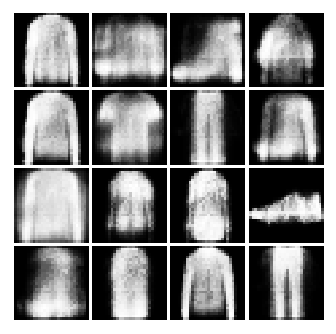}\vspace{1pt}
\includegraphics[width=1.0\linewidth]{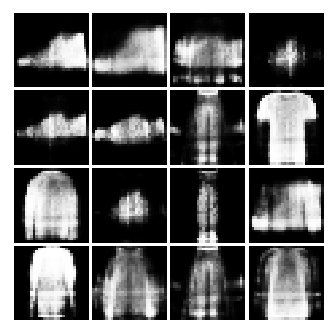}
\end{minipage}}
\subfigure[AMSGrad-EG-DRD]{
\begin{minipage}[b]{0.30\linewidth}
\includegraphics[width=1.0\linewidth]{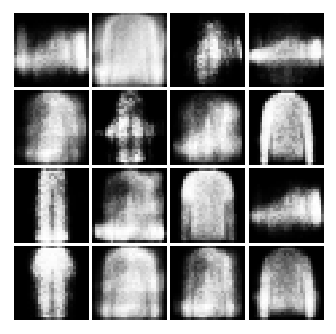}\vspace{1pt}
\includegraphics[width=1.0\linewidth]{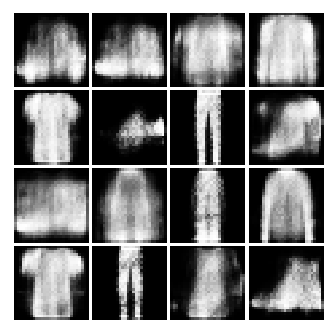}\vspace{1pt}
\includegraphics[width=1.0\linewidth]{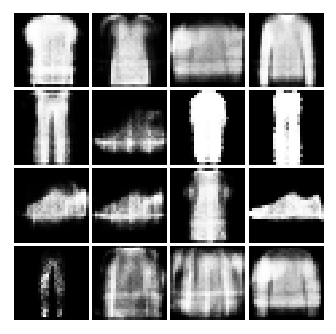}
\end{minipage}}
\subfigure[SGDA]{
\begin{minipage}[b]{0.30\linewidth}
\includegraphics[width=1.0\linewidth]{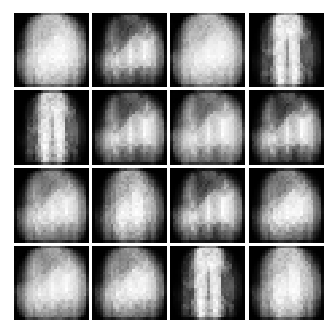}\vspace{1pt}
\includegraphics[width=1.0\linewidth]{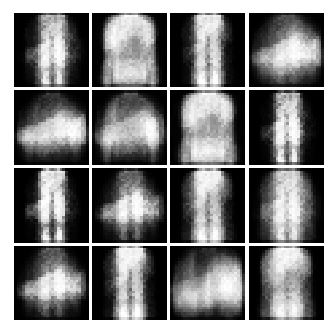}\vspace{1pt}
\includegraphics[width=1.0\linewidth]{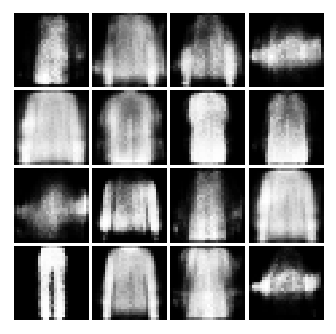}
\end{minipage}}
\caption{These are the generated Fashion MNIST figures by the three algorithms after 10k, 20k, 50k iterations. }
\label{fig:experiment-fashion-1}
\end{figure}

\newpage
\bibliography{reference}
\bibliographystyle{alpha}
\newpage
\appendix

\section{Proof for the Convergence of AMSGrad-EG}
We recall that in the $t$-th iteration of Extra-Gradient AMSGrad, our update is as follows:
\begin{equation}
\begin{aligned}
m_t &= \beta_{1t}\hat{m}_{t-1}+(1-\beta_{1t})g_{t-1},~v_t = \max(\beta_2 \hat{v}_{t-1}+(1-\beta_2)g_{t-1}^2, \hat{v}_{t-1})\\
H_t &= \delta I + \mathrm{Diag}(\sqrt{v_t}),~\hat{z}_t = z_{t-1}+\eta\cdot H_t^{-1}m_t\\
\hat{m}_t &= \beta_{1t}m_t+(1-\beta_{1t})\hat{g}_t,~\hat{v}_t = \max(\beta_2 v_t+(1-\beta_2)\hat{g}_t^2, v_t)\\
\hat{H}_t &= \delta I + \mathrm{Diag}(\sqrt{\hat{v}_t}),~z_t = z_{t-1}+\eta\cdot \hat{H}_t^{-1}\hat{m}_t.
\end{aligned}
\label{eqn-amsgrad}
\end{equation}
Now we begin to prove Theorem 3.1 (Main Theorem 1). Before that, we prove that as the weighted sum of stochastic gradient, the momentum terms $m_t$, $\hat{m}_t$ are also contained in $l_2$ ball with radius $G$.
\begin{Lemma}
There exist upper bounds for both velocity terms $v_t, \hat{v}_t$ and momentum terms $m_t,\hat{m}_t$:\\
(1) For $\forall t\in\mathbb{N}$, almost surely, the momentum terms $\|m_t\|_2\leqslant G, \|\hat{m}_t\|_2\leqslant G$.\\
(2) For $\forall t\in\mathbb{N}$, almost surely, the velocity terms $|v_{t,i}|\leqslant G^2, |\hat{v}_{t,i}|\leqslant G^2$ hold for $\forall i\in[d]$.
\label{lemma-amsgrad-1}
\end{Lemma}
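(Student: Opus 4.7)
The plan is to prove both bounds by a joint induction on $t$, exploiting the fact that the updates for $m_t, \hat{m}_t$ are convex combinations of the stochastic gradients and the previous momentum terms, while the updates for $v_t, \hat{v}_t$ are maxes of convex combinations of squared stochastic gradients and previous velocity terms. The only ingredient beyond the update rules is Assumption \ref{assumption-1}(3), which says that almost surely $\|V(z;\xi)\|_2 \leqslant G$, and hence $\|g_{t-1}\|_2 \leqslant G$ and $\|\hat{g}_t\|_2 \leqslant G$ for every $t$.

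For the momentum part, the base case is immediate from the initialization $m_0 = \hat{m}_0 = 0$. For the inductive step, assume $\|\hat{m}_{t-1}\|_2 \leqslant G$. Since $\beta_{1t} \in [0,1]$, the triangle inequality together with the coordinate-wise bound on $g_{t-1}$ gives
\[
\|m_t\|_2 = \|\beta_{1t}\hat{m}_{t-1} + (1-\beta_{1t})g_{t-1}\|_2 \leqslant \beta_{1t}\|\hat{m}_{t-1}\|_2 + (1-\beta_{1t})\|g_{t-1}\|_2 \leqslant G.
\]
An identical calculation with $m_t$ in place of $\hat{m}_{t-1}$ and $\hat{g}_t$ in place of $g_{t-1}$ yields $\|\hat{m}_t\|_2 \leqslant G$, closing the induction.

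For the velocity part, I would work coordinate-wise. Since $\|V(z;\xi)\|_2 \leqslant G$ implies $|g_{t-1,i}| \leqslant G$ and $|\hat{g}_{t,i}| \leqslant G$ for every $i$, we have $g_{t-1,i}^2, \hat{g}_{t,i}^2 \leqslant G^2$. The base case $v_0 = 0$ trivially satisfies $0 \leqslant v_{0,i} \leqslant G^2$. Now assume $0 \leqslant \hat{v}_{t-1,i} \leqslant G^2$; then $\beta_2 \hat{v}_{t-1,i} + (1-\beta_2)g_{t-1,i}^2 \in [0, G^2]$, and taking the coordinate-wise max with $\hat{v}_{t-1,i} \in [0, G^2]$ preserves both bounds, so $v_{t,i} \in [0, G^2]$. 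The same argument applied to the second velocity update gives $\hat{v}_{t,i} \in [0, G^2]$, completing the induction.

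There is no serious obstacle here: both statements are structural consequences of the update rules being convex combinations (for momentum) or maxes of convex combinations (for velocity) of quantities that are uniformly bounded by Assumption \ref{assumption-1}(3). The only care required is to interleave the inductive steps for $(m_t, v_t)$ and $(\hat{m}_t, \hat{v}_t)$ in the correct order, since each is used in the definition of the next.
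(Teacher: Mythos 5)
Your proposal is correct and follows essentially the same route as the paper: induction on $t$ using the almost-sure bound $\|V(z;\xi)\|_2\leqslant G$, the convex-combination structure of the momentum updates, and the max-of-convex-combinations structure of the velocity updates (the paper only sketches the velocity part, which you spell out coordinate-wise). The one cosmetic slip is calling the $\ell_2$ bound on $g_{t-1}$ a ``coordinate-wise bound'' in the momentum step; the inequality you actually use is the $\ell_2$ triangle inequality, which is what the paper uses as well.
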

\begin{proof}[Proof of Lemma \ref{lemma-amsgrad-1}] Actually, this lemma can be simply proved by using the method of induction. Since we've assumed that $\|V(z)\|_2\leqslant G$ almost surely holds for $z\in\mZ=\mathbb{R}^d$, so $\|g_t\|_2\leqslant G$ and $\|\hat{g}_t\|_2\leqslant G$ almost surely holds. Therefore, by knowing that $m_0=\hat{m}_0=0~\Rightarrow~0=\|m_0\|_2=\|\hat{m}_0\|\leqslant G$ and once $\|m_{k-1}\|_2\leqslant G, \|\hat{m}_{k-1}\|_2\leqslant G$, we have:
\begin{equation*}
\begin{aligned}
\|m_k\|_2&=\|\beta_{1k}\hat{m}_{k-1}+(1-\beta_{1k})g_{k-1}\|_2\leqslant \beta_{1k}\|\hat{m}_{k-1}\|_2+(1-\beta_{1k})\|g_{k-1}\|_2\leqslant (1-\beta_{1k})G+\beta_{1k}G=G\\
\|\hat{m}_k\|_2&=\|\beta_{1k}m_k+(1-\beta_{1k})\hat{g}_k\|_2\leqslant \beta_{1k}\|m_k\|_2+(1-\beta_{1k})\|\hat{g}_k\|_2\leqslant (1-\beta_{1k})G+\beta_{1k}G=G.
\end{aligned}    
\end{equation*}
Similarly, the upper bound for velocity terms can also be easily proved.

\end{proof}

\begin{Lemma}
\[\|z_t-z^*\|_{\hat{H}_t}^2\leqslant\|z_{t-1}-z^*\|_{\hat{H}_t}^2 -\|z_{t-1}-\hat{z}_t\|_{\hat{H}_t}^2+\|\hat{z}_t-z_t\|_{\hat{H}_t}^2+ 2\langle \eta\cdot\hat{\varepsilon}_t, \hat{z}_t-z^*\rangle + 8\eta\beta_{1t}GD.\]
Here, $\hat{\varepsilon}_t=\hat{g}_t-V(\hat{z}_t)=\frac{1}{M_t}\sum_{i=1}^{M_t}V(\hat{z}_t;\hat{\xi}_i)-V(\hat{z}_t)$.
\label{lemma-amsgrad-2}
\end{Lemma}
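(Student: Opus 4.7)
The plan is to peel off the cross-term $2\langle z_t - z_{t-1}, \hat{z}_t - z^*\rangle_{\hat{H}_t}$ via a three-point identity centered at the shadow iterate $\hat{z}_t$, convert it into an inner product with the momentum $\hat{m}_t$ using the real-update rule, decompose $\hat{m}_t$ into a ``true-gradient'' piece, a stochastic-noise piece, and a momentum-residual piece, and then dispose of the three pieces using MVI, the definition of $\hat{\varepsilon}_t$, and Cauchy--Schwarz, respectively.

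First I would expand $\|z_t - z^*\|_{\hat{H}_t}^2$ and $\|z_{t-1} - z^*\|_{\hat{H}_t}^2$ by inserting $\pm\hat{z}_t$ inside each square, then subtract the two expansions. The $\|\hat{z}_t - z^*\|_{\hat{H}_t}^2$ terms cancel and we obtain the exact identity
\[
\|z_t - z^*\|_{\hat{H}_t}^2 = \|z_{t-1} - z^*\|_{\hat{H}_t}^2 - \|z_{t-1} - \hat{z}_t\|_{\hat{H}_t}^2 + \|\hat{z}_t - z_t\|_{\hat{H}_t}^2 + 2\langle z_t - z_{t-1}, \hat{z}_t - z^*\rangle_{\hat{H}_t},
\]
so the whole problem reduces to controlling the last inner product. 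Using $\hat{H}_t(z_t - z_{t-1}) = \eta \hat{m}_t$ from the real update in (\ref{eqn-amsgrad}), that term equals $2\eta \langle \hat{m}_t, \hat{z}_t - z^*\rangle$.

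Next I would decompose $\hat{m}_t$. Since $\hat{m}_t = \beta_{1t} m_t + (1 - \beta_{1t})\hat{g}_t$, one sees $\hat{m}_t - \hat{g}_t = \beta_{1t}(m_t - \hat{g}_t)$, so writing $\hat{g}_t = V(\hat{z}_t) + \hat{\varepsilon}_t$ gives
\[
\hat{m}_t = V(\hat{z}_t) + \hat{\varepsilon}_t + \beta_{1t}(m_t - \hat{g}_t).
\]
Assumption \ref{assumption-mvi} applied at $\hat{z}_t$ gives $\langle V(\hat{z}_t), \hat{z}_t - z^*\rangle \leqslant 0$, which lets me drop the first piece. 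The second piece reproduces exactly the $2\langle \eta\hat{\varepsilon}_t, \hat{z}_t - z^*\rangle$ of the claim. For the residual, Cauchy--Schwarz combined with $\|m_t\|_2, \|\hat{g}_t\|_2 \leqslant G$ from Lemma \ref{lemma-amsgrad-1} and the diameter bound $\|\hat{z}_t\|_2, \|z^*\|_2 \leqslant D$ from Assumption \ref{assumption-1}(4) yields $2\eta\beta_{1t}|\langle m_t - \hat{g}_t, \hat{z}_t - z^*\rangle| \leqslant 8\eta\beta_{1t}GD$, which is the last term in the claim.

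The only real subtlety is the choice of center in the three-point identity: it must be the shadow iterate $\hat{z}_t$, not $z_{t-1}$ or $z_t$, because only then does MVI apply at the point where the gradient driving the ``real'' update was actually evaluated. This is where the extragradient/shadow step pays off in the analysis, and it is the one design choice that makes the argument go through even though $\hat{m}_t$ differs from $\hat{g}_t$ by a momentum contribution that MVI has nothing to say about directly.
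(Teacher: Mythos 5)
Your proposal is correct and follows essentially the same route as the paper: the same exact identity isolating the cross term $2\eta\langle \hat{m}_t,\hat{z}_t-z^*\rangle$ (the paper derives it by a longer add-and-subtract chain, you by the three-point identity centered at $\hat{z}_t$, but they are the same computation), followed by the identical decomposition $\hat{m}_t=V(\hat{z}_t)+\hat{\varepsilon}_t+\beta_{1t}(m_t-\hat{g}_t)$, the MVI inequality at $\hat{z}_t$, and the Cauchy--Schwarz bound $2\eta\beta_{1t}\cdot 2G\cdot 2D=8\eta\beta_{1t}GD$. No gaps.
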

\begin{proof}[Proof of Lemma \ref{lemma-amsgrad-2}] According to the update rules:
\begin{align}
\|z_t-z^*\|_{\hat{H}_t}^2 &= \|z_{t-1}+\eta\cdot \hat{H}_t^{-1}\hat{m}_t-z^*\|_{\hat{H}_t}^2 \notag\\
&= \|z_{t-1}+\eta\cdot \hat{H}_t^{-1}\hat{m}_t-z^*\|_{\hat{H}_t}^2 - \|z_{t-1}+\eta\cdot \hat{H}_t^{-1}\hat{m}_t-z_t\|_{\hat{H}_t}^2\notag\\
&= \|z_{t-1}-z^*\|_{\hat{H}_t}^2-\|z_{t-1}-z_t\|_{\hat{H}_t}^{2} +2\langle \eta\cdot\hat{m}_t, z_t-z^*\rangle\notag\\
&= \|z_{t-1}-z^*\|_{\hat{H}_t}^2-\|z_{t-1}-\hat{z}_t+\hat{z}_t-z_t\|_{\hat{H}_t}^2 +2\langle \eta\cdot\hat{m}_t, z_t-\hat{z}_t\rangle + 2\langle \eta\cdot\hat{m}_t, \hat{z}_t-z^*\rangle\notag\\
&= \|z_{t-1}-z^*\|_{\hat{H}_t}^2-\|z_{t-1}-\hat{z}_t\|_{\hat{H}_t}^2 -\|\hat{z}_t-z_t\|_{\hat{H}_t}^2-2\langle \hat{H}_t(z_{t-1}-\hat{z}_t), \hat{z}_t-z_t\rangle\notag \\
&~~~~+2\langle \eta\cdot\hat{m}_t, z_t-\hat{z}_t\rangle + 2\langle \eta\cdot\hat{m}_t, \hat{z}_t-z^*\rangle\notag\\
&= \|z_{t-1}-z^*\|_{\hat{H}_t}^2-\|z_{t-1}-\hat{z}_t\|_{\hat{H}_t}^2 -\|\hat{z}_t-z_t\|_{\hat{H}_t}^2 + 2\langle \eta\cdot\hat{m}_t, \hat{z}_t-z^*\rangle\notag\\
&~~~~+ 2\langle z_t-\hat{z}_t, \hat{H}_t(z_{t-1}-\hat{z}_t+\eta\cdot \hat{H}_t^{-1}\hat{m}_t)\rangle\notag\\
&= \|z_{t-1}-z^*\|_{\hat{H}_t}^2-\|z_{t-1}-\hat{z}_t\|_{\hat{H}_t}^2 -\|\hat{z}_t-z_t\|_{\hat{H}_t}^2 + 2\langle \eta\cdot\hat{m}_t, \hat{z}_t-z^*\rangle\notag\\
&~~~~+ 2\langle z_t-\hat{z}_t, \hat{H}_t(z_t-\hat{z}_t)\rangle\notag\\
&= \|z_{t-1}-z^*\|_{\hat{H}_t}^2-\|z_{t-1}-\hat{z}_t\|_{\hat{H}_t}^2 +\|\hat{z}_t-z_t\|_{\hat{H}_t}^2 + 2\langle \eta\cdot\hat{m}_t, \hat{z}_t-z^*\rangle\notag
\end{align}
Notice that $\hat{g}_t = V(\hat{z}_t)+\hat{\varepsilon}_t$. Since $z^*$ is a solution of MVI which means $\langle V(z), z-z^*\rangle\leqslant 0$ holds for $\forall z\in\mZ$, so $\langle V(\hat{z}_t), \hat{z}_t-z^*\rangle\leqslant 0$. Therefore:
\begin{equation*}
\begin{aligned}
\langle \eta\cdot\hat{m}_t, \hat{z}_t-z^*\rangle &= \langle \eta\cdot(\beta_{1t}m_t+(1-\beta_{1t})\hat{g}_t), \hat{z}_t-z^*\rangle\leqslant\langle \eta\cdot\hat{g}_t, \hat{z}_t-z^*\rangle + \langle \eta\cdot\beta_{1t}(m_t-\hat{g}_t), \hat{z}_t-z^*\rangle\\
&\leqslant\langle \eta\cdot(V(\hat{z}_t)+\hat{\varepsilon}_t), \hat{z}_t-z^*\rangle + \eta\beta_{1t}\cdot\|m_t-\hat{g}_t\|_2\cdot\|\hat{z}_t-z^*\|_2\\
&\overset{(a)}{\leqslant} \langle \eta\hat{\varepsilon}_t, \hat{z}_t-z^*\rangle + 4\eta\beta_{1t}GD.
\end{aligned}    
\end{equation*}
Here, $(a)$ holds because $\|m_t-\hat{g}_t\|_2\leqslant \|m_t\|_2+\|\hat{g}_t\|_2\leqslant 2G$, $\|\hat{z}_t-z^*\|_2\leqslant 2D$ and by using MVI property, $\langle V(\hat{z}_t),\hat{z}_t-z^*\rangle\leqslant 0$.\\
Combine it with the inequality above, we obtain that:
\[\|z_t-z^*\|_{\hat{H}_t}^2\leqslant\|z_{t-1}-z^*\|_{\hat{H}_t}^2-\|z_{t-1}-\hat{z}_t\|_{\hat{H}_t}^2 +\|\hat{z}_t-z_t\|_{\hat{H}_t}^2 + 2\langle \eta\cdot\hat{\varepsilon}_t, \hat{z}_t-z^*\rangle + 8\eta\beta_{1t}GD. \]
which comes to our conclusion.
\end{proof}
In the lemma above, $\langle \eta\cdot\hat{\varepsilon}_t, \hat{z}_t-z^*\rangle$ has zero mean. So it can be ignoring when taking expectation. Next, we upper bound the $\|\hat{z}_t-z_t\|_{\hat{H}_t}^2$ term.
\begin{Lemma}
\label{lemma-amsgrad-3}
\begin{equation*}
\begin{aligned}
\|\hat{z}_t-z_t\|_{\hat{H}_t}^2 &\leqslant
2\eta^2\cdot\|(\hat{H}_t^{-1}-H_t^{-1})m_t\|_{\hat{H}_t}^2+\frac{16\eta^2G^2\beta_{1t}^2}{\delta}+\frac{12\eta^2L^2}{\delta^2}\cdot\|\hat{z}_t-z_{t-1}\|_{\hat{H}_t}^2\\
&~~~~+12\eta^2\cdot\left(\|\hat{\varepsilon}_{t}\|_{\hat{H}_t^{-1}}^2+\|\varepsilon_{t-1}\|_{H_t^{-1}}^2\right). 
\end{aligned}    
\end{equation*}
Here, $\varepsilon_{t-1}=g_{t-1}-V(z_{t-1})$ and $\hat{\varepsilon}_t=\hat{g}_t-V(\hat{z}_t)$.
\end{Lemma}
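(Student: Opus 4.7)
Subtracting the shadow and real updates gives $\hat z_t - z_t = \eta(H_t^{-1} - \hat H_t^{-1}) m_t + \eta \hat H_t^{-1}(m_t - \hat m_t)$. Applying the parallelogram-type bound $\|a+b\|_{\hat H_t}^2 \le 2\|a\|_{\hat H_t}^2 + 2\|b\|_{\hat H_t}^2$, together with the identity $\|\hat H_t^{-1} w\|_{\hat H_t}^2 = \|w\|_{\hat H_t^{-1}}^2$, immediately peels off the first summand $2\eta^2 \|(H_t^{-1}-\hat H_t^{-1}) m_t\|_{\hat H_t}^2$ of the claim and reduces the problem to bounding $2\eta^2\|m_t - \hat m_t\|_{\hat H_t^{-1}}^2$ by everything else on the right-hand side.

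Next, from the momentum update I would extract the identity $m_t - \hat m_t = (1-\beta_{1t})(m_t - \hat g_t)$, use $(1-\beta_{1t})^2 \le 1$, and split $m_t - \hat g_t = \beta_{1t}(\hat m_{t-1} - g_{t-1}) + (g_{t-1} - \hat g_t)$. A second application of $\|a+b\|^2 \le 2\|a\|^2 + 2\|b\|^2$ in the $\hat H_t^{-1}$-norm separates a bounded ``stale momentum'' piece from a drifting ``oracle mismatch'' piece. For the stale momentum piece, Lemma \ref{lemma-amsgrad-1} gives $\|\hat m_{t-1} - g_{t-1}\|_2 \le 2G$, and $\hat H_t^{-1} \preceq \delta^{-1} I$ (from $\hat H_t \succeq \delta I$) converts this into exactly the $16\eta^2 G^2 \beta_{1t}^2/\delta$ term.

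For the oracle mismatch $g_{t-1} - \hat g_t$ I would substitute $g_{t-1} = V(z_{t-1}) + \varepsilon_{t-1}$ and $\hat g_t = V(\hat z_t) + \hat\varepsilon_t$ to rewrite it as $\varepsilon_{t-1} + (V(z_{t-1}) - V(\hat z_t)) - \hat\varepsilon_t$ and apply $\|a+b+c\|^2 \le 3(\|a\|^2 + \|b\|^2 + \|c\|^2)$ in the $\hat H_t^{-1}$-norm; this is where the coefficient $12 = 2 \cdot 2 \cdot 3$ emerges. The Lipschitz middle term is then handled by invoking $\hat H_t \succeq \delta I$ twice (once to pass from the $\hat H_t^{-1}$-norm into $\ell_2$, once to return from $\ell_2$ into the $\hat H_t$-norm) together with $\|V(z_{t-1})-V(\hat z_t)\|_2 \le L\|z_{t-1} - \hat z_t\|_2$, producing the $12\eta^2 L^2/\delta^2 \cdot \|\hat z_t - z_{t-1}\|_{\hat H_t}^2$ term. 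The final step is to upgrade $\|\varepsilon_{t-1}\|_{\hat H_t^{-1}}^2$ into $\|\varepsilon_{t-1}\|_{H_t^{-1}}^2$, which is precisely where the defining AMSGrad max operation pays off: because $\hat v_t \ge v_t$ componentwise by construction, $\hat H_t \succeq H_t$ and hence $\hat H_t^{-1} \preceq H_t^{-1}$.

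The main obstacle is less analytic and more organizational: one must choose the grouping $m_t - \hat g_t = \beta_{1t}(\hat m_{t-1} - g_{t-1}) + (g_{t-1} - \hat g_t)$ so that the $\beta_{1t}$ factor stays isolated on the ``bounded memory'' piece and never leaks into the Lipschitz or noise terms, and one must track carefully which of $\hat H_t, H_t, \delta I, (\delta+G)I$ dominates each intermediate quantity, so that the final inequality has exactly the matrix norms that will cancel telescopically with the bound of Lemma \ref{lemma-amsgrad-2} in the proof of Theorem \ref{thm-amsgradeg}. Every individual inequality invoked is elementary; the value of the lemma lies entirely in this bookkeeping.
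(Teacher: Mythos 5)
Your proposal is correct and follows essentially the same route as the paper's proof: the same splitting of $\hat z_t - z_t$ into the preconditioner-mismatch piece and the momentum-difference piece, the same (algebraically equivalent) momentum identity isolating a $2G$-bounded term with a $\beta_{1t}$ factor, the same three-way split of the oracle mismatch yielding the coefficient $12$, and the same uses of $\hat H_t \succeq \delta I$, the Lipschitz bound, and the AMSGrad monotonicity $\hat H_t \succeq H_t$. The only cosmetic difference is that you factor out $(1-\beta_{1t})$ before splitting (bounding $\hat m_{t-1}-g_{t-1}$ rather than $m_t-\hat m_{t-1}$), which produces identical constants.
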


\begin{proof}[Proof of Lemma \ref{lemma-amsgrad-3}]
According to the update rules (\ref{eqn-amsgrad}), we know that $z_t-\hat{z}_t=\eta\cdot(\hat{H}_t^{-1}\hat{m}_t-H_t^{-1}m_t)$. Therefore, we upper bound the term $\|\hat{z}_t-z_t\|_{\hat{H}_t}^2$ as follows:
\begin{align}
\|\hat{z}_t-z_t\|_{\hat{H}_t}^2 &= \eta^2\cdot\|\hat{H}_t^{-1}\hat{m}_t-H_t^{-1}m_t\|_{\hat{H}_t}^2 =\eta^2\cdot\|\hat{H}_t^{-1}(\hat{m}_t-m_t)+(\hat{H}_t^{-1}-H_t^{-1})m_t\|_{\hat{H}_t}^2\notag\\
&\leqslant 2\eta^2\cdot\left(\|\hat{m}_t-m_t\|_{\hat{H}_t^{-1}}^2+\|(\hat{H}_t^{-1}-H_t^{-1})m_t\|_{\hat{H}_t}^2\right)\notag\\
&\overset{(a)}{=} 2\eta^2\cdot\|(\hat{H}_t^{-1}-H_t^{-1})m_t\|_{\hat{H}_t}^2 + 2\eta^2\cdot\|\beta_{1t}(m_t-\hat{m}_{t-1})+(1-\beta_{1t})(\hat{g}_t-g_{t-1})\|_{\hat{H}_t^{-1}}^2\notag\\
&\overset{(b)}{\leqslant} 2\eta^2\cdot\|(\hat{H}_t^{-1}-H_t^{-1})m_t\|_{\hat{H}_t}^2+4\eta^2\left(\beta_{1t}^2\|m_t-\hat{m}_{t-1}\|_{\hat{H}_t^{-1}}^2+(1-\beta_{1t})^2\|\hat{g}_t-g_{t-1}\|_{\hat{H}_t^{-1}}^2\right)\notag\\
&\overset{(c)}{\leqslant}2\eta^2\cdot\|(\hat{H}_t^{-1}-H_t^{-1})m_t\|_{\hat{H}_t}^2+\frac{16\eta^2\beta_{1t}^2G^2}{\delta}+4\eta^2\|\hat{g}_t-g_{t-1}\|_{\hat{H}_t^{-1}}^2\notag\\
&\overset{(d)}{=} 2\eta^2\cdot\|(\hat{H}_t^{-1}-H_t^{-1})m_t\|_{\hat{H}_t}^2+\frac{16\eta^2G^2\beta_{1t}^2}{\delta}+4\eta^2\|V(\hat{z}_t)-V(z_{t-1})+\hat{\varepsilon}_{t}-\varepsilon_{t-1}\|_{\hat{H}_t^{-1}}^2\notag\\
&\overset{(e)}{\leqslant} 2\eta^2\cdot\|(\hat{H}_t^{-1}-H_t^{-1})m_t\|_{\hat{H}_t}^2+\frac{16\eta^2G^2\beta_{1t}^2}{\delta}+12\eta^2\cdot\|V(\hat{z}_t)-V(z_{t-1})\|_{\hat{H}_t^{-1}}^2\notag\\
&~~~~+12\eta^2\cdot\left(\|\hat{\varepsilon}_{t}\|_{\hat{H}_t^{-1}}^2+\|\varepsilon_{t-1}\|_{\hat{H}_t^{-1}}^2\right)\notag\\
&\overset{(f)}{\leqslant}
2\eta^2\cdot\|(\hat{H}_t^{-1}-H_t^{-1})m_t\|_{\hat{H}_t}^2+\frac{16\eta^2G^2\beta_{1t}^2}{\delta}+\frac{12\eta^2L^2}{\delta^2}\cdot\|\hat{z}_t-z_{t-1}\|_{\hat{H}_t}^2\notag\\
&~~~~+12\eta^2\cdot\left(\|\hat{\varepsilon}_{t}\|_{\hat{H}_t^{-1}}^2+\|\varepsilon_{t-1}\|_{H_t^{-1}}^2\right)\notag
\end{align}    
Here, $(a)$ holds because $m_t = \beta_{1t}\hat{m}_{t-1}+(1-\beta_{1t})g_{t-1}, \hat{m}_t=\beta_{1t}m_t+(1-\beta_{1t})\hat{g}_t$, and then:
\[\hat{m}_t-m_t=\beta_{1t}(m_t-\hat{m}_{t-1})+(1-\beta_{1t})(\hat{g}_t-g_{t-1}).\]
$(b)$ holds because $\|a+b\|_{C}^2\leqslant (\|a\|_C+\|b\|_C)^2\leqslant 2(\|a\|_C^2+\|b\|_C^2)$ and $(e)$ holds because of the similar reason: $\|x+y+z\|_C^2\leqslant(\|x\|_C+\|y\|_C+\|z\|_C)^2\leqslant 3(\|x\|_C^2+\|y\|_C^2+\|z\|_C^2)$. $(c)$ holds because $(1-\beta_{1t})^2<1$ and
\[\|m_t-\hat{m}_{t-1}\|_{\hat{H}_t^{-1}}^2\leqslant \frac{\|m_t-\hat{m}_{t-1}\|^2}{\delta}\leqslant \frac{4G^2}{\delta}.\]
$(d)$ holds because $\hat{g}_t=V(\hat{z}_t)+\hat{\varepsilon}_t$ and $g_{t-1}=V(z_{t-1})+\varepsilon_{t-1}$. $(f)$ holds because of the following fact:
\[\delta I = H_0\preceq \hat{H}_0\preceq H_1\preceq \hat{H}_1\preceq\ldots\preceq \hat{H}_N\preceq \ldots,\]
or equivalently:
\[\frac{1}{\delta}I\succeq H_0^{-1}\succeq \hat{H}_0^{-1}\succeq H_1^{-1}\succeq \hat{H}_1^{-1}\succeq\ldots\succeq \hat{H}_N^{-1}\succeq \ldots,\]
which leads to $\|\varepsilon_{t-1}\|_{\hat{H}_t^{-1}}^2\leqslant \|\varepsilon_{t-1}\|_{H_t^{-1}}^2$. Also, $V(\cdot)$ is $L$-Lipschitz continuous and $\delta I \preceq \hat{H}_t$, so:
\[\|V(\hat{z}_t)-V(z_{t-1})\|_{\hat{H}_t^{-1}}^2\leqslant \frac{L^2}{\delta^2}\|\hat{z}_t-z_{t-1}\|_{\hat{H}_t}^2.\]
\end{proof}
Since our learning rate $\eta\leqslant \frac{\delta}{5L}$, we have $\frac{12\eta^2L^2}{\delta^2}\leqslant \frac{1}{2}$. Now, we can combine Lemma \ref{lemma-amsgrad-2} and Lemma \ref{lemma-amsgrad-3}:
\begin{equation*}
\begin{aligned}
\|z_t-z^*\|_{\hat{H}_t}^2 &\leqslant\|z_{t-1}-z^*\|_{\hat{H}_t}^2 -\|z_{t-1}-\hat{z}_t\|_{\hat{H}_t}^2+ 2\langle \eta\cdot\hat{\varepsilon}_t, \hat{z}_t-z^*\rangle + 8\eta\beta_{1t}GD \\
&~~~~+2\eta^2\cdot\|(\hat{H}_t^{-1}-H_t^{-1})m_t\|_{\hat{H}_t}^2+\frac{16\eta^2G^2\beta_{1t}^2}{\delta}+\frac{12\eta^2L^2}{\delta^2}\cdot\|\hat{z}_t-z_{t-1}\|_{\hat{H}_t}^2\\
&~~~~+12\eta^2\cdot\left(\|\hat{\varepsilon}_{t}\|_{\hat{H}_t^{-1}}^2+\|\varepsilon_{t-1}\|_{H_t^{-1}}^2\right),
\end{aligned}    
\end{equation*}
Since $\frac{12\eta^2L^2}{\delta^2}\leqslant \frac{1}{2}$, therefore:
\begin{equation*}
\begin{aligned}
\|z_t-z^*\|_{\hat{H}_t}^2 &\leqslant\|z_{t-1}-z^*\|_{\hat{H}_t}^2 -\frac{1}{2}\|z_{t-1}-\hat{z}_t\|_{\hat{H}_t}^2+ 2\langle \eta\cdot\hat{\varepsilon}_t, \hat{z}_t-z^*\rangle + 8\eta\beta_{1t}GD+\frac{16\eta^2G^2\beta_{1t}^2}{\delta} \\
&~~~~+2\eta^2\cdot\|(\hat{H}_t^{-1}-H_t^{-1})m_t\|_{\hat{H}_t}^2+12\eta^2\cdot\left(\|\hat{\varepsilon}_{t}\|_{\hat{H}_t^{-1}}^2+\|\varepsilon_{t-1}\|_{H_t^{-1}}^2\right).
\end{aligned}    
\end{equation*}
After taking expectation and summation over $t=1,2,\ldots,N$, we obtain that:
\begin{equation}\label{eqn-amsgrad-2}
\begin{aligned}
&\frac{1}{2}\sum_{t=1}^{N}\mathbb{E}\|z_{t-1}-\hat{z}_t\|_{\hat{H}_t}^2 \leqslant \sum_{t=1}^{N}\mathbb{E}\left[\|z_{t-1}-z^*\|_{\hat{H}_t}^2-\|z_t-z^*\|_{\hat{H}_t}^2\right]+\sum_{t=1}^{N}\left[8\eta\beta_{1t}GD+\frac{16\eta^2G^2\beta_{1t}^2}{\delta}\right]\\
&~~~~~~~+2\eta^2\cdot\sum_{t=1}^{N}\mathbb{E}\|(\hat{H}_t^{-1}-H_t^{-1})m_t\|_{\hat{H}_t}^2+12\eta^2\cdot\sum_{t=1}^{N}\mathbb{E}\left[\|\hat{\varepsilon}_{t}\|_{\hat{H}_t^{-1}}^2+\|\varepsilon_{t-1}\|_{H_t^{-1}}^2\right].
\end{aligned}
\end{equation}
Since $\|z_{t-1}-\hat{z}_t\|_{\hat{H}_t}^2 =\|\eta\cdot H_t^{-1}m_t\|_{\hat{H}_t}^2$ and $V(z_{t-1})=g_{t-1}-\varepsilon_{t-1}=m_t-\beta_{1t}(\hat{m}_{t-1}-g_{t-1})-\varepsilon_{t-1}$, we know that:
\begin{equation}\label{eqn-amsgrad-3}
\begin{aligned}
\|V(z_{t-1})\|_{H_t^{-1}}^2 &= \|m_t-\beta_{1t}(\hat{m}_{t-1}-g_{t-1})-\varepsilon_{t-1}\|_{H_t^{-1}}^2\\
&\leqslant 3\left(\|m_t\|_{H_t^{-1}}^2+\|\beta_{1t}(\hat{m}_{t-1}-g_{t-1})\|_{H_t^{-1}}^2+\|\varepsilon_{t-1}\|_{H_t^{-1}}^2\right)\\
&=  3\left(\|\beta_{1t}(\hat{m}_{t-1}-g_{t-1})\|_{H_t^{-1}}^2+\|\varepsilon_{t-1}\|_{H_t^{-1}}^2\right)+3\cdot\|H_t^{-1}m_t\|_{\hat{H}_t}^2\\
&\leqslant \frac{12G^2\beta_{1t}^2}{\delta}+3\|\varepsilon_{t-1}\|_{H_t^{-1}}^2 + \frac{3}{\eta^2}\|z_{t-1}-\hat{z}_t\|_{\hat{H}_t}^2
\end{aligned}    
\end{equation}
After combining Equation (\ref{eqn-amsgrad-2}) and Equation (\ref{eqn-amsgrad-3}), it holds that:
\begin{align}\label{eqn-amsgrad-4}
\sum_{t=1}^{N}\mathbb{E}\|V(z_{t-1})\|_{H_t^{-1}}^2 &\leqslant\frac{6}{\eta^2} \sum_{t=1}^{N}\mathbb{E}\left[\|z_{t-1}-z^*\|_{\hat{H}_t}^2-\|z_t-z^*\|_{\hat{H}_t}^2\right]+\sum_{t=1}^{N}\left[\frac{48\beta_{1t}GD}{\eta}+\frac{108\eta^2G^2\beta_{1t}^2}{\delta}\right]\notag\\
&+12\sum_{t=1}^{N}\mathbb{E}\|(\hat{H}_t^{-1}-H_t^{-1})m_t\|_{\hat{H}_t}^2+75\sum_{t=1}^{N}\mathbb{E}\left[\|\hat{\varepsilon}_{t}\|_{\hat{H}_t^{-1}}^2+\|\varepsilon_{t-1}\|_{H_t^{-1}}^2\right].
\end{align}
In the following steps, we will upper bound the four terms above on the right side one by one, and we start from the first term. 
\begin{Lemma}\label{lemma-amsgrad-4}
\[\sum_{t=1}^{N}\left[\|z_{t-1}-z^*\|_{\hat{H}_t}^2-\|z_t-z^*\|_{\hat{H}_t}^2\right]\leqslant dD^2\cdot (\delta+G),\]
which is a constant. 
\end{Lemma}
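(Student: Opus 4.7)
The plan is to rewrite the sum so that the changing metric $\hat{H}_t$ is handled by Abel-type summation (index shift), exploiting that the max operation in the velocity updates forces monotonicity $\hat{H}_1 \preceq \hat{H}_2 \preceq \cdots$ in the PSD order.

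First, I would shift the index in the first sum: letting $s = t-1$, $\sum_{t=1}^{N}\|z_{t-1}-z^*\|_{\hat{H}_t}^2 = \sum_{s=0}^{N-1}\|z_s-z^*\|_{\hat{H}_{s+1}}^2$. Matching this with $\sum_{t=1}^{N}\|z_t-z^*\|_{\hat{H}_t}^2$ term by term gives the telescoping identity
\[
\sum_{t=1}^{N}\left[\|z_{t-1}-z^*\|_{\hat{H}_t}^2-\|z_t-z^*\|_{\hat{H}_t}^2\right] = \|z_0-z^*\|_{\hat{H}_1}^2 - \|z_N-z^*\|_{\hat{H}_N}^2 + \sum_{t=1}^{N-1}\|z_t-z^*\|_{\hat{H}_{t+1}-\hat{H}_t}^2.
\]
The middle term is non-negative and can be dropped. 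Monotonicity of $\hat{H}_t$ (which follows because $\hat{v}_t, v_t$ are defined via the running max in Algorithm~\ref{alg-amsgradeg}) is precisely what guarantees that $\hat{H}_{t+1}-\hat{H}_t$ is a PSD diagonal matrix, so each summand on the right is a legitimate squared seminorm.

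Next I would bound the two surviving pieces. For the boundary term, $z_0=0$ by initialization, so $\|z_0-z^*\|_{\hat{H}_1}^2 \le \|\hat{H}_1\|_{\mathrm{op}}\,\|z^*\|_2^2$. Lemma~\ref{lemma-amsgrad-1} gives $\sqrt{\hat{v}_{1,i}} \le G$ for every coordinate, hence $\|\hat{H}_1\|_{\mathrm{op}} \le \delta+G$, and Assumption~\ref{assumption-1}(4) supplies $\|z^*\|_2 \le D$; this yields $\|z_0-z^*\|_{\hat{H}_1}^2 \le D^2(\delta+G)$. For the cumulative-metric-change term, I would expand the diagonal quadratic form coordinate-wise:
\[
\sum_{t=1}^{N-1}\|z_t-z^*\|_{\hat{H}_{t+1}-\hat{H}_t}^2 = \sum_{i=1}^d \sum_{t=1}^{N-1}(z_{t,i}-z^*_i)^2 \left(\sqrt{\hat{v}_{t+1,i}}-\sqrt{\hat{v}_{t,i}}\right),
\]
bound $(z_{t,i}-z^*_i)^2 \le \|z_t-z^*\|_\infty^2 \le D^2$ (treating $D$ as a uniform bound on trajectory coordinates, as in Assumption~\ref{assumption-1}(4)), and then telescope coordinate-wise: $\sum_{t=1}^{N-1}(\sqrt{\hat{v}_{t+1,i}}-\sqrt{\hat{v}_{t,i}}) = \sqrt{\hat{v}_{N,i}} - \sqrt{\hat{v}_{1,i}} \le G$ by Lemma~\ref{lemma-amsgrad-1}. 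Summing over the $d$ coordinates contributes at most $dD^2 G$, and combining with the boundary bound gives the claimed $dD^2(\delta+G)$.

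The only subtle step is the index shift plus PSD-monotonicity trick; everything else is direct application of the uniform bounds on $\sqrt{\hat v_{t,i}}$ from Lemma~\ref{lemma-amsgrad-1} and the trajectory bound from Assumption~\ref{assumption-1}(4). The monotonicity $\hat H_t \preceq \hat H_{t+1}$ is essential because without it the remainder terms $\|z_t-z^*\|_{\hat H_{t+1}-\hat H_t}^2$ would not even be meaningful as squared seminorms, and this is the precise reason the analysis works for AMSGrad-style updates but does not carry over to plain Adam.
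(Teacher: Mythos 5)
Your proof takes essentially the same route as the paper: shift the index to expose the telescoping structure, use the PSD monotonicity $\hat H_t\preceq\hat H_{t+1}$ (guaranteed by the running max in the velocity updates) to treat the increments as squared seminorms, drop the non-positive boundary term $-\|z_N-z^*\|_{\hat H_N}^2$, and control everything by the uniform bounds $\sqrt{\hat v_{t,i}}\leqslant G$ and $\|z_k\|_2,\|z^*\|_2\leqslant D$. The one place you deviate is in the final recombination: bounding the boundary term by $\|\hat H_1\|_{\mathrm{op}}\|z^*\|_2^2\leqslant (\delta+G)D^2$ and the increment sum separately by $dD^2G$ yields a total of $(\delta+G)D^2+dD^2G$, which is \emph{not} $\leqslant dD^2(\delta+G)$ in all parameter regimes (e.g.\ small $d$ with $G\gg d\delta$), though it is within a factor of $2$ of the claim. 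The paper avoids this by bounding the boundary term coordinate-wise as $D^2\cdot\mathrm{tr}(\hat H_\cdot)$ so that it telescopes together with the increments $D^2(\mathrm{tr}(\hat H_{t+1})-\mathrm{tr}(\hat H_t))$ into the single quantity $D^2\cdot\mathrm{tr}(\hat H_N)\leqslant dD^2(\delta+G)$; replacing your operator-norm estimate with this trace estimate is a one-line fix that recovers the stated constant exactly.
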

\begin{proof}[Proof of Lemma \ref{lemma-amsgrad-4}]
Notice that 
\begin{equation*}
\begin{aligned}
&~~\sum_{t=1}^{N}\left[\|z_{t-1}-z^*\|_{\hat{H}_t}^2-\|z_t-z^*\|_{\hat{H}_t}^2\right] \leqslant \|z_0-z^*\|_{\hat{H}_0}^2+\sum_{t=1}^{N-1}\left(\|z_t-z^*\|_{\hat{H}_{t+1}}^2-\|z_t-z^*\|_{\hat{H}_t}^2\right)\\
&= \|z_0-z^*\|_{\hat{H}_0}^2+\sum_{t=1}^{N-1}\left[(z_t-z^*)^{\top}(\hat{H}_{t+1}-\hat{H}_t)\cdot(z_t-z^*)\right]\\
&\leqslant D^2\cdot\mathrm{tr}(\hat{H}_0)+\sum_{t=1}^{N-1}D^2\cdot(\mathrm{tr}(\hat{H}_{t+1})-\mathrm{tr}(\hat{H}_t)) = D^2\cdot\mathrm{tr}(\hat{H}_N) \leqslant dD^2\cdot (\delta+G),
\end{aligned}
\end{equation*}
which comes to our conclusion.
\end{proof}
For the second term, it's easy to see that:
\begin{equation}
\label{eqn-amsgrad-5}
\sum_{t=1}^{N}\left[\frac{48\beta_{1t}GD}{\eta}+\frac{108\eta^2G^2\beta_{1t}^2}{\delta}\right]=\frac{48GD}{\eta}\sum_{t=1}^{N}\beta_{1t}+\frac{108\eta^2G^2}{\delta}\sum_{t=1}^{N}\beta_{1t}^2,
\end{equation}
Next, we analyze the third term. 
\begin{Lemma}\label{lemma-amsgrad-5}
\[\sum_{t=1}^{N}\|(H_t^{-1}-\hat{H}_t^{-1})m_t\|_{\hat{H}_t}^2\leqslant\frac{dG^2(\delta+G)}{\delta^2}.\]
\end{Lemma}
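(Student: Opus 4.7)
The plan is to exploit the fact that both $H_t$ and $\hat{H}_t$ are diagonal matrices with positive entries, so the whole expression decomposes coordinate-wise, and to then combine three ingredients: (i) the uniform lower bound $H_t \succeq \delta I$; (ii) the uniform upper bound $\|m_t\|_2 \leqslant G$ (coordinate-wise $m_{t,i}^2\leqslant G^2$) from Lemma \ref{lemma-amsgrad-1}; (iii) the monotonicity chain
\[
\delta I \preceq H_1 \preceq \hat{H}_1 \preceq H_2 \preceq \hat{H}_2 \preceq \cdots \preceq \hat{H}_N,
\]
which the AMSGrad "max" update guarantees, so that a telescoping argument works.

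First I would fix $t$ and write the diagonal entries of $H_t, \hat{H}_t$ as $a_i := H_{t,i}$ and $b_i := \hat{H}_{t,i}$ with $b_i \geqslant a_i \geqslant \delta$. Expanding the weighted norm gives
\[
\|(H_t^{-1}-\hat{H}_t^{-1})m_t\|_{\hat{H}_t}^2 \;=\; \sum_{i=1}^d b_i\bigl(a_i^{-1}-b_i^{-1}\bigr)^2 m_{t,i}^2 \;=\; \sum_{i=1}^d \frac{(b_i-a_i)^2}{a_i^2 b_i}\,m_{t,i}^2.
\]
Then I would use the inequality $(b_i-a_i)^2 \leqslant (b_i-a_i)\cdot b_i$ (valid since $0\leqslant b_i-a_i\leqslant b_i$), combined with $a_i\geqslant \delta$ and $m_{t,i}^2\leqslant G^2$, to collapse each summand to $\tfrac{G^2}{\delta^2}(b_i-a_i)$. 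This yields the clean per-step bound
\[
\|(H_t^{-1}-\hat{H}_t^{-1})m_t\|_{\hat{H}_t}^2 \;\leqslant\; \frac{G^2}{\delta^2}\,\mathrm{tr}(\hat{H}_t - H_t).
\]

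Finally, I would sum over $t$ and telescope. Using the chain above, non-negative terms $\mathrm{tr}(H_{t+1}-\hat{H}_t)\geqslant 0$ may be inserted between consecutive summands without decreasing the total, giving
\[
\sum_{t=1}^N \mathrm{tr}(\hat{H}_t - H_t) \;\leqslant\; \mathrm{tr}(\hat{H}_N - H_1) \;\leqslant\; \mathrm{tr}(\hat{H}_N) \;\leqslant\; d(\delta+G),
\]
where the last step invokes $\|\hat{v}_t\|_\infty \leqslant G^2$ from Lemma \ref{lemma-amsgrad-1}. Multiplying by $G^2/\delta^2$ delivers the claim.

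There is no real obstacle here; the only delicate point is the inequality $(b-a)^2\leqslant(b-a)b$, which is the reason the bound degrades only linearly in $\hat{H}_t-H_t$ rather than quadratically and thus admits telescoping. Without this step one would end up with a quadratic excess that cannot be controlled by $\mathrm{tr}(\hat{H}_N)$ alone.
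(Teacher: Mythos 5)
Your proposal is correct and follows essentially the same route as the paper: the per-coordinate inequality $(b-a)^2\leqslant (b-a)b$ (equivalently, the paper's $y(1/x-1/y)^2\leqslant (y-x)/\delta^2$ for $\delta\leqslant x<y$) reduces each summand to $\tfrac{G^2}{\delta^2}\,\mathrm{tr}(\hat{H}_t-H_t)$, and the telescoping over the monotone chain of preconditioners together with $\mathrm{tr}(\hat{H}_N)\leqslant d(\delta+G)$ finishes the argument exactly as in the paper's proof.
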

\begin{proof}[Proof of Lemma \ref{lemma-amsgrad-5}]
For any $\delta \leqslant x<y$, we notice that:
\begin{equation}\label{eqn-lemma-amsgrad-1}
y\left(\frac{1}{x}-\frac{1}{y}\right)^2=\frac{(y-x)^2}{x^2y}< \frac{y-x}{x^2}\leqslant \frac{y-x}{\delta^2}.
\end{equation}
Therefore, we have:
\begin{equation*}
\begin{aligned}
&~~\sum_{t=1}^{N}\|(H_t^{-1}-\hat{H}_t^{-1})m_t\|_{\hat{H}_t}^2 \leqslant \sum_{t=1}^{N} m_t^{\top}(H_t^{-1}-\hat{H}_t^{-1})\hat{H}_t(H_t^{-1}-\hat{H}_t^{-1})m_t\\
&\leqslant \sum_{t=1}^{N}G^2\cdot\mathrm{tr}\left((H_t^{-1}-\hat{H}_t^{-1})\hat{H}_t(H_t^{-1}-\hat{H}_t^{-1})\right)\\
&\overset{(a)}{\leqslant}\sum_{t=1}^{N} \frac{G^2}{\delta^2}\cdot[\mathrm{tr}(\hat{H}_t)-\mathrm{tr}(H_t)]\\
&\leqslant \frac{G^2}{\delta^2}\cdot\mathrm{tr}(\hat{H}_N) < \frac{G^2}{\delta^2}\cdot d(\delta+G) = \frac{dG^2(\delta+G)}{\delta^2}.
\end{aligned}
\end{equation*}
Here, $(a)$ holds because of Equation (\ref{eqn-lemma-amsgrad-1}).
\end{proof}
Finally, we come to the noise term $\sum_{t=1}^{N}\mathbb{E}\left[\|\hat{\varepsilon}_{t}\|_{\hat{H}_t^{-1}}^2+\|\varepsilon_{t-1}\|_{H_t^{-1}}^2\right]$, which is closely related to our batch sizes $M_t$. Obviously, we have:
\[\mathbb{E}\|\hat{\varepsilon}_t\|_{\hat{H}_t^{-1}}^2\leqslant \frac{1}{\delta}\cdot\mathbb{E}\|\hat{\varepsilon}_t\|_2^2\leqslant \frac{1}{\delta}\cdot\frac{\sigma^2}{M_t}.\] Similarly, 
\[\mathbb{E}\|\varepsilon_{t-1}\|_{H_t^{-1}}^2\leqslant \frac{1}{\delta}\cdot\frac{\sigma^2}{M_t}\]
Therefore, we can upper bound the expectation of the noise term as:
\begin{equation}
\label{eqn-amsgrad-6}
\sum_{t=1}^{N}\mathbb{E}\left[\|\hat{\varepsilon}_{t}\|_{\hat{H}_t^{-1}}^2+\|\varepsilon_{t-1}\|_{H_t^{-1}}^2\right]\leqslant \frac{2\sigma^2}{\delta}\sum_{t=1}^{N}\frac{1}{M_t}.   
\end{equation}
Finally, after we combine Equation (\ref{eqn-amsgrad-4}) with Equation (\ref{eqn-amsgrad-5}), Equation (\ref{eqn-amsgrad-6}) and Lemma \ref{lemma-amsgrad-4}, Lemma \ref{lemma-amsgrad-5}, we obtain that:
\begin{equation*}
\begin{aligned}
\sum_{t=1}^{N}\mathbb{E}\|V(z_{t-1})\|_{H_t^{-1}}^2 &\leqslant\frac{6dD^2(\delta+G)}{\eta^2}+\frac{12dG^2(\delta+G)}{\delta^2}+\frac{150\sigma^2}{\delta}\sum_{t=1}^{N}\frac{1}{M_t}\\
&~~~~+\frac{48GD}{\eta}\sum_{t=1}^{N}\beta_{1t}+\frac{108\eta^2G^2}{\delta}\sum_{t=1}^{N}\beta_{1t}^2.
\end{aligned}    
\end{equation*}
Since for $\forall t$, $\|V(z_{t-1})\|_{H_t^{-1}}\geqslant \frac{1}{\delta+G}\|V(z_{t-1})\|_2$, therefore:
\begin{equation}
\label{eqn-amsgrad-main}
\begin{aligned}
\sum_{t=1}^{N}\mathbb{E}\|V(z_{t-1})\|^2 &\leqslant\frac{6dD^2(\delta+G)^2}{\eta^2}+\frac{12dG^2(\delta+G)^2}{\delta^2}+\frac{150\sigma^2(\delta+G)}{\delta}\sum_{t=1}^{N}\frac{1}{M_t}\\
&~~~~+\frac{48GD(\delta+G)}{\eta}\sum_{t=1}^{N}\beta_{1t}+\frac{108\eta^2G^2(\delta+G)}{\delta}\sum_{t=1}^{N}\beta_{1t}^2,
\end{aligned}
\end{equation}
which comes to our conclusion. 

\section{Proof for the Convergence of AMSGrad-EG-DRD}
In the $t$-th iteration of AMSGrad-EG-DRD, our update is as follows:
\begin{equation}
\begin{aligned}
m_t &= \beta_{1t}\hat{m}_{t-1}+(1-\beta_{1t})g_{t-1},~v_t = \max(\beta_2 \hat{v}_{t-1}+(1-\beta_2)g_{t-1}^2, \hat{v}_{t-1})\\
H_t &= \delta I + \mathrm{Diag}(\sqrt{v_t}),~\hat{z}_t := (\hat{x}_t,\hat{y}_t)= (x_{t-1}+\eta\cdot (H_t^x)^{-1}m_t^x, y_{t-1}+\frac{\eta}{\sqrt{t}}\cdot (H_t^y)^{-1}m_t^y)\\
\hat{m}_t &= \beta_{1t}m_t+(1-\beta_{1t})\hat{g}_t,~\hat{v}_t = \max(\beta_2 v_t+(1-\beta_2)\hat{g}_t^2, v_t)\\
\hat{H}_t &= \delta I + \mathrm{Diag}(\sqrt{\hat{v}_t}),~z_t :=(x_t,y_t)=(x_{t-1}+\eta\cdot (\hat{H}_t^x)^{-1}\hat{m}_t^x, y_{t-1}+\frac{\eta}{\sqrt{t}}\cdot (\hat{H}_t^y)^{-1}\hat{m}_t^y).
\end{aligned}
\label{eqn-amsgrad-drd}
\end{equation}
Most parts of this convergence proof are similar to the convergence proof of AMSGrad-EG. Lemma \ref{lemma-amsgrad-1} still holds. 
\begin{Lemma}\label{lemma-amsgrad-drd-1}
\[\|x_t-x^*\|_{\hat{H}^x_t}^2\leqslant\|x_{t-1}-x^*\|_{\hat{H}^x_t}^2 -\|x_{t-1}-\hat{x}_t\|_{\hat{H}^x_t}^2+\|\hat{x}_t-x_t\|_{\hat{H}^x_t}^2+ 2\langle \eta\cdot\hat{\varepsilon}^x_t, \hat{x}_t-x^*\rangle + 8\eta\beta_{1t}GD.\]
Here, $\hat{\varepsilon}^x_t=\hat{g}^x_t-V_x(\hat{z}_t)=\frac{1}{M_t}\sum_{i=1}^{M_t}V_x(\hat{z}_t;\hat{\xi}_i)-V_x(\hat{z}_t)$.
\end{Lemma}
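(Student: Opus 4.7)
The plan is to mirror the proof of Lemma \ref{lemma-amsgrad-2} verbatim, but restricted to the $x$-block. The key observation is that in Algorithm \ref{alg-amsgrad-DRD} the update for $x$ still has the form $x_t = x_{t-1} + \eta(\hat{H}_t^x)^{-1}\hat{m}_t^x$ with the constant step size $\eta$ (the $1/\sqrt{t}$ decay only appears on the $y$-block). So every algebraic identity used in Lemma \ref{lemma-amsgrad-2} carries over by replacing $z, \hat{z}, \hat{H}, \hat{m}, \hat{\varepsilon}$ with $x, \hat{x}, \hat{H}^x, \hat{m}^x, \hat{\varepsilon}^x$.

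First, I would expand $\|x_t-x^*\|_{\hat H_t^x}^2$ using the update rule, adding and subtracting $\hat x_t$ and applying the three-point inner-product expansion exactly as in the chain of equalities in Lemma \ref{lemma-amsgrad-2}. This yields the identity
\[
\|x_t-x^*\|_{\hat H_t^x}^2 = \|x_{t-1}-x^*\|_{\hat H_t^x}^2 - \|x_{t-1}-\hat x_t\|_{\hat H_t^x}^2 + \|\hat x_t - x_t\|_{\hat H_t^x}^2 + 2\eta\langle \hat m_t^x, \hat x_t - x^*\rangle.
\]

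Second, I would bound $\langle \hat m_t^x, \hat x_t - x^*\rangle$ by splitting $\hat m_t^x = \beta_{1t} m_t^x + (1-\beta_{1t})\hat g_t^x$ and writing $\hat g_t^x = V_x(\hat z_t) + \hat\varepsilon_t^x$. The deterministic part $(1-\beta_{1t})\langle V_x(\hat z_t), \hat x_t - x^*\rangle$ is non-positive by Assumption \ref{assumption-3} applied at $\hat z_t = (\hat x_t, \hat y_t)$ with solution $z^* = (x^*, y^*)$, which is precisely where the one-sided MVI enters (and it is all we need: no assumption on the $y$-block is used). The noise part contributes $\langle \hat\varepsilon_t^x, \hat x_t - x^*\rangle$ after discarding $1-\beta_{1t}\leq 1$. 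The remaining momentum-correction term $\beta_{1t}\langle m_t^x - \hat g_t^x, \hat x_t - x^*\rangle$ is bounded by Cauchy--Schwarz using $\|m_t^x - \hat g_t^x\|_2 \leq \|m_t\|_2 + \|\hat g_t\|_2 \leq 2G$ from Lemma \ref{lemma-amsgrad-1} and $\|\hat x_t - x^*\|_2 \leq 2D$ from Assumption \ref{assumption-1}(4), giving a contribution of $4\beta_{1t}GD$ to $\langle \hat m_t^x, \hat x_t - x^*\rangle$.

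Combining these estimates with the identity and multiplying by the outer $2\eta$ produces the factor $8\eta\beta_{1t}GD$ in the claim. I do not expect any serious obstacle: the proof is a direct specialization of Lemma \ref{lemma-amsgrad-2}, and the crucial new ingredient — the one-sided MVI — slots in at exactly the step where the original proof used full MVI. The asymmetric $\eta/\sqrt{t}$ step size on $y$ is irrelevant at this stage because the argument is entirely confined to the $x$-coordinate; its effect will only surface later, when the analog of Lemma \ref{lemma-amsgrad-3} must control $\|\hat x_t - x_t\|_{\hat H_t^x}^2$ and the coupling to $y$ through the Lipschitz constant of $V$ appears.
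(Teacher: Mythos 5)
Your proposal is correct and is exactly the route the paper takes: the paper proves this lemma simply by remarking that it follows from the same technique as Lemma \ref{lemma-amsgrad-2}, restricted to the $x$-block (where the step size is still the constant $\eta$), with the one-sided MVI condition (Assumption \ref{assumption-3}) replacing the full MVI at the single step where $\langle V_x(\hat z_t),\hat x_t-x^*\rangle\leqslant 0$ is needed, and with Cauchy--Schwarz giving the $8\eta\beta_{1t}GD$ term. One small bookkeeping note: to land on the stated coefficient $1$ in front of $\langle\hat\varepsilon^x_t,\hat x_t-x^*\rangle$ you should use the decomposition $\hat m_t^x=\hat g_t^x+\beta_{1t}(m_t^x-\hat g_t^x)$ throughout (which the momentum-correction term you write down already presupposes), rather than keeping the $(1-\beta_{1t})$ factor on $\hat g_t^x$ and ``discarding'' it, since the sign of the noise inner product is unknown.
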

We can use the same technique of Lemma \ref{lemma-amsgrad-2} to prove it. Next, we obtain the next lemma.
\begin{Lemma}
\label{lemma-amsgrad-drd-2}
\begin{equation*}
\begin{aligned}
\|\hat{x}_t-x_t\|_{\hat{H}^x_t}^2 &\leqslant
2\eta^2\cdot\|((\hat{H}^x_t)^{-1}-(H^x_t)^{-1})m_t^x\|_{\hat{H}^x_t}^2+\frac{16\eta^2G^2\beta_{1t}^2}{\delta}+\frac{12\eta^2L^2}{\delta^2}\cdot\|\hat{z}_t-z_{t-1}\|_{\hat{H}_t}^2\\
&~~~~+12\eta^2\cdot\left(\|\hat{\varepsilon}^x_{t}\|_{(\hat{H}^x_t)^{-1}}^2+\|\varepsilon^x_{t-1}\|_{(H^x_t)^{-1}}^2\right). 
\end{aligned}    
\end{equation*}
Here, $\varepsilon_{t-1}=g_{t-1}-V(z_{t-1})$ and $\hat{\varepsilon}_t=\hat{g}_t-V(\hat{z}_t)$.
\end{Lemma}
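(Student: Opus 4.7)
The plan is to mirror the proof of Lemma \ref{lemma-amsgrad-3} block-wise, restricting every operation to the $x$-coordinates. The crucial observation is that even though AMSGrad-EG-DRD uses different step sizes on $x$ and $y$ (namely $\eta$ versus $\eta/\sqrt{t}$), the $x$-update satisfies $x_t-\hat{x}_t=\eta\cdot((\hat{H}_t^x)^{-1}\hat{m}_t^x-(H_t^x)^{-1}m_t^x)$, which has exactly the same algebraic form as the full-$z$ case in \eqref{eqn-amsgrad}. So the starting identity $\|\hat{x}_t-x_t\|_{\hat{H}_t^x}^2=\eta^2\|(\hat{H}_t^x)^{-1}\hat{m}_t^x-(H_t^x)^{-1}m_t^x\|_{\hat{H}_t^x}^2$ is immediate and I can apply the familiar split $(\hat{H}_t^x)^{-1}(\hat{m}_t^x-m_t^x)+((\hat{H}_t^x)^{-1}-(H_t^x)^{-1})m_t^x$ followed by the $\|a+b\|_C^2\le 2\|a\|_C^2+2\|b\|_C^2$ inequality.

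Next I would expand $\hat{m}_t^x-m_t^x=\beta_{1t}(m_t^x-\hat{m}_{t-1}^x)+(1-\beta_{1t})(\hat{g}_t^x-g_{t-1}^x)$, again using $\|\cdot+\cdot\|^2\le 2\|\cdot\|^2+2\|\cdot\|^2$. For the momentum-difference summand, Lemma \ref{lemma-amsgrad-1} gives $\|m_t^x-\hat{m}_{t-1}^x\|_2\le 2G$, and combined with $(\hat{H}_t^x)^{-1}\preceq\delta^{-1}I$ this yields the constant term $16\eta^2G^2\beta_{1t}^2/\delta$. For the gradient-difference summand, I would decompose $\hat{g}_t^x-g_{t-1}^x=V_x(\hat{z}_t)-V_x(z_{t-1})+\hat{\varepsilon}_t^x-\varepsilon_{t-1}^x$ and triangle-inequality into three pieces, picking up the factor-of-three that produces the prefactor $12\eta^2$.

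The main technical step — and the place I expect the subtlety — is bounding $\|V_x(\hat{z}_t)-V_x(z_{t-1})\|_{(\hat{H}_t^x)^{-1}}^2$. A naive attempt would try to bound this by only an $x$-block quantity, but Assumption \ref{assumption-1}(1) only gives $L$-Lipschitz continuity of the joint operator $V$, so the correct bound is $\|V_x(\hat{z}_t)-V_x(z_{t-1})\|_2\le L\|\hat{z}_t-z_{t-1}\|_2$, which couples the $x$ and $y$ coordinates. This is exactly why the full mixed norm $\|\hat{z}_t-z_{t-1}\|_{\hat{H}_t}^2$ (not just $\|\hat{x}_t-x_{t-1}\|_{\hat{H}_t^x}^2$) legitimately appears on the right-hand side. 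Since $\hat{H}_t\succeq\delta I$ coordinate-wise regardless of the dual-rate splitting, I have $\|v\|_2^2\le\delta^{-1}\|v\|_{\hat{H}_t}^2$, which upgrades the Lipschitz inequality to $\|V_x(\hat{z}_t)-V_x(z_{t-1})\|_{(\hat{H}_t^x)^{-1}}^2\le (L^2/\delta^2)\|\hat{z}_t-z_{t-1}\|_{\hat{H}_t}^2$ and delivers the third term of the stated bound.

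Finally, using $\hat{H}_t\succeq H_t$ to replace $\|\varepsilon_{t-1}^x\|_{(\hat{H}_t^x)^{-1}}^2$ by the larger $\|\varepsilon_{t-1}^x\|_{(H_t^x)^{-1}}^2$ cleans up the noise term into the form demanded by the statement, after which combining all pieces and using $(1-\beta_{1t})^2\le 1$ yields the claimed inequality. Nothing in this argument exploits the $y$-update at all, so the $\eta/\sqrt{t}$ decay on the dual rate plays no role here; its effect will be felt only downstream when combining this lemma with the $x$-sided MVI condition.
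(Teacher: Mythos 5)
Your proposal is correct and is exactly the route the paper intends: the paper's own ``proof'' of this lemma is the single sentence that it follows by the same technique as Lemma \ref{lemma-amsgrad-3}, and your block-wise restriction to the $x$-coordinates carries that technique out faithfully, including the one genuinely delicate point --- that the joint $L$-Lipschitzness of $V$ forces the full mixed norm $\|\hat{z}_t-z_{t-1}\|_{\hat{H}_t}^2$ (rather than an $x$-only quantity) into the third term, and that the $\eta/\sqrt{t}$ dual rate is irrelevant at this stage.
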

We can prove it by using the same technique as Lemma \ref{lemma-amsgrad-3}. 
Since we have $\frac{12\eta^2L^2}{\delta^2}\leqslant \frac{1}{2}$. Now, we can combine Lemma \ref{lemma-amsgrad-drd-1} and Lemma \ref{lemma-amsgrad-drd-2}:
\begin{equation*}
\begin{aligned}
\|x_t-x^*\|_{\hat{H}^x_t}^2 &\leqslant\|x_{t-1}-x^*\|_{\hat{H}^x_t}^2 -\|x_{t-1}-\hat{x}_t\|_{\hat{H}^x_t}^2+ 2\langle \eta\cdot\hat{\varepsilon}^x_t, \hat{x}_t-x^*\rangle + 8\eta\beta_{1t}GD \\
&~~~~+2\eta^2\cdot\|((\hat{H}^x_t)^{-1}-(H_t^x)^{-1})m_t^x\|_{\hat{H}_t^x}^2+\frac{16\eta^2G^2\beta_{1t}^2}{\delta}+\frac{1}{2}\|\hat{z}_t-z_{t-1}\|_{\hat{H}_t}^2\\
&~~~~+12\eta^2\cdot\left(\|\hat{\varepsilon}^x_{t}\|_{(\hat{H}^x_t)^{-1}}^2+\|\varepsilon_{t-1}^x\|_{(H_t^x)^{-1}}^2\right),
\end{aligned}    
\end{equation*}
After taking expectation and summation over $t=1,2,\ldots,N$, we obtain that:
\begin{equation}\label{eqn-amsgrad-drd-2}
\begin{aligned}
&\frac{1}{2}\sum_{t=1}^{N}\mathbb{E}\|x_{t-1}-\hat{x}_t\|_{\hat{H}^x_t}^2 \leqslant \sum_{t=1}^{N}\mathbb{E}\left[\|x_{t-1}-x^*\|_{\hat{H}^x_t}^2-\|x_t-x^*\|_{\hat{H}^x_t}^2\right]+\sum_{t=1}^{N}\left[8\eta\beta_{1t}GD+\frac{16\eta^2G^2\beta_{1t}^2}{\delta}\right]\\
&~~~~~~~+2\eta^2\cdot\sum_{t=1}^{N}\mathbb{E}\|((\hat{H}^x_t)^{-1}-(H_t^x)^{-1})m_t^x\|_{\hat{H}^x_t}^2+12\eta^2\cdot\sum_{t=1}^{N}\mathbb{E}\left[\|\hat{\varepsilon}^x_{t}\|_{(\hat{H}_t^x)^{-1}}^2+\|\varepsilon_{t-1}^x\|_{(H_t^x)^{-1}}^2\right]\\
&~~~~~~~+\frac{1}{2}\sum_{t=1}^{N}\mathbb{E}\|y_{t-1}-\hat{y}_t\|_{\hat{H}^y_t}^2.
\end{aligned}
\end{equation}
Since $\|x_{t-1}-\hat{x}_t\|_{\hat{H}^x_t}^2 =\|\eta\cdot (H_t^x)^{-1}m_t^x\|_{\hat{H}^x_t}^2$ and $V_x(z_{t-1})=g^x_{t-1}-\varepsilon^x_{t-1}=m_t^x-\beta_{1t}(\hat{m}^x_{t-1}-g^x_{t-1})-\varepsilon^x_{t-1}$, we know that:
\begin{equation}\label{eqn-amsgrad-drd-3}
\begin{aligned}
\|V_x(z_{t-1})\|_{(H_t^x)^{-1}}^2 &= \|m_t^x-\beta_{1t}(\hat{m}^x_{t-1}-g^x_{t-1})-\varepsilon^x_{t-1}\|_{(H_t^x)^{-1}}^2\\
&\leqslant 3\left(\|m_t^x\|_{(H_t^x)^{-1}}^2+\|\beta_{1t}(\hat{m}^x_{t-1}-g^x_{t-1})\|_{(H_t^x)^{-1}}^2+\|\varepsilon^x_{t-1}\|_{(H_t^x)^{-1}}^2\right)\\
&=  3\left(\|\beta_{1t}(\hat{m}^x_{t-1}-g^x_{t-1})\|_{(H_t^x)^{-1}}^2+\|\varepsilon^x_{t-1}\|_{(H_t^x)^{-1}}^2\right)+3\cdot\|(H_t^x)^{-1}m_t^x\|_{\hat{H}^x_t}^2\\
&\leqslant \frac{12G^2\beta_{1t}^2}{\delta}+3\|\varepsilon^x_{t-1}\|_{(H_t^x)^{-1}}^2 + \frac{3}{\eta^2}\|x_{t-1}-\hat{x}_t\|_{\hat{H}^x_t}^2
\end{aligned}    
\end{equation}
After combining Equation (\ref{eqn-amsgrad-drd-2}) and Equation (\ref{eqn-amsgrad-drd-3}), it holds that:
\begin{align}\label{eqn-amsgrad-drd-4}
\sum_{t=1}^{N}\mathbb{E}\|V_x(z_{t-1})\|_{(H_t^x)^{-1}}^2 &\leqslant\frac{6}{\eta^2} \sum_{t=1}^{N}\mathbb{E}\left[\|x_{t-1}-x^*\|_{\hat{H}^x_t}^2-\|x_t-x^*\|_{\hat{H}^x_t}^2\right]+\sum_{t=1}^{N}\left[\frac{48\beta_{1t}GD}{\eta}+\frac{108\eta^2G^2\beta_{1t}^2}{\delta}\right]\notag\\
&+12\sum_{t=1}^{N}\mathbb{E}\|((\hat{H}_t^x)^{-1}-(H_t^x)^{-1})m_t^x\|_{\hat{H}^x_t}^2+75\sum_{t=1}^{N}\mathbb{E}\left[\|\hat{\varepsilon}^x_{t}\|_{(\hat{H}_t^x)^{-1}}^2+\|\varepsilon_{t-1}^x\|_{(H_t^x)^{-1}}^2\right]\notag\\
&+\frac{3}{\eta^2}\sum_{t=1}^{N}\mathbb{E}\|y_{t-1}-\hat{y}_t\|_{\hat{H}^y_t}^2.
\end{align}
In the following steps, we will upper bound the five terms above on the right side. Actually, the first four terms can be upper bounded by using the same technique in the convergence proof of AMSGrad-EG algorithm above. 
\begin{Lemma}\label{lemma-amsgrad-drd-3}
\[\sum_{t=1}^{N}\left[\|x_{t-1}-x^*\|_{\hat{H}^x_t}^2-\|x_t-x^*\|_{\hat{H}^x_t}^2\right]\leqslant dD^2\cdot (\delta+G),\]
which is a constant. 
\end{Lemma}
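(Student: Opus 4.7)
The plan is to reproduce the proof of Lemma \ref{lemma-amsgrad-4} almost verbatim, restricted to the $x$-coordinates. The dual rate decay in Algorithm \ref{alg-amsgrad-DRD} modifies only the step size applied to the $y$-update; the velocity sequences $v_k,\hat{v}_k$ (and hence the preconditioners $H_k,\hat{H}_k$) are computed exactly as in Algorithm \ref{alg-amsgradeg}. Because $\hat{H}_t = \mathrm{Diag}(\hat{H}_t^x,\hat{H}_t^y)$ is block-diagonal and the per-coordinate $\max$ in the velocity update preserves monotonicity on each coordinate, the sub-block $\hat{H}_t^x$ inherits the chain $\delta I \preceq \hat{H}_0^x \preceq H_1^x \preceq \hat{H}_1^x \preceq \cdots$ that is used in the earlier lemma.

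The key steps are: (1) reindex so that for each fixed $x_t$ we collect the difference between consecutive preconditioners, obtaining
\begin{equation*}
\sum_{t=1}^{N}\left[\|x_{t-1}-x^*\|_{\hat{H}^x_t}^2-\|x_t-x^*\|_{\hat{H}^x_t}^2\right]\leqslant \|x_0-x^*\|_{\hat{H}_0^x}^2+\sum_{t=1}^{N-1}(x_t-x^*)^{\top}(\hat{H}_{t+1}^x-\hat{H}_t^x)(x_t-x^*);
\end{equation*}
(2) invoke Assumption \ref{assumption-1}(4) to bound $\|x_t-x^*\|_2\leqslant D$ (following the same constant convention as Lemma \ref{lemma-amsgrad-4}) so that each quadratic form $(x_t-x^*)^\top A(x_t-x^*)$ with $A \succeq 0$ is controlled by $D^2 \cdot \mathrm{tr}(A)$; (3) telescope the trace differences to collapse the sum into $D^2\cdot\mathrm{tr}(\hat{H}_N^x)$; (4) apply Lemma \ref{lemma-amsgrad-1} to get $|\hat{v}_{N,i}|\leqslant G^2$, so $\hat{H}_N^x \preceq (\delta+G)I_{n_1}$, and therefore $\mathrm{tr}(\hat{H}_N^x)\leqslant n_1(\delta+G) \leqslant d(\delta+G)$, which yields the advertised bound.

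There is no real obstacle here. The only detail worth checking is that the dual rate decay does not disrupt the monotonicity of $\hat{H}_t^x$, but this is immediate since the $1/\sqrt{t}$ factor appears only in the $y$-step of the shadow and real updates, not in the computation of $v_k,\hat{v}_k$. Consequently, the proof is a line-by-line copy of Lemma \ref{lemma-amsgrad-4} with every occurrence of $z$ replaced by $x$ and every occurrence of $\hat{H}_t$ replaced by $\hat{H}_t^x$.
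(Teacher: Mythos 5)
Your proposal is correct and follows exactly the route the paper intends: the paper gives no separate proof for this lemma, stating only that it follows by the same technique as Lemma \ref{lemma-amsgrad-4}, and your argument is precisely that technique restricted to the $x$-block (telescoping, bounding the quadratic forms by $D^2$ times trace increments, and using $\mathrm{tr}(\hat{H}_N^x)\leqslant n_1(\delta+G)\leqslant d(\delta+G)$ via Lemma \ref{lemma-amsgrad-1}). Your observation that the dual rate decay leaves the velocity recursions, and hence the monotonicity of $\hat{H}_t^x$, untouched is the one detail worth stating explicitly, and you state it correctly.
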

\begin{Lemma}\label{lemma-amsgrad-drd-4}
\[\sum_{t=1}^{N}\|((H_t^x)^{-1}-(\hat{H}_t^x)^{-1})m_t\|_{\hat{H}_t^x}^2\leqslant\frac{dG^2(\delta+G)}{\delta^2}.\]
\end{Lemma}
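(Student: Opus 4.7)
The plan is to mirror the proof of Lemma \ref{lemma-amsgrad-5} almost verbatim, restricted to the $x$-block, since the dual rate decay in AMSGrad-EG-DRD only rescales the $y$-update by $1/\sqrt{t}$ and leaves the adaptive denominators $H_t^x, \hat{H}_t^x$ governed by the same AMSGrad velocity rule. In particular, the coordinate-wise monotonicity chain $\delta I \preceq H_1^x \preceq \hat{H}_1^x \preceq H_2^x \preceq \cdots \preceq \hat{H}_N^x$ still holds because the max-operation in the velocity update is unaffected by the decay, and Lemma \ref{lemma-amsgrad-1} applied to the first $n_1$ coordinates still gives $\|m_t^x\|_2 \leqslant \|m_t\|_2 \leqslant G$ and $|\hat{v}_{t,i}^x| \leqslant G^2$.

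First, I would expand the weighted norm as
\[
\|((H_t^x)^{-1}-(\hat{H}_t^x)^{-1})m_t^x\|_{\hat{H}_t^x}^2 = (m_t^x)^{\top}\bigl((H_t^x)^{-1}-(\hat{H}_t^x)^{-1}\bigr)\hat{H}_t^x\bigl((H_t^x)^{-1}-(\hat{H}_t^x)^{-1}\bigr)m_t^x,
\]
and pull out $\|m_t^x\|_2^2 \leqslant G^2$ to reduce the problem to bounding the trace of the middle symmetric matrix. Since $H_t^x$ and $\hat{H}_t^x$ are both diagonal with entries at least $\delta$, I would then invoke the coordinate-wise scalar inequality already established in Equation (\ref{eqn-lemma-amsgrad-1}), namely $y(1/x - 1/y)^2 \leqslant (y-x)/\delta^2$ for $\delta \leqslant x < y$, yielding
\[
\mathrm{tr}\bigl(((H_t^x)^{-1}-(\hat{H}_t^x)^{-1})\hat{H}_t^x((H_t^x)^{-1}-(\hat{H}_t^x)^{-1})\bigr) \leqslant \frac{\mathrm{tr}(\hat{H}_t^x) - \mathrm{tr}(H_t^x)}{\delta^2}.
\]

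Second, I would telescope across $t=1,\ldots,N$: because $\hat{H}_{t-1}^x \preceq H_t^x \preceq \hat{H}_t^x$ coordinate-wise, the sum $\sum_{t=1}^{N}\bigl[\mathrm{tr}(\hat{H}_t^x) - \mathrm{tr}(H_t^x)\bigr]$ collapses to at most $\mathrm{tr}(\hat{H}_N^x)$, and each diagonal entry of $\hat{H}_N^x$ is at most $\delta + G$ by the velocity bound of Lemma \ref{lemma-amsgrad-1}. Since there are $n_1 \leqslant d$ such entries, we obtain $\mathrm{tr}(\hat{H}_N^x) \leqslant n_1(\delta+G) \leqslant d(\delta + G)$, which combined with the $G^2$ and $1/\delta^2$ factors yields the claimed bound $dG^2(\delta+G)/\delta^2$.

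There is essentially no new obstacle beyond what was handled in Lemma \ref{lemma-amsgrad-5}: the dual rate decay never enters the arguments of $H_t^x$ or $\hat{H}_t^x$, only the downstream updates of $y_t$ and $\hat{y}_t$, so the entire scalar and trace machinery transfers directly. The only mild bookkeeping is to verify that the block structure $H_t = \mathrm{Diag}(H_t^x, H_t^y)$ makes the $x$-marginal monotonicity independent of whatever happens on the $y$-block, which is immediate from the diagonal form of both matrices.
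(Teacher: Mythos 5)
Your proposal is correct and follows essentially the same route as the paper, which establishes this lemma simply by invoking the argument of Lemma \ref{lemma-amsgrad-5} restricted to the $x$-block (the scalar inequality of Equation (\ref{eqn-lemma-amsgrad-1}), the trace bound with $\|m_t^x\|_2^2\leqslant G^2$, and the telescoping over the monotone chain of diagonal matrices up to $\mathrm{tr}(\hat{H}_N^x)\leqslant d(\delta+G)$). Your added observation that the dual rate decay never enters the velocity updates, so the $x$-block monotonicity and the bounds of Lemma \ref{lemma-amsgrad-1} carry over unchanged, is precisely the (unstated) justification the paper relies on.
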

Finally, the last term can be perfectly bounded by the $\mathcal O(1/\sqrt{t})$ decayed learning rate.
\begin{align}  
\sum_{t=1}^{N}\mathbb{E}\|y_{t-1}-\hat{y}_t\|_{\hat{H}^y_t}^2 &\leqslant \sum_{t=1}^{N}\mathbb{E}\frac{\eta^2}{t}\|(\hat{H}^y_t)^{-1}m_t^y\|_{\hat{H}^y_t}^2\notag\\
&\leqslant \sum_{t=1}^{N}\mathbb{E}\frac{\eta^2}{t}\cdot\frac{G^2}{\delta}=\frac{\eta^2G^2}{\delta}(1+\log N).
\end{align}
To sum up, we eventually get the equation that:
\begin{align}
&\mathbb{E}\|V_x(z)\|_2^2 \leqslant\frac{(15+3\log N)dG^2(\delta+G)^2}{N\delta^2}+\frac{6dD^2(\delta+G)^2}{N\eta^2}+\frac{150\sigma^2(\delta+G)}{N\delta}\sum_{t=1}^{N}\frac{1}{M_t}\notag\\
&~~~~~~~~~+\frac{48GD(\delta+G)}{N\eta}\sum_{t=1}^{N}\beta_{1t}+\frac{108\eta^2G^2(\delta+G)}{N\delta}\sum_{t=1}^{N}\beta_{1t}^2\notag,
\end{align}
which comes to our conclusion.

\section{More Experimental Results}
In this section, we further use experiments to verify the effectiveness of AMSGrad-EG and AMSGrad-EG-DRD algorithms proposed by us. Also, we show that the one-sided MVI condition is more feasible than standard MVI condition even in a more complicated setting. Here, we use DCGAN \cite{radford2015unsupervised} on CIFAR10 dataset. We set our batch size as 100, learning rate as $1e$-4 and we compare AMSGrad-EG, AMSGrad-EG-DRD and SGDA by drawing their generated figures after 50, 100, 200 iterations in the following Figure 2. The two algorithms proposed by us again perform better than the non-adaptive SGDA. After training DCGAN on the CIFAR10 dataset with AMSGrad-EG optimizer, we plot the total MVI values $\langle -V(z_k), z_k-z^*\rangle$, $x$-sided  MVI values $\langle-V_x(z_k), x_k-x^*\rangle$, and $y$-sided MVI values $\langle-V_y(z_k), y_k-y^*\rangle$ along the training trajectory as the following Figure \ref{fig:experiment-4}.
\vspace{-5mm}
\begin{figure}[!ht]
\centering
\includegraphics[width=0.5\linewidth]{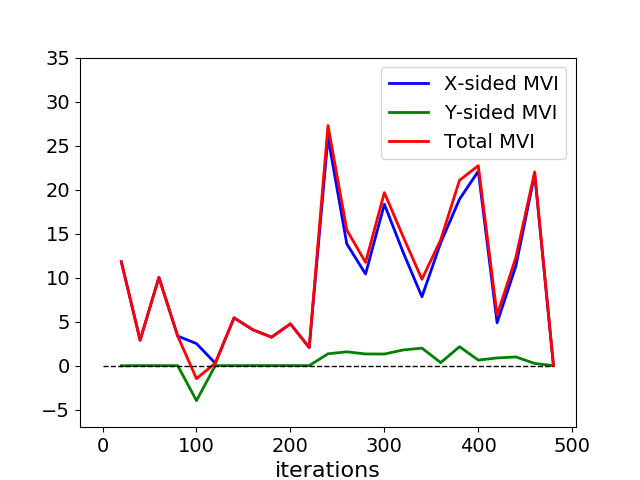}
\caption{This figure shows the MVI values along the DCGAN's training trajectory. As we can see, the blue curve stays above $x$ axis in the experiment while the red curve does not.}
\label{fig:experiment-4}
\end{figure}

\begin{figure}[!ht]
\label{fig:experiment-3}
\centering
\subfigure[AMSGrad-EG]{
\begin{minipage}[b]{0.30\linewidth}
\includegraphics[width=1.0\linewidth]{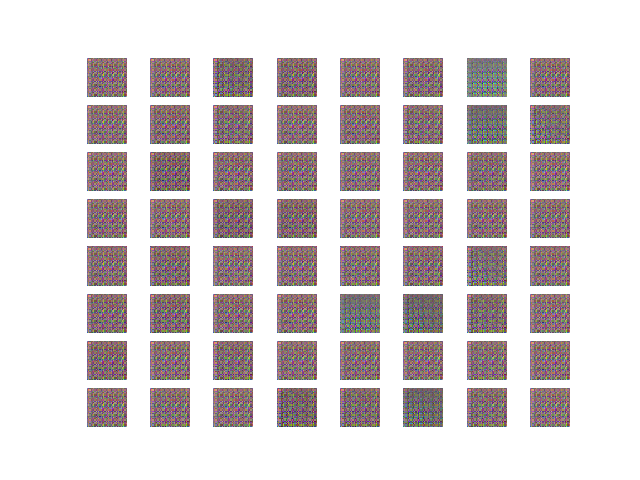}\vspace{1pt}
\includegraphics[width=1.0\linewidth]{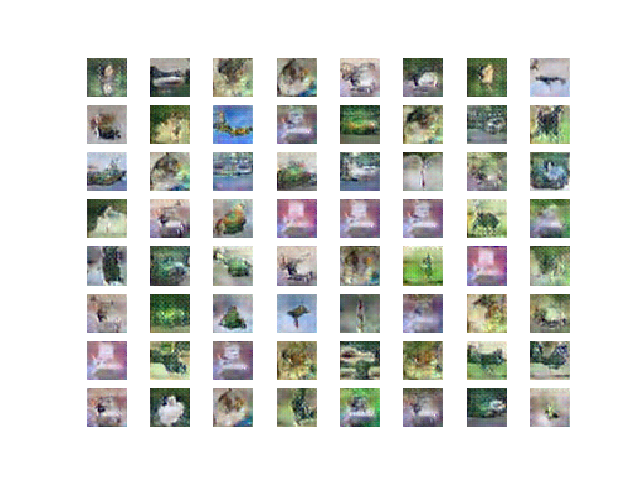}\vspace{1pt}
\includegraphics[width=1.0\linewidth]{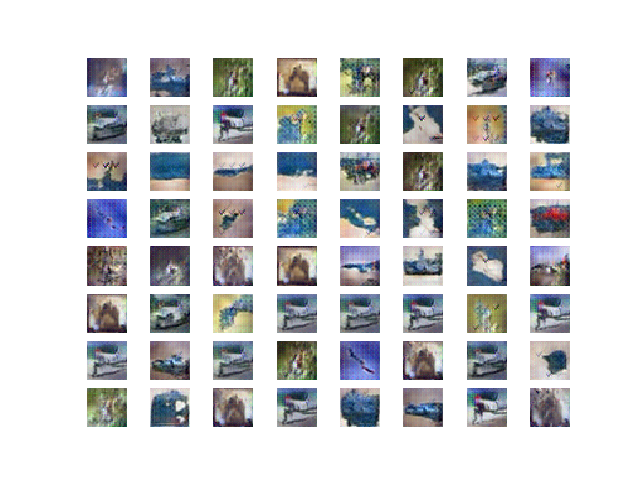}
\end{minipage}}
\subfigure[AMSGrad-EG-DRD]{
\begin{minipage}[b]{0.30\linewidth}
\includegraphics[width=1.0\linewidth]{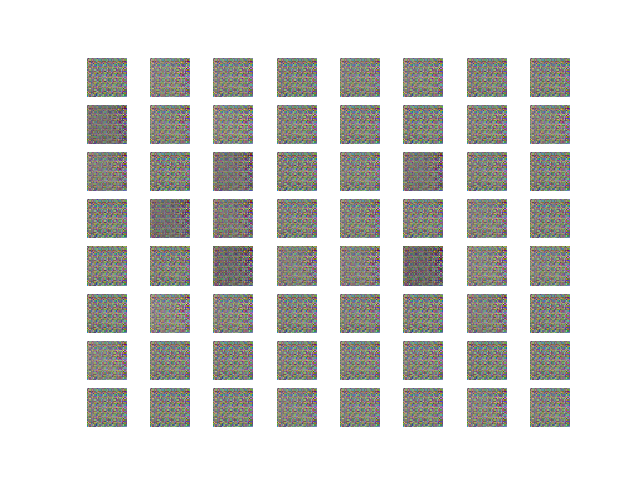}\vspace{1pt}
\includegraphics[width=1.0\linewidth]{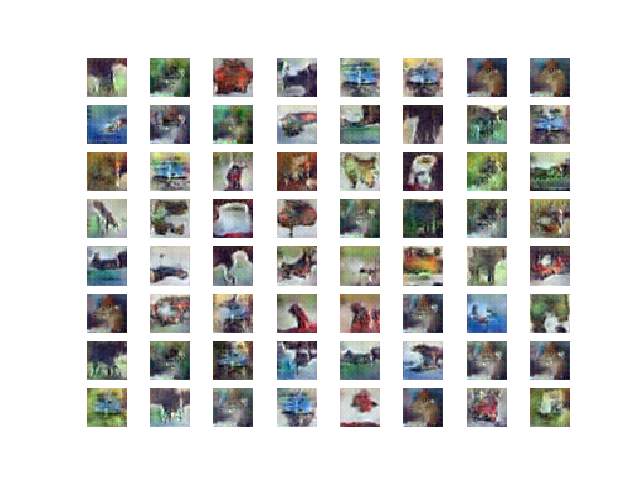}\vspace{1pt}
\includegraphics[width=1.0\linewidth]{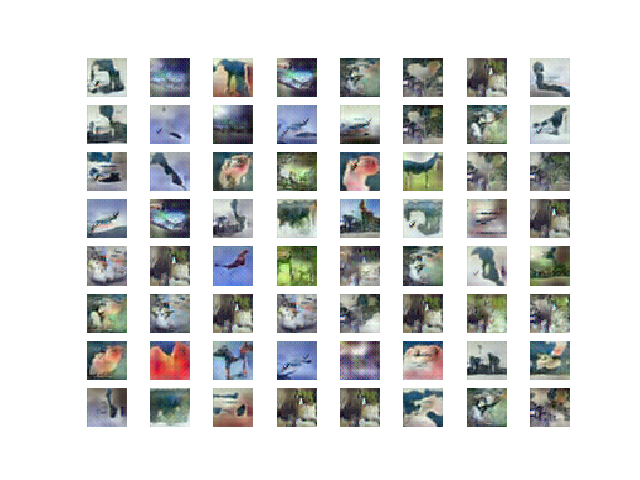}
\end{minipage}}
\subfigure[SGDA]{
\begin{minipage}[b]{0.30\linewidth}
\includegraphics[width=1.0\linewidth]{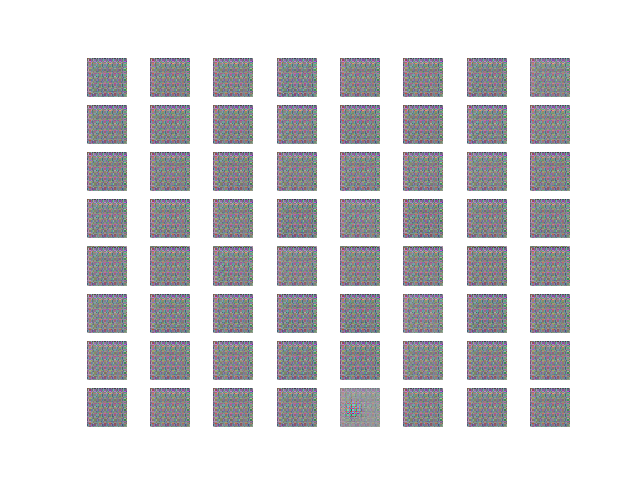}\vspace{2pt}
\includegraphics[width=1.0\linewidth]{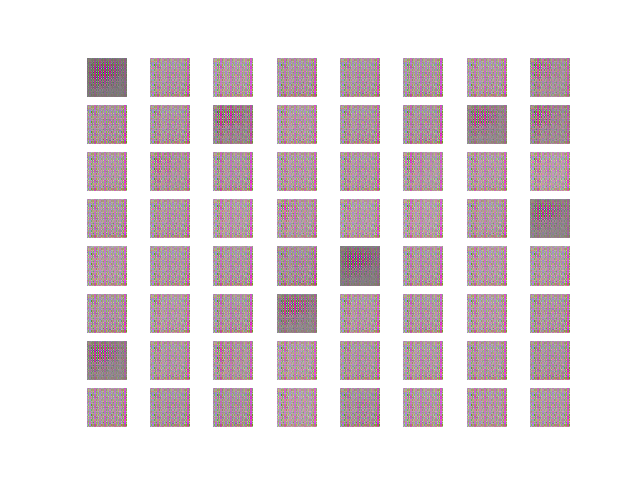}\vspace{2pt}
\includegraphics[width=1.0\linewidth]{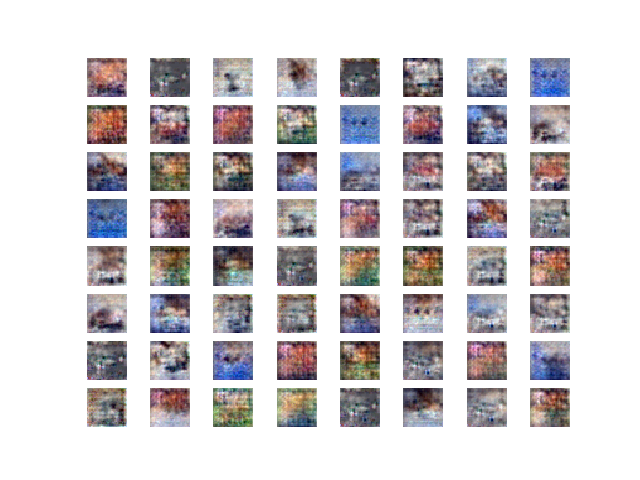}
\end{minipage}}
\caption{These are the generated CIFAR10 figures by the three algorithms after 50, 100, 200 iterations. }
\end{figure}

\newpage
\section{Stochastic Extra-Gradient and Adaptive Extra-Gradient Algorithms}
In a previous paper \cite{liu2020towards}, the authors propose the Optimistic Gradient (OG) method and Optimistic AdaGrad (OAdagrad) method. In this section, we extend them to the \textbf{Extra-Gradient} type algorithms: Stochastic Extra-Gradient (SEG) and Adaptive Extra-Gradient (AEG). We put the result in the appendix because it is just a by-product of this research and not our main result. 

Before we introduce our newly-proposed Adaptive Extra-Gradient method, we slightly modify the Stochastic Extra-Gradient (SEG) algorithm, by using different batch sizes in each iteration, as Algorithm \ref{alg-seg+batch} below.

\begin{algorithm}[!ht]
\caption{Stochastic Extra-Gradient (SEG) with batch size}\label{alg-seg+batch}
\hspace*{0.02in}{\bf Input:} The initial state $\hat{z}_0=z_0=0$, a constant learning rate $\eta$, a Stochastic First-order Oracle (SFO) $V(z;\xi)$, a sequence of batch sizes $\{m_k\}_{k\geqslant 1}$. \\
\hspace*{0.02in}{\bf Output:} $z_t$, $t$ is uniformly chosen from $\{0,1,2,\ldots,N-1\}$.

\begin{algorithmic}[1]
\FOR{$k=1,\ldots,N$}
\STATE (Gradient Evaluation) $g_{k-1}=\frac{1}{m_k}\sum_{i=1}^{m_k}V(z_{k-1};\xi_{k-1}^i)$.
\STATE (Shadow Update) $\hat{z}_k= \Pi_{\mZ}[z_{k-1}+\eta\cdot g_{k-1}]$.
\STATE (Gradient Evaluation) $\hat{g}_{k-1}=\frac{1}{m_k}\sum_{i=1}^{m_k}V(\hat{z}_k;\hat{\xi}_{k-1}^i)$
\STATE (Real Update) $z_k = \Pi_{\mZ}[z_{k-1}+\eta\cdot \hat{g}_{k-1}]$
\ENDFOR
\end{algorithmic}
\end{algorithm}

Algorithm \ref{alg-aeg} is a basic idea on the design of AEG algorithm. Here, we use constant batch size $m$ and let $g_{k-1}=\frac{1}{m}\sum_{i=1}^{m}V(z_{k-1},\xi_{k-1}^i), \hat{g}_{k-1}=\frac{1}{m}\sum_{i=1}^{m}V(\hat{z}_k,\hat{\xi}_{k-1}^i)$ be the estimated gradients on $z_{k-1}$ and $\hat{z}_k$ in the $k$-th iteration. Also, $\hat{g}_{1:k}$ is the concatenation of $\hat{g}_1,\ldots,\hat{g}_{k}$, and $\hat{g}_{1:k,i}$ is its $i$-th row vector. Similarly, $g_{0:k}$ is the concatenation of $g_0,\ldots,g_{k}$, and $g_{0:k,i}$ is its $i$-th row vector. Note that all the matrices $H_k, S_k$ are diagonal so they don't require extra computation complexity.

\begin{algorithm}[!ht]
\caption{Adaptive Extra-Gradient (AEG)}\label{alg-aeg}
\hspace*{0.02in}{\bf Input:} The initial state $\hat{z}_0=z_0=0, H_0=\hat{H}_0=\delta I$, a constant learning rate $\eta$, a Stochastic First-order Oracle (SFO) $V(z;\xi)$, a constant batch size $m$.\\
\hspace*{0.02in}{\bf Output:} $z_t$, $t$ is uniformly chosen from $\{0,1,2,\ldots,N-1\}$.

\begin{algorithmic}[1]
\FOR{$k=1,\ldots,N$}
\STATE (Gradient Evaluation) $g_{k-1} = \frac{1}{m}\sum_{i=1}^{m}V(z_{k-1};\xi_{k-1}^i)$.
\STATE (Gradient Concatenation and Norm Calculation) Update $g_{0:k}=[g_{0:k-2}~g_{k-1}], s_{k-1,i}=\|\left(g_{0:k-1,i}~ \hat{g}_{1:k-1,i}\right)\|_2~~i=1,2,\ldots,d$ and $H_{k-1}=\delta I+\mathrm{diag}(s_{k-1})$.
\STATE (Shadow Update) $\hat{z}_k= z_{k-1}+\eta\cdot H_{k-1}^{-1}g_{k-1}$
\STATE (Gradient Evaluation) $\hat{g}_{k-1}=\frac{1}{m}\sum_{i=1}^{m} V(\hat{z}_k; \hat{\xi}_{k-1}^i)$.
\STATE (Gradient Concatenation and Norm Calculation) Update $\hat{g}_{1:k}=[\hat{g}_{1:k-1}~\hat{g}_{k}], \hat{s}_{k-1,i}=\|\left(g_{0:k-1,i}~\hat{g}_{1:k,i}\right)\|_2 ~~i=1,2,\ldots,d$ and $S_{k-1}=\delta I+\mathrm{diag}(\hat{s}_{k-1})$.
\STATE (Real Update) $z_k = z_{k-1}+\eta\cdot S_{k-1}^{-1}\hat{g}_{k-1}$
\ENDFOR
\end{algorithmic}
\end{algorithm}

Now, we introduce the following two theorems (Theorem \ref{thm-seg} and Theorem \ref{thm-aeg}), which compare the convergence rates of SEG and its adaptive variant AEG.

\begin{Theorem}[Convergence of SEG]
There is an algorithm (Stochastic Extra-Gradient), which given:
\begin{itemize}
\item A Stochastic First-order Oracle (SFO) access to the objective function $\phi(x,y):\mathcal X\times\mathcal Y\rightarrow \mathbb{R}$ where $\mathcal X\subseteq \mathbb{R}^{n_1}, \mathcal Y\subseteq \mathbb{R}^{n_2}$, which is denoted as: $V(z;\xi)$ and it satisfies the following conditions:
\[\mathbb{E}[V(z;\xi)]=V(z):=(-\nabla_x \phi, \nabla_y \phi)\text{~~and~~}\mathbb{E}\|V(z;\xi)-V(z)\|^2\leqslant \sigma^2.\]
Here, $z:=(x,y)$ and $\mathcal Z:=\mathcal X\times\mathcal Y$. Also: the corresponding MVI function of $-V$ has a solution $z^*\in\mathcal Z$, which means $\langle -V(z),z-z^*\rangle\geqslant 0$ holds for $\forall z\in\mZ$.
\item A positive real $L$ such that $V$ is $L$-Lipschitz continuous with respect to $\|\cdot\|_2$.
\item The learning rate $\eta$.
\item An initial point $z_0\in\mZ$.
\item The iteration number $N$.
\item The sequence of batch sizes in each iteration $\{m_k\}_{k\geqslant 1}$.
\end{itemize}
Then, we output a result $z\in\mathcal Z$ which satisfies:
\[\mathbb{E}[r_{\eta}^2(z)]\leqslant\frac{4\|z_0-z^*\|^2}{N}+\frac{50\eta^2}{N}\sum_{k=0}^{N-1}\frac{\sigma^2}{m_k}\]
where $r_{\alpha}(z):=\|z-\Pi_{\mathcal Z}(z+\alpha\cdot V(z))\|_2$  and $\eta\leqslant\frac{1}{4L}$. If $\mathcal Z=\mathbb{R}^{n_1+n_2}$, then the projection operator $\Pi$ is the identity, and $r_{\alpha}(z):=\alpha\cdot\| V(z)\|_2$. The inequality above becomes:
\[\mathbb{E}\|V(z)\|_2^2\leqslant\frac{4\|z_0-z^*\|^2}{\eta^2 N}+\frac{50}{N}\sum_{k=0}^{N-1}\frac{\sigma^2}{m_k}.\]
Compared with Optimistic Stochastic Gradient (OSG) method proposed in \cite{liu2020towards}, we can have a larger learning rate in this theorem. 
\label{thm-seg}
\end{Theorem}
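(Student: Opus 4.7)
The plan is to adapt the standard extragradient analysis to the MVI setting with batched stochastic gradients, then translate the resulting bound into one on the projected residual $r_{\eta}$. Let me denote $\varepsilon_{k-1} := g_{k-1} - V(z_{k-1})$ and $\hat{\varepsilon}_{k-1} := \hat{g}_{k-1} - V(\hat{z}_k)$, both of which are conditionally mean zero with variance at most $\sigma^2/m_k$ thanks to the mini-batching.

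First, I would apply the non-expansiveness of the projection to the real update $z_k = \Pi_{\mathcal{Z}}(z_{k-1} + \eta \hat{g}_{k-1})$ to obtain
\[
\|z_k - z^*\|^2 \leq \|z_{k-1} - z^*\|^2 - \|z_{k-1} - z_k\|^2 + 2\eta\langle \hat{g}_{k-1}, z_k - z^*\rangle,
\]
and then split the cross term as $\langle \hat{g}_{k-1}, z_k - z^*\rangle = \langle \hat{g}_{k-1}, \hat{z}_k - z^*\rangle + \langle \hat{g}_{k-1}, z_k - \hat{z}_k\rangle$. For the first piece, writing $\hat{g}_{k-1} = V(\hat{z}_k) + \hat{\varepsilon}_{k-1}$ and invoking Assumption \ref{assumption-mvi} yields $\langle V(\hat{z}_k), \hat{z}_k - z^*\rangle \leq 0$, while $\mathbb{E}[\langle \hat{\varepsilon}_{k-1}, \hat{z}_k - z^*\rangle] = 0$ because the fresh sample $\hat{\xi}$ is independent of $\hat{z}_k$.

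Next, I would handle the harder piece $\langle \hat{g}_{k-1}, z_k - \hat{z}_k\rangle$ by exploiting the projection optimality condition for the shadow update, $\langle \hat{z}_k - z_{k-1} - \eta g_{k-1}, z_k - \hat{z}_k\rangle \geq 0$, which turns $\eta g_{k-1}$ into the geometric displacement $\hat{z}_k - z_{k-1}$. The three-point identity then converts $2\langle \hat{z}_k - z_{k-1}, z_k - \hat{z}_k\rangle$ into $\|z_k - z_{k-1}\|^2 - \|\hat{z}_k - z_{k-1}\|^2 - \|z_k - \hat{z}_k\|^2$, which partially cancels the $-\|z_{k-1}-z_k\|^2$ term from Step 1. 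The remaining residual $2\eta\langle \hat{g}_{k-1} - g_{k-1}, z_k - \hat{z}_k\rangle$ I would decompose as a Lipschitz piece $V(\hat{z}_k) - V(z_{k-1})$ plus the noise $\hat{\varepsilon}_{k-1} - \varepsilon_{k-1}$; by Young's inequality with weight chosen using $\eta \leq 1/(4L)$, the Lipschitz piece gets absorbed into $\|\hat{z}_k - z_{k-1}\|^2/4$ and $\|z_k - \hat{z}_k\|^2/4$, leaving constant-factor slack.

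After telescoping over $k=1,\dots,N$ and taking expectations (the noise inner products vanish and the noise squared norms contribute $O(\eta^2 \sigma^2/m_k)$), I get
\[
\sum_{k=1}^{N} \mathbb{E}\|z_{k-1} - \hat{z}_k\|^2 \leq c_1 \|z_0-z^*\|^2 + c_2 \eta^2 \sum_{k=0}^{N-1}\frac{\sigma^2}{m_k}
\]
for absolute constants $c_1, c_2$. Finally I would translate this to $r_\eta$: by non-expansiveness of $\Pi_{\mathcal{Z}}$,
\[
\bigl\|\hat{z}_k - \Pi_{\mathcal{Z}}(z_{k-1} + \eta V(z_{k-1}))\bigr\| \leq \eta\|g_{k-1} - V(z_{k-1})\| = \eta\|\varepsilon_{k-1}\|,
\]
so $r_\eta(z_{k-1})^2 \leq 2\|z_{k-1} - \hat{z}_k\|^2 + 2\eta^2\|\varepsilon_{k-1}\|^2$; averaging over the uniformly chosen index $t\in\{0,\dots,N-1\}$ gives the claimed bound.

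The main obstacle is bookkeeping to land on the stated constants $4$ and $50$: one must choose the Young's inequality weight carefully so that the Lipschitz cross-term is controlled by exactly $\eta \leq 1/(4L)$ (rather than a stricter condition), and one must track the constant factors that come from the two places noise enters (once in the telescoping step and once in the translation to $r_\eta$). A secondary subtlety is ensuring that the filtration is set up so that both $\varepsilon_{k-1}$ and $\hat{\varepsilon}_{k-1}$ integrate to zero against the correct deterministic-at-conditioning quantities, which requires using that the two mini-batches $\{\xi^i_{k-1}\}$ and $\{\hat{\xi}^i_{k-1}\}$ are independent within the $k$-th iteration.
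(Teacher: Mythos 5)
Your proposal is correct and follows essentially the same route as the paper's own proof: the projection inequality for the real update, the MVI condition applied at $\hat{z}_k$, the shadow-step optimality condition plus the Lipschitz-and-noise decomposition of $\hat{g}_{k-1}-g_{k-1}$ absorbed under $\eta\leqslant 1/(4L)$, telescoping, and finally the bound $r_{\eta}(z_{k-1})^2\leqslant 2\|z_{k-1}-\hat{z}_k\|^2+2\eta^2\|\varepsilon_{k-1}\|^2$ via non-expansiveness. The constant bookkeeping you flag works out exactly as in the paper ($2/(1-6\eta^2L^2)\leqslant 4$ and $48\eta^2+2\eta^2=50\eta^2$).
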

\begin{Remark}
Let $\eta =\frac{1}{4L}$. If the batch sizes are constant, which means $m_k = m $ for $\forall k$. In order to guarantee that $\mathbb{E}\|V(z)\|^2\leqslant \varepsilon^2$, we have to make $m=O(1/\varepsilon^2)$ and $N=O(L^2/\varepsilon^2)$, and the total complexity is $\sum_{k=1}^{N}m_k=mN=O(L^2/\varepsilon^4)$. In another scenario where $m_k = k+1$ is an increasing sequence, in order to guarantee that $\mathbb{E}\|V(z)\|^2\leqslant \varepsilon^2$, we have to make $N=O(L^2/\varepsilon^2)$ and then the total complexity is $\sum_{k=1}^{N}m_k=\sum_{k=1}^{N}(k+1)=O(L^4/\varepsilon^4)$.
\end{Remark}

\begin{Theorem} [Convergence of AEG] When our objective function $\phi(x,y)$ satisfies Assumption 1 and 2, as well as the bounded cumulative gradient condition, there exists an algorithm (AEG), given:
\begin{itemize}
\item A Stochastic First-order Oracle (SFO) access to the objective function $\phi(x,y):\mathbb{R}^{n_1+n_2}\rightarrow \mathbb{R}$, which is denoted as: $V(z;\xi)$ and it satisfies the following conditions:
\[\mathbb{E}[V(z;\xi)]=V(z):=(-\nabla_x \phi, \nabla_y \phi)\text{~~and~~}\mathbb{E}\|V(z;\xi)-V(z)\|^2\leqslant \sigma^2.\]
Here, $z:=(x,y)$ and $\mZ:=\mathbb{R}^{n_1+n_2}=\mathbb{R}^d$ where $d:=n_1+n_2$. Also: the corresponding MVI function of $V$ has a solution $z^*\in\mathcal Z$.
\item Positive real numbers $G$, such that $\|V(z;\xi)\|_2\leqslant G$  almost surely holds.
\item A positive real $L$ such that $V$ is $L$-Lipschitz continuous with respect to $\|\cdot\|_2$.
\item A universal constant $D>0$ such that $\|z_k-z^*\|_2\leqslant D$ and for all the points on the trajectory of our algorithm.
\item An initial point $z_0\in\mathcal Z$.
\item The iteration number $N$.
\item A constant batch size $m$.
\item A constant $0\leqslant\alpha\leqslant 1/2$ such that: the cumulative gradients are bounded as: $\|(g_{1:k,i}~\hat{g}_{1:k,i})\|_2\leqslant 2\delta\cdot k^{\alpha}$ for all $i,k$.
\item The constant learning rate $\eta\leqslant \frac{\delta}{4L}$.
\end{itemize}
The output results $z_0,z_1,\ldots,z_{N-1}$ satisfies the following inequality:
\[\frac{1}{N}\mathbb{E}\sum_{k=0}^{N-1}\|V(z_k)\|_{H_k^{-1}}^2\leqslant \frac{16dD^2\delta/\eta^2+200d\delta+40dG^2/\delta}{N^{1-\alpha}}+\frac{50\delta\sigma^2/m+2G^2d/\delta}{N}.\]
\label{thm-aeg}
\end{Theorem}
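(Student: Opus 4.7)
The proof plan adapts the argument for Theorem \ref{thm-amsgradeg} to the AdaGrad-style preconditioner, dropping momentum and replacing the max-update for $v_t$ with cumulative squared gradients. I would track the potential $\Phi_k=\|z_k-z^*\|_{S_{k-1}}^2$, where $S_{k-1}$ is the preconditioner used in the real update, and aim to extract $\|V(z_{k-1})\|_{H_{k-1}^{-1}}^2$ on the way down. The core observation is that, because $S_{k-1}$ and $H_{k-1}$ accumulate the same coordinatewise norms (with $S_{k-1}$ containing one extra $\hat g_{k-1}$), they satisfy the monotonicity chain $\delta I\preceq H_0\preceq S_0\preceq H_1\preceq S_1\preceq\cdots$, which keeps the weighted-norm calculus consistent throughout telescoping.

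First I would expand $\Phi_k-\Phi_{k-1}$ using $z_k=z_{k-1}+\eta S_{k-1}^{-1}\hat g_{k-1}$, producing a cross term $2\eta\langle \hat g_{k-1}, z_{k-1}-z^*\rangle$ and a squared term $\eta^2\|\hat g_{k-1}\|_{S_{k-1}^{-1}}^2$. Splitting $z_{k-1}-z^* = (z_{k-1}-\hat z_k)+(\hat z_k-z^*)$ and writing $\hat g_{k-1} = V(\hat z_k)+\hat\varepsilon_{k-1}$, the MVI condition (Assumption \ref{assumption-mvi}) eliminates the deterministic part of $\langle V(\hat z_k), \hat z_k-z^*\rangle$, while $\hat\varepsilon_{k-1}$ is conditionally mean-zero given $\hat z_k$ so its expectation vanishes. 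The informative piece is $-2\eta\langle \hat g_{k-1}, \eta H_{k-1}^{-1}g_{k-1}\rangle$: writing $g_{k-1} = V(z_{k-1})+\varepsilon_{k-1}$ and $\hat g_{k-1}-g_{k-1} = (V(\hat z_k)-V(z_{k-1}))+(\hat\varepsilon_{k-1}-\varepsilon_{k-1})$, the $L$-Lipschitz residual is bounded by $L\eta\|H_{k-1}^{-1}g_{k-1}\|$, and thanks to $\eta\leqslant\delta/(4L)$ a Young step absorbs it into at most half of the target $\|V(z_{k-1})\|_{H_{k-1}^{-1}}^2$.

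Telescoping over $k=1,\ldots,N$ then leaves three error groups. The boundary sum $\sum_k[\Phi_k-\Phi_{k-1}]\leqslant D^2\,\mathrm{tr}(S_{N-1})$ follows from exactly the monotone-trace argument used in Lemma \ref{lemma-amsgrad-4}. The cumulative quadratic $\sum_k\eta^2\|\hat g_{k-1}\|_{S_{k-1}^{-1}}^2$ is handled coordinatewise by the classical AdaGrad summation inequality $\sum_k g_{k,i}^2/\sqrt{\sum_{j\leqslant k}g_{j,i}^2}\leqslant 2\sqrt{\sum_k g_{k,i}^2}$, which under the bounded cumulative gradient assumption $\|(g_{1:k,i}~\hat g_{1:k,i})\|_2\leqslant 2\delta k^\alpha$ becomes $O(d\delta N^\alpha)$. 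The stochastic noise contributes $\sum_k\mathbb{E}\|\varepsilon_{k-1}\|_{H_{k-1}^{-1}}^2+\mathbb{E}\|\hat\varepsilon_{k-1}\|_{S_{k-1}^{-1}}^2\leqslant O(\sigma^2 N/(\delta m))$ from the bounded-variance assumption. Dividing by $N$ and tracking the constants through the Young step yields the claimed $O(d/N^{1-\alpha})+O(1/N)$ rate.

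The step I expect to be delicate is isolating the correct weighted norm on the left. The cross term $-2\eta^2\langle \hat g_{k-1}, H_{k-1}^{-1}g_{k-1}\rangle$ must be manipulated so that exactly $\|V(z_{k-1})\|_{H_{k-1}^{-1}}^2$ survives with a clean multiplicative constant, even though the surrounding terms are naturally weighted by $S_{k-1}^{-1}$. Because $H_{k-1}\preceq S_{k-1}$ implies $H_{k-1}^{-1}\succeq S_{k-1}^{-1}$, the inequality goes the \emph{wrong way} for a crude preconditioner swap, so the extraction must use the $H_{k-1}^{-1}$ that naturally appears via $\hat z_k = z_{k-1}+\eta H_{k-1}^{-1}g_{k-1}$. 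Once this bookkeeping is in place, the remainder is a routine combination of Young's inequality, the telescoping diameter bound, the AdaGrad summation lemma, and the bounded cumulative gradient assumption.
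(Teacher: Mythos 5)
Your overall architecture matches the paper's proof of Theorem \ref{thm-aeg}: the same potential $\|z_k-z^*\|_{S_{k-1}}^2$, the MVI condition killing $\langle V(\hat z_k),\hat z_k-z^*\rangle$ while the mean-zero noise term drops under expectation, absorption of the discrepancy between $\hat g_{k-1}$ and $g_{k-1}$ via Lipschitzness and $\eta\leqslant\delta/(4L)$, and the trace-telescoping bounds for the boundary term and the preconditioner drift (Lemmas \ref{lemma-aeg-3} and \ref{lemma-aeg-4}). Your algebra is organized a little differently --- you extract $\|g_{k-1}\|_{H_{k-1}^{-1}}^2$ directly from the cross term $-2\eta^2\langle\hat g_{k-1},H_{k-1}^{-1}g_{k-1}\rangle$ and control $\eta^2\|\hat g_{k-1}\|_{S_{k-1}^{-1}}^2$ by the AdaGrad summation inequality, whereas the paper cancels $-\|z_{k-1}-\hat z_k\|_{S_{k-1}}^2$ against the Lipschitz part of $\|z_k-\hat z_k\|_{S_{k-1}}^2$ (Lemmas \ref{lemma-aeg-1} and \ref{lemma-aeg-2}) and converts from $S_{k-1}^{-1}$ to $H_{k-1}^{-1}$ weights only at the very end --- but these are cosmetic reorganizations, and your remark about $H_{k-1}^{-1}\succeq S_{k-1}^{-1}$ making a crude preconditioner swap go the wrong way is exactly the right concern.

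The genuine gap is your treatment of the noise sum. You bound $\sum_{k}\mathbb{E}\bigl[\|\varepsilon_{k-1}\|_{H_{k-1}^{-1}}^2+\|\hat\varepsilon_{k-1}\|_{S_{k-1}^{-1}}^2\bigr]$ by the crude per-step estimate $\|\cdot\|_{H_{k-1}^{-1}}^2\leqslant\delta^{-1}\|\cdot\|_2^2$ together with $\mathbb{E}\|\varepsilon_{k-1}\|_2^2\leqslant\sigma^2/m$, which yields a total of order $N\sigma^2/(\delta m)$. After dividing by $N$ this leaves a term of order $\sigma^2/(\delta m)$ that does \emph{not} decay in $N$, whereas the theorem asserts a contribution of $\frac{50\delta\sigma^2/m}{N}$, i.e.\ the variance term vanishes as $N\to\infty$ at fixed $m$. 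The paper instead invokes Lemma \ref{lemma-aeg-5}, which bounds the entire preconditioner-weighted noise sum by $\frac{\delta\sigma^2}{m}+4d\delta N^{\alpha}$: the bulk of the sum is charged to the growth of the adaptive preconditioner (via the bounded cumulative gradient assumption), not to $N$ independent copies of the variance. Without this refined estimate, your argument only proves the weaker conclusion with an additive $O(\sigma^2/(\delta m))$ floor, so you must replace the naive variance step with the telescoping bound of Lemma \ref{lemma-aeg-5} (or an equivalent) to recover the rate as stated.
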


If we switch the assumption of MVI condition (Assumption 2) to the one-sided MVI condition (Assumption 3), we can slightly modify the Adaptive Extra-Gradient (AEG) method to Adaptive Extra-Gradient with Dual Rate Decay (AEG-DRD), where the learning rate of $\{y_k\}$ variables set as $\frac{\eta}{k}$. In the Algorithm \ref{alg-aeg-dwd} below, $H_k = \mathrm{Diag}(A_k, B_k)$ and $S_k = \mathrm{Diag}(C_k, D_k)$ where for each $k\in\mathbb{N}$, $A_k, C_k \in\mathbb{R}^{n_1\times n_1}$ and $B_k, D_k\in\mathbb{R}^{n_2\times n_2}$. Also, $g_k = (g_k^x,g_k^y)$ and $\hat{g}_k=(\hat{g}_k^x, \hat{g}_k^y)$ where $g_k^x, \hat{g}_k^x\in\mathbb{R}^{n_1}$ and $g_k^y, \hat{g}_k^y\in\mathbb{R}^{n_2}$.

\begin{algorithm}[!ht]
\caption{Adaptive Extra-Gradient with Dual Rate Decay (AEG-DRD)}\label{alg-aeg-dwd}
\hspace*{0.02in}{\bf Input:} The initial state $\hat{z}_0=z_0=0, H_0=\hat{H}_0=\delta I$, a constant learning rate $\eta$, a Stochastic First-order Oracle (SFO) $V(z;\xi)$, a constant batch size $m$.\\
\hspace*{0.02in}{\bf Output:} $z_t$ where $t$ is uniformly chosen from $\{0,1,2,\ldots,N-1\}$.
\begin{algorithmic}[1]
\FOR{$k=1,\ldots,N$}
\STATE (Gradient Evaluation) $g_{k-1} = \frac{1}{m}\sum_{i=1}^{m}V(z_{k-1};\xi_{k-1}^i)$.
\STATE (Gradient Concatenation and Norm Calculation) Update $g_{0:k}=[g_{0:k-2}~g_{k-1}], s_{k-1,i}=\|\left(g_{0:k-1,i}~ \hat{g}_{1:k-1,i}\right)\|_2~~i=1,2,\ldots,d$ and $H_{k-1}=\delta I+\mathrm{diag}(s_{k-1})$.
\STATE (Shadow Update of $x$) $\hat{x}_k= x_{k-1}+\eta\cdot A_{k-1}^{-1}g_{k-1}^x$.
\STATE (Shadow Update of $y$) $\hat{y}_k= y_{k-1}+\frac{\eta}{k}\cdot B_{k-1}^{-1}g_{k-1}^y$.
\STATE (Gradient Evaluation) $\hat{g}_{k-1}=\frac{1}{m}\sum_{i=1}^{m} V(\hat{z}_k; \hat{\xi}_{k-1}^i)$.
\STATE (Gradient Concatenation and Norm Calculation) Update $\hat{g}_{1:k}=[\hat{g}_{1:k-1}~\hat{g}_{k}], \hat{s}_{k-1,i}=\|\left(g_{0:k-1,i}~\hat{g}_{1:k,i}\right)\|_2 ~~i=1,2,\ldots,d$ and $S_{k-1}=\delta I+\mathrm{diag}(\hat{s}_{k-1})$.
\STATE (Real Update of $x$) $x_k = x_{k-1}+\eta\cdot C_{k-1}^{-1}\hat{g}_{k-1}^x$.
\STATE (Real Update of $y$) $y_k = y_{k-1}+\frac{\eta}{k}\cdot D_{k-1}^{-1}\hat{g}_{k-1}^y$.
\ENDFOR
\end{algorithmic}
\end{algorithm}

\begin{Theorem} [Convergence of AEG-DRD] When our objective function $\phi(x,y)$ satisfies Assumption 2 and 3, as well as the bounded cumulative gradient condition, there exists an algorithm (AEG-DRD), given:
\begin{itemize}
\item A Stochastic First-order Oracle (SFO) access to the objective function $\phi(x,y):\mathbb{R}^{n_1+n_2}\rightarrow \mathbb{R}$, which is denoted as: $V(z;\xi)$ and it satisfies the following conditions:
\[\mathbb{E}[V(z;\xi)]=V(z):=(-\nabla_x \phi, \nabla_y \phi)\text{~~and~~}\mathbb{E}\|V(z;\xi)-V(z)\|^2\leqslant \sigma^2.\]
Here, $z:=(x,y)$ and $\mZ:=\mathbb{R}^{n_1+n_2}=\mathbb{R}^d$ where $d:=n_1+n_2$. Also: the one-sided MVI function of $V$ has a solution $z^*\in\mathcal Z$.
\item Positive real numbers $G$, such that $\|V(z;\xi)\|_2\leqslant G$  almost surely holds.
\item A positive real $L$ such that $V$ is $L$-Lipschitz continuous with respect to $\|\cdot\|_2$.
\item A universal constant $D>0$ such that $\|z_k-z^*\|_2\leqslant D$ and for all the points on the trajectory of our algorithm.
\item An initial point $z_0\in\mathcal Z$.
\item The iteration number $N$.
\item A constant batch size $m$.
\item A constant $0\leqslant\alpha\leqslant 1/2$ such that: the cumulative gradients are bounded as: $\|(g_{1:k,i}~\hat{g}_{1:k,i})\|_2\leqslant 2\delta\cdot k^{\alpha}$ for all $i,k$.
\item The constant learning rate $\eta\leqslant \frac{\delta}{4L}$.
\end{itemize}
The output results $z_0, z_1,\ldots,z_{N-1}$ satisfy the following inequality:
\[\frac{1}{N}\mathbb{E}\sum_{k=1}^{N}\|V_x(z_{k-1})\|_{C_{k-1}^{-1}}^2\leqslant \frac{16dD^2\delta/\eta^2+200d\delta+40dG^2/\delta}{N^{1-\alpha}}+\frac{24dG^2/\delta+50\delta\sigma^2/m}{N}.\]

\label{thm-aeg-oneside}
\end{Theorem}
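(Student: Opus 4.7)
The plan is to run the analysis of AEG restricted to the $x$-coordinates, and then absorb the extra terms produced by the $y$-update via the $\eta/k$ dual decay. Because the one-sided MVI only provides $\langle V_x(z),x-x^*\rangle\le 0$, every step in the AEG proof that invoked the full MVI to kill a cross term must be replaced by its $x$-only analog. First, I would derive the analog of the core recursion used for AEG, but applied to the $x$-coordinates with the metric $S_{t-1}^x = C_{t-1}$:
\[
\|x_t-x^*\|_{C_{t-1}}^2\;\leq\;\|x_{t-1}-x^*\|_{C_{t-1}}^2-\|x_{t-1}-\hat{x}_t\|_{C_{t-1}}^2+\|\hat{x}_t-x_t\|_{C_{t-1}}^2+2\eta\langle\hat{\varepsilon}^x_{t-1},\hat{x}_t-x^*\rangle,
\]
where the inner product $\langle V_x(\hat z_t),\hat x_t-x^*\rangle$ disappears by Assumption~\ref{assumption-3}, and the stochastic term vanishes in expectation. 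This is precisely the analog of the $x$-part of the recursion used in the AEG proof.

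Next I would bound $\|\hat{x}_t-x_t\|_{C_{t-1}}^2$. Writing $\hat x_t-x_t=\eta\,[A_{t-1}^{-1}g^x_{t-1}-C_{t-1}^{-1}\hat g^x_{t-1}]$ and splitting into the ``denominator-change'' part $\eta(A_{t-1}^{-1}-C_{t-1}^{-1})\hat g^x_{t-1}$ and the ``gradient-change'' part $\eta A_{t-1}^{-1}(g^x_{t-1}-\hat g^x_{t-1})$, the second term is controlled by $L$-Lipschitzness: $\|g^x_{t-1}-\hat g^x_{t-1}\|_{C_{t-1}^{-1}}^2\lesssim \frac{L^2}{\delta^2}(\|\hat x_t-x_{t-1}\|^2+\|\hat y_t-y_{t-1}\|^2)+\text{noise}$. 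The $x$-contribution is absorbed into $\|x_{t-1}-\hat x_t\|_{C_{t-1}}^2$ on the LHS since $\eta\le\delta/(4L)$, exactly as in AEG. The new piece is $\|\hat y_t-y_{t-1}\|^2\le(\eta/t)^2\|B_{t-1}^{-1}g^y_{t-1}\|^2\le(\eta/t)^2\cdot G^2/\delta^2$, and this is the only place where the $y$-update enters.

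Summing over $t=1,\dots,N$ and rearranging to extract $\|V_x(z_{t-1})\|_{C_{t-1}^{-1}}^2$ via $V_x(z_{t-1})=g^x_{t-1}-\varepsilon^x_{t-1}$ gives the main bound. The telescoping piece $\sum_t\mathbb E[\|x_{t-1}-x^*\|_{C_{t-1}}^2-\|x_t-x^*\|_{C_{t-1}}^2]$ is at most $D^2\mathrm{tr}(C_{N-1})$, which by the bounded cumulative gradient hypothesis $\|(g_{1:k,i}\;\hat g_{1:k,i})\|_2\le 2\delta k^\alpha$ is $O(dN^\alpha\delta)$. The ``denominator-change'' term is bounded by $\mathrm{tr}(C_{N-1})-\mathrm{tr}(A_0)\le O(dN^\alpha\delta)$ via the $y(1/x-1/y)^2\le(y-x)/\delta^2$ trick of Lemma~\ref{lemma-amsgrad-5}. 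The noise term contributes $O(\sigma^2/(m\delta))$ per iteration, and the new $y$-term contributes $\sum_t(\eta/t)^2\cdot G^2/\delta^2=O(1)$, which is even smaller than the $O(\log N)$ contribution seen in the AMSGrad-EG-DRD analysis because the decay rate here is $1/k$ rather than $1/\sqrt k$. Assembling all bounds gives the claimed rate.

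The main obstacle will be the bookkeeping around the one-sided MVI: in the AEG proof the full MVI is used when bounding $\langle\eta\hat m_t,\hat z_t-z^*\rangle$, and here we need to ensure this bound is never invoked on the $y$-coordinate. By carrying out the entire recursion solely on the $x$-metric $C_{t-1}$ (rather than on the full $S_{t-1}$) and routing the $y$-update into the analysis only through the Lipschitz continuity of $V$, the one-sided MVI suffices. The dual rate decay $\eta/k$ then ensures the resulting $y$-Lipschitz term is summable, which is precisely what makes the argument close.
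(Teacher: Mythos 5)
Your overall route is the same as the paper's: you run the AEG recursion only on the $x$-block in the metric $C_{k-1}$, invoke the one-sided MVI solely to discard $\langle V_x(\hat z_k),\hat x_k-x^*\rangle$, let the $y$-update enter the analysis only through the Lipschitz term $\frac{L^2}{\delta^2}\|\hat z_k-z_{k-1}\|_{S_{k-1}}^2$, absorb the $x$-part of that term using $\eta\leqslant\delta/(4L)$, and observe that the leftover $y$-sum is $O(1)$ in $N$ thanks to the $\eta/k$ decay; the telescoping term $D^2\,\mathrm{tr}(C_{N-1})\leqslant 2dD^2\delta N^{\alpha}$ and the denominator-change term handled via $y(1/x-1/y)^2\leqslant(y-x)/\delta^2$ are exactly the paper's Lemmas \ref{lemma-aeg-oneside-3} and \ref{lemma-aeg-4}. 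Your split of $\hat x_t-x_t$ attaches $A_{t-1}^{-1}$ rather than $C_{t-1}^{-1}$ to the gradient difference; since each diagonal entry of $C_{t-1}$ exceeds the corresponding entry of $A_{t-1}$ by at most $\|\hat g_{t}\|_{\infty}\leqslant\delta$, this costs only a constant factor and is immaterial.

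The one genuine gap is the noise accounting. You bound the variance contribution by $O(\sigma^2/(m\delta))$ \emph{per iteration}; summing over $N$ iterations and dividing by $N$ then leaves a term of order $\sigma^2/(m\delta)$ that does not decay with $N$. The theorem, however, places the variance inside the $O(1/N)$ bracket as $\frac{50\delta\sigma^2/m}{N}$, which requires that the \emph{total} noise over all $N$ iterations be $O(\delta\sigma^2/m+d\delta N^{\alpha})$ rather than $O(N\sigma^2/(m\delta))$. The paper gets this from Lemma \ref{lemma-aeg-5} (attributed to \cite{duchi2011adaptive}), which exploits the fact that the weights $H_{k-1}^{-1}$ and $S_{k-1}^{-1}$ shrink as the cumulative gradients grow, so that $\sum_{k=1}^{N}\bigl(\|\varepsilon_{k-1}\|_{H_{k-1}^{-1}}^2+\|\hat\varepsilon_{k-1}\|_{S_{k-1}^{-1}}^2\bigr)$ is controlled by the bounded-cumulative-gradient hypothesis instead of accumulating linearly in $N$. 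As written, your argument proves only a weaker statement with an extra non-vanishing $O(\sigma^2/(m\delta))$ term; replacing the per-iteration variance bound with this cumulative bound is what closes the proof of the inequality as stated. Everything else in your outline goes through.
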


\subsection{Proof of Theorem \ref{thm-seg}}
We recall that in the $t$-th iteration of Stochastic Extra-Gradient (SEG) algorithm with batch size, we have the following updates:
\begin{equation*}
\begin{aligned}
\hat{z}_t &=\Pi_{\mZ} \left[z_{t-1}+\eta\cdot\frac{1}{m_t}\sum_{i=1}^{m_t} V(z_{t-1}; \xi_{t-1}^{i} )\right]:=\Pi_{\mZ} \left[z_{t-1}+\eta\cdot g_{t-1}\right]\\
z_{t} &=\Pi_{\mZ} \left[z_{t-1}+\eta\cdot\frac{1}{m_t}\sum_{i=1}^{m_t} V(\hat{z}_t; \hat{\xi}_{t-1}^{i} )\right]:= \Pi_{\mZ} \left[z_{t-1}+\eta\cdot\hat{g}_{t-1}\right]
\end{aligned}
\end{equation*}
Before we start our proof, we introduce two simple properties of the projection operation $\Pi_{\mZ}$ as follows, where $\mZ\subseteq \mathbb{R}^d$ is closed and convex.

\begin{Lemma}\label{lemma-proj}~\\
(1) $\|x-\Pi_{\mZ}(x)\|^2+\|\Pi_{\mZ}(x)-z\|^2\geqslant \|x-z\|^2$ and $\langle \Pi_{\mZ}(x)-x, \Pi_{\mZ}(x) - z\rangle\leqslant 0$ holds for $\forall x\in\mathbb{R}^d, z\in\mZ$. Actually, the first inequality is a simple extension of the second one.\\
(2) The projection operator $\Pi_{\mZ}$ is a compression, which means for $\forall x,y\in\mathbb{R}^d$, it holds that:
\[\|\Pi_{\mZ}(x)-\Pi_{\mZ}(y)\|_2\leqslant \|x-y\|_2.\]
\end{Lemma}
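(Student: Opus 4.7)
The plan is to prove both parts from the variational characterization of the projection: since $\mathcal{Z}$ is closed and convex, $\Pi_{\mathcal{Z}}(x)$ is the unique minimizer over $\mathcal{Z}$ of the strongly convex function $u \mapsto \tfrac{1}{2}\|x-u\|_2^2$, and first-order optimality yields the standard obtuse-angle (variational) inequality
\[\langle x - \Pi_{\mathcal{Z}}(x),\, z - \Pi_{\mathcal{Z}}(x)\rangle \leqslant 0 \quad \text{for all } z \in \mathcal{Z}.\]
This is exactly the second inequality of part (1) after rearranging signs. I would justify the optimality condition by observing that for $z\in\mathcal{Z}$ and $t\in[0,1]$ the convex combination $(1-t)\Pi_{\mathcal{Z}}(x) + t z$ lies in $\mathcal{Z}$, so the derivative at $t=0^+$ of $\|x - (1-t)\Pi_{\mathcal{Z}}(x) - t z\|_2^2$ is nonnegative; this derivative is exactly $2\langle \Pi_{\mathcal{Z}}(x)-x, z - \Pi_{\mathcal{Z}}(x)\rangle$.

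For the first inequality of part (1), I would simply expand
\[\|x - z\|_2^2 = \|x - \Pi_{\mathcal{Z}}(x)\|_2^2 + \|\Pi_{\mathcal{Z}}(x) - z\|_2^2 + 2\langle x - \Pi_{\mathcal{Z}}(x), \Pi_{\mathcal{Z}}(x) - z\rangle,\]
and note that the cross term is nonpositive by the obtuse-angle inequality just derived. Dropping it gives the claimed Pythagorean-type bound, which makes clear that the first inequality is indeed a direct consequence of the second, as the statement remarks.

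For part (2), the strategy is to apply the obtuse-angle inequality twice, once with $x$ and the test point $\Pi_{\mathcal{Z}}(y) \in \mathcal{Z}$, and once with $y$ and the test point $\Pi_{\mathcal{Z}}(x) \in \mathcal{Z}$. Summing the two inequalities and grouping produces
\[\langle (x - y) - (\Pi_{\mathcal{Z}}(x) - \Pi_{\mathcal{Z}}(y)),\, \Pi_{\mathcal{Z}}(x) - \Pi_{\mathcal{Z}}(y)\rangle \geqslant 0,\]
equivalently $\|\Pi_{\mathcal{Z}}(x) - \Pi_{\mathcal{Z}}(y)\|_2^2 \leqslant \langle x - y, \Pi_{\mathcal{Z}}(x) - \Pi_{\mathcal{Z}}(y)\rangle$. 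A single application of Cauchy--Schwarz on the right-hand side, followed by dividing both sides by $\|\Pi_{\mathcal{Z}}(x) - \Pi_{\mathcal{Z}}(y)\|_2$ (the degenerate case where this norm is zero is trivial), yields nonexpansiveness. There is no real obstacle here; the only subtlety worth flagging is that well-definedness of $\Pi_{\mathcal{Z}}$ requires closedness and convexity of $\mathcal{Z}$, which the paper already imposes implicitly through Assumption \ref{assumption-1}(4).
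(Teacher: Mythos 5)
Your derivation of the obtuse-angle (variational) inequality and your proof of part (2) are correct and standard; note that the paper never actually proves this lemma (it is cited as a known fact and then used), so the real benchmark is the standard projection theorem. The genuine error is the sign of the cross term in your argument for the first inequality of part (1). The obtuse-angle inequality you derived reads $\langle x-\Pi_{\mZ}(x),\, z-\Pi_{\mZ}(x)\rangle\leqslant 0$, hence the cross term in your expansion satisfies
\[
\langle x-\Pi_{\mZ}(x),\, \Pi_{\mZ}(x)-z\rangle = -\langle x-\Pi_{\mZ}(x),\, z-\Pi_{\mZ}(x)\rangle \geqslant 0,
\]
i.e.\ it is nonnegative, not nonpositive as you claim. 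Plugging this into
\[
\|x-z\|_2^2=\|x-\Pi_{\mZ}(x)\|_2^2+\|\Pi_{\mZ}(x)-z\|_2^2+2\langle x-\Pi_{\mZ}(x),\,\Pi_{\mZ}(x)-z\rangle
\]
yields $\|x-z\|_2^2\geqslant\|x-\Pi_{\mZ}(x)\|_2^2+\|\Pi_{\mZ}(x)-z\|_2^2$, which is the \emph{reverse} of the inequality printed in the lemma. Moreover, the printed inequality is false in general: with $\mZ=\{(a,b):b\leqslant 0\}\subseteq\mathbb{R}^2$, $x=(0,1)$, $z=(2,-1)$, one has $\Pi_{\mZ}(x)=(0,0)$ and $\|x-\Pi_{\mZ}(x)\|_2^2+\|\Pi_{\mZ}(x)-z\|_2^2=6<8=\|x-z\|_2^2$.

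In other words, your sign slip reproduces a misprint in the paper's statement rather than the true fact. The direction that is both true and actually used downstream is $\|x-\Pi_{\mZ}(x)\|_2^2+\|\Pi_{\mZ}(x)-z\|_2^2\leqslant\|x-z\|_2^2$: this is precisely how ``Property (1)'' is invoked in the first display of the proof of Lemma \ref{lemma-seg}, where with $x=z_{k-1}+\eta\hat{g}_{k-1}$ and $\Pi_{\mZ}(x)=z_k$ the paper bounds $\|z_k-z^*\|^2\leqslant\|x-z^*\|^2-\|x-z_k\|^2$. Once you flip the sign of the cross term (and hence the direction of the conclusion), your expansion gives a complete proof of the inequality in the form the paper needs, and combined with your correct derivations of the variational inequality and of nonexpansiveness, the whole lemma is established. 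A minor side remark: closedness and convexity of $\mZ$ are imposed in the sentence introducing the lemma, not by Assumption \ref{assumption-1}(4), which only bounds the trajectory.
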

Now, we can start our formal proof. 
\begin{Lemma}\label{lemma-seg}
\begin{equation*}
\begin{aligned}
\|z_k-z^*\|^2\leqslant&~ \|z_{k-1}-z^*\|^2-(1-6\eta^2 L^2)\cdot\|z_{k-1}-\hat{z}_k\|^2-\|z_k-\hat{z}_k\|^2\\
&+6\eta^2(\|\varepsilon_{k-1}\|^2+\|\hat{\varepsilon}_{k-1}\|^2)+2\langle \hat{z}_k-z^*, \eta\cdot\hat{\varepsilon}_{k-1}\rangle.
\end{aligned}
\end{equation*}
Here, $\varepsilon_{k-1}=\frac{1}{m_k}\sum_{i=1}^{m_k}V(z_{k-1};\xi_{k-1}^i)-V(z_{k-1})$ and $\hat{\varepsilon}_{k-1}=\frac{1 }{m_k}\sum_{i=1}^{m_k}V(\hat{z}_k;\hat{\xi}_k^i)-V(\hat{z}_k)$.
\end{Lemma}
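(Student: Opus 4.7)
The plan is to apply the Pythagorean form of the projection inequality (Lemma \ref{lemma-proj}(1)) at both projections in the iteration, combine the resulting inequalities, invoke the MVI condition to kill the main inner product, and finally use non-expansiveness of $\Pi_{\mZ}$ to bound the residual gradient-error cross term without introducing an unwanted $\|\hat z_k - z_k\|^2$.

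First I would use the projection identity at $z_k = \Pi_{\mZ}(z_{k-1}+\eta\hat g_{k-1})$ with $v=z^*$, expand both squared norms, and cancel the $\eta^2\|\hat g_{k-1}\|^2$ contributions to get
\[
\|z_k-z^*\|^2\le \|z_{k-1}-z^*\|^2 - \|z_{k-1}-z_k\|^2 + 2\eta\langle\hat g_{k-1},\, z_k-z^*\rangle.
\]
The same step at $\hat z_k = \Pi_{\mZ}(z_{k-1}+\eta g_{k-1})$ with $v=z_k$ (which lies in $\mZ$) yields
\[
\|\hat z_k - z_k\|^2 \le \|z_{k-1}-z_k\|^2 - \|z_{k-1}-\hat z_k\|^2 + 2\eta\langle g_{k-1},\, \hat z_k - z_k\rangle.
\]
Adding these two inequalities, the $\pm\|z_{k-1}-z_k\|^2$ terms cancel exactly, producing the $-\|z_{k-1}-\hat z_k\|^2-\|\hat z_k - z_k\|^2$ pattern that appears in the target inequality.

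Next I would regroup the two linear terms through the elementary identity
\[
2\eta\langle\hat g_{k-1},\, z_k-z^*\rangle + 2\eta\langle g_{k-1},\, \hat z_k - z_k\rangle = 2\eta\langle g_{k-1}-\hat g_{k-1},\, \hat z_k - z_k\rangle + 2\eta\langle\hat g_{k-1},\, \hat z_k - z^*\rangle.
\]
Writing $\hat g_{k-1}=V(\hat z_k)+\hat\varepsilon_{k-1}$ in the last inner product and applying Assumption \ref{assumption-mvi} in the form $\langle V(\hat z_k),\hat z_k - z^*\rangle\le 0$ eliminates the deterministic part, leaving only the stochastic residual $2\eta\langle\hat\varepsilon_{k-1},\hat z_k - z^*\rangle$ that appears in the lemma.

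The main obstacle is controlling the cross term $2\eta\langle g_{k-1}-\hat g_{k-1},\, \hat z_k - z_k\rangle$ without spoiling the coefficient $-1$ in front of $\|\hat z_k - z_k\|^2$: any direct use of Young's inequality would add a positive multiple of $\|\hat z_k - z_k\|^2$ back on the right. The trick is to use non-expansiveness of the projection (Lemma \ref{lemma-proj}(2)) to bound the factor $\|\hat z_k - z_k\|$ in Cauchy-Schwarz:
\[
\|\hat z_k - z_k\| = \|\Pi_{\mZ}(z_{k-1}+\eta g_{k-1})-\Pi_{\mZ}(z_{k-1}+\eta\hat g_{k-1})\| \le \eta\|g_{k-1}-\hat g_{k-1}\|,
\]
so $2\eta\langle g_{k-1}-\hat g_{k-1},\hat z_k - z_k\rangle\le 2\eta^2\|g_{k-1}-\hat g_{k-1}\|^2$. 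Writing $g_{k-1}-\hat g_{k-1} = (V(z_{k-1})-V(\hat z_k)) + \varepsilon_{k-1} - \hat\varepsilon_{k-1}$, applying $\|a+b+c\|^2\le 3(\|a\|^2+\|b\|^2+\|c\|^2)$, and using $L$-Lipschitz continuity of $V$ bounds the cross term by $6\eta^2L^2\|z_{k-1}-\hat z_k\|^2 + 6\eta^2(\|\varepsilon_{k-1}\|^2+\|\hat\varepsilon_{k-1}\|^2)$. Absorbing the Lipschitz contribution into the coefficient of $\|z_{k-1}-\hat z_k\|^2$ turns it into $-(1-6\eta^2L^2)$ and yields exactly the claimed inequality.
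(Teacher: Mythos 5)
Your proposal is correct and follows essentially the same route as the paper's proof: both rest on the projection inequality at the two updates, the MVI condition to discard $\langle V(\hat z_k),\hat z_k-z^*\rangle$, and non-expansiveness of $\Pi_{\mZ}$ combined with Cauchy--Schwarz and $L$-Lipschitzness to bound the cross term by $6\eta^2L^2\|z_{k-1}-\hat z_k\|^2+6\eta^2(\|\varepsilon_{k-1}\|^2+\|\hat\varepsilon_{k-1}\|^2)$. The only cosmetic difference is that you invoke the Pythagorean form of the projection inequality a second time at $\hat z_k$ with reference point $z_k$ and add the two inequalities, whereas the paper expands $\|z_{k-1}-z_k\|^2$ algebraically and applies the variational form $\langle \hat z_k-z_k,\hat z_k-z_{k-1}-\eta g_{k-1}\rangle\leqslant 0$ directly; these are equivalent.
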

\begin{proof}[Proof of Lemma \ref{lemma-seg}] According to the Property (1) of Lemma \ref{lemma-proj}, we know that:
\begin{equation*}\label{eqn-lemma-seg-1}
\begin{aligned}
\|z_k-z^*\|^2 &= \|\Pi_{\mZ}[z_{k-1}+\eta\cdot \hat{g}_{k-1}]-z^*\|^2 \leqslant \|z_{k-1}+\eta\cdot\hat{g}_{k-1}-z^*\|^2-\|z_{k-1}+\eta\cdot \hat{g}_{k-1}-z_k\|^2\\
&= \|z_{k-1}-z^*\|^2-\|z_{k-1}-z_k\|^2+2\langle \eta\cdot \hat{g}_{k-1}, z_k-z^*\rangle\\
&= \|z_{k-1}-z^*\|^2-\|z_{k-1}-\hat{z}_k+\hat{z}_k-z_k\|^2+2\langle \eta\cdot \hat{g}_{k-1}, \hat{z}_k-z^*\rangle+2\langle \eta\cdot \hat{g}_{k-1}, z_k-\hat{z}_k\rangle\\
&= \|z_{k-1}-z^*\|^2-\|z_{k-1}-\hat{z}_k\|^2-\|\hat{z}_k-z_k\|^2-2\langle z_{k-1}-\hat{z}_k, \hat{z}_k-z_k\rangle \\
&~~~~+2\langle \eta\cdot(V(\hat{z}_k)+\hat{\varepsilon}_{k-1}),\hat{z}_k-z^*\rangle + 2\langle \eta\cdot\hat{g}_{k-1}, z_k-\hat{z}_k\rangle\\
&= \|z_{k-1}-z^*\|^2-\|z_{k-1}-\hat{z}_k\|^2-\|\hat{z}_k-z_k\|^2+2\langle \hat{z}_k-z_k, \hat{z}_k-z_{k-1}-\eta\cdot\hat{g}_{k-1}\rangle\\
&~~~~+ 2\langle \eta\cdot V(\hat{z}_k),\hat{z}_k-z^*\rangle + 2\langle \eta\cdot\hat{\varepsilon}_{k-1},\hat{z}_k-z^*\rangle
\end{aligned}
\end{equation*}
Since $z^*$ is a solution of MVI inequality, we know that for $\forall z\in\mZ$, it holds that $\langle V(z), z-z^*\rangle\leqslant 0$,. Therefore:
\[\langle V(\hat{z}_k), \hat{z}_k-z^*\rangle\leqslant 0.\]
Also, since $\hat{z}_k=\Pi_{\mZ}[z_{k-1}+\eta\cdot g_{k-1}]$, according to the property (1) of Lemma \ref{lemma-proj}, we have:
\[\langle \hat{z}_k - z_k, \hat{z}_k-(z_{k-1}+\eta\cdot g_{k-1})\rangle\leqslant 0.\]
After combining all the three inequalities above, we have:
\begin{equation}\label{eqn-lemma-seg-2}
\begin{aligned}
\|z_k-z^*\|^2 &\leqslant \|z_{k-1}-z^*\|^2-\|z_{k-1}-\hat{z}_k\|^2-\|\hat{z}_k-z_k\|^2 +2\langle \eta\cdot\hat{\varepsilon}_{k-1},\hat{z}_k-z^*\rangle\\
&~~~~+2\langle \hat{z}_k-z_k, \hat{z}_k-z_{k-1}-\eta\cdot g_{k-1}+\eta\cdot g_{k-1}-\eta\cdot \hat{g}_{k-1}\rangle\\
& \leqslant \|z_{k-1}-z^*\|^2-\|z_{k-1}-\hat{z}_k\|^2-\|\hat{z}_k-z_k\|^2 +2\langle \eta\cdot\hat{\varepsilon}_{k-1},\hat{z}_k-z^*\rangle\\
&~~~~+2\langle \hat{z}_k-z_k, \eta\cdot g_{k-1}-\eta\cdot \hat{g}_{k-1}\rangle\\
&\overset{(a)}{\leqslant} \|z_{k-1}-z^*\|^2-\|z_{k-1}-\hat{z}_k\|^2-\|\hat{z}_k-z_k\|^2+2\eta\|\hat{z}_k-z_k\|\cdot\|g_{k-1}-\hat{g}_{k-1}\|\\
&~~~~+2\langle \eta\cdot\hat{\varepsilon}_{k-1},\hat{z}_k-z^*\rangle\\
&\overset{(b)}{\leqslant} \|z_{k-1}-z^*\|^2-\|z_{k-1}-\hat{z}_k\|^2-\|\hat{z}_k-z_k\|^2+2\eta^2\|g_{k-1}-\hat{g}_{k-1}\|^2\\
&~~~~+2\langle \eta\cdot\hat{\varepsilon}_{k-1},\hat{z}_k-z^*\rangle
\end{aligned}
\end{equation}
Here, $(a)$ holds by Cauchy-Schwarz inequality, and $(b)$ holds because by using the property (2) of Lemma \ref{lemma-proj}, we know that 
\begin{equation*}
\begin{aligned}
\|\hat{z}_k-z_k\|&=\|\Pi_{\mZ}[z_{k-1}+\eta\cdot g_{k-1}]-\Pi_{\mZ}[z_{k-1}+\eta\cdot\hat{g}_{k-1}]\|\leqslant \|(z_{k-1}+\eta\cdot g_{k-1})-(z_{k-1}+\eta\cdot\hat{g}_{k-1})\|\\
&=\eta\cdot\|g_{k-1}-\hat{g}_{k-1}\|.
\end{aligned}
\end{equation*}
Now, we focus on dealing with $\|g_{k-1}-\hat{g}_{k-1}\|$. In fact:
\begin{equation}\label{eqn-lemma-seg-3}
\begin{aligned}
\|g_{k-1}-\hat{g}_{k-1}\|^2 &\overset{(c)}{=} \|V(z_{k-1})-V(\hat{z}_k)+\varepsilon_{k-1}-\hat{\varepsilon}_{k-1}\|^2 \\
&\overset{(d)}{\leqslant} 3\left(\|V(z_{k-1})-V(\hat{z}_k)\|^2+\|\varepsilon_{k-1}\|^2+\|\hat{\varepsilon}_{k-1}\|^2\right)\\
&\overset{(e)}{\leqslant} 3L^2\|z_{k-1}-\hat{z}_k\|^2 + 3(\|\varepsilon_{k-1}\|^2+\|\hat{\varepsilon}_{k-1}\|^2)
\end{aligned}
\end{equation}
Here, $(c)$ holds by the definition of $\varepsilon_{k-1}, \hat{\varepsilon}_{k-1}$. $(d)$ holds because by Cauchy Inequality, $\|x+y+z\|^2\leqslant 3(\|x\|^2+\|y\|^2+\|z\|^2)$ always holds. $(e)$ holds because we've assumed that $V(z)$ is $L$-Lipschitz continuous under $\|\cdot\|_2$ norm. Combine Equation (\ref{eqn-lemma-seg-2}) and (\ref{eqn-lemma-seg-3}), we obtain that:
\begin{equation*}
\begin{aligned}
\|z_k-z^*\|^2&\leqslant \|z_{k-1}-z^*\|^2-(1-6\eta^2L^2)\|z_{k-1}-\hat{z}_k\|^2-\|\hat{z}_k-z_k\|^2+6\eta^2\cdot(\|\varepsilon_{k-1}\|^2+\|\hat{\varepsilon}_{k-1}\|^2)\\
&~~~~+2\langle \eta\cdot\hat{\varepsilon}_{k-1},\hat{z}_k-z^*\rangle,
\end{aligned}
\end{equation*}
which is exactly what we want to prove in this lemma.
\end{proof}
Now, we come back to our Theorem \ref{thm-seg}. Notice that
\begin{equation}\label{eqn-lemma-seg-4}
\begin{aligned}
r_{\eta}(z_k)^2 &= \|z_k-\Pi_{\mZ}[z_k+\eta\cdot V(z_k)]\|^2 = \|z_k-\hat{z}_{k+1}+\hat{z}_{k+1}-\Pi_{\mZ}[z_k+\eta\cdot V(z_k)]\|^2\\
&\overset{(a)}{\leqslant} 2\left(\|z_k-\hat{z}_{k+1}\|^2+\|\hat{z}_{k+1}-\Pi_{\mZ}[z_k+\eta\cdot V(z_k)]\|^2\right)\\
&= 2\left(\|z_k-\hat{z}_{k+1}\|^2+\left\|\Pi_{\mZ}[z_k+\eta\cdot g_k]-\Pi_{\mZ}[z_k+\eta\cdot V(z_k)]\right\|^2\right)\\
&\overset{(b)}{\leqslant} 2\|z_k-\hat{z}_{k+1}\|^2+2\eta^2\cdot\|\varepsilon_k\|^2
\end{aligned}
\end{equation}
Here, $(a)$ comes from Cauchy Inequality and $(b)$ holds because of the property (2) of Lemma \ref{lemma-proj}.

\begin{proof}[Proof of Theorem \ref{thm-seg}] Put $k=1,2,\ldots, N$ in Lemma \ref{lemma-seg} and add them up, and we obtain that:
\begin{equation*}
\begin{aligned}
\|z_N-z^*\|^2&\leqslant \|z_0-z^*\|^2-(1-6\eta^2L^2)\cdot\sum_{k=0}^{N-1}\|z_k-\hat{z}_{k+1}\|^2-\sum_{k=1}^{N}\|z_k-\hat{z}_k\|^2\\
&~~~~+6\eta^2\cdot\sum_{k=0}^{N-1}\left(\|\varepsilon_{k}\|^2+\|\hat{\varepsilon}_{k}\|^2\right)+2\sum_{k=1}^{N}\langle \hat{z}_k-z^*, \eta\cdot\hat{\varepsilon}_{k-1}\rangle.
\end{aligned}
\end{equation*}
Therefore, after combining with Equation (\ref{eqn-lemma-seg-4}), we have:
\begin{equation*}
\begin{aligned}
\sum_{k=0}^{N-1}r_{\eta}(z_k)^2 &\leqslant 2\sum_{k=0}^{N-1}\|z_k-\hat{z}_{k+1}\|^2 +2\eta^2\cdot\sum_{k=0}^{N-1}\|\varepsilon_k\|^2\\
&\leqslant \frac{2}{1-6\eta^2L^2}\left(\|z_0-z^*\|^2+6\eta^2\cdot\sum_{k=0}^{N-1}\left(\|\varepsilon_{k}\|^2+\|\hat{\varepsilon}_{k}\|^2\right)+2\sum_{k=1}^{N}\langle \hat{z}_k-z^*, \eta\cdot\hat{\varepsilon}_{k-1}\rangle\right)\\
&~~~~+2\eta^2\cdot\sum_{k=0}^{N-1}\|\varepsilon_k\|^2
\end{aligned}
\end{equation*}
Since $\eta\leqslant\frac{1}{4L}$, we have $1-6\eta^2L^2\geqslant \frac{1}{2}$. After taking expectation on the inequality above, it holds that:
\begin{equation*}
\begin{aligned}
\frac{1}{N}\sum_{k=0}^{N-1}\mathbb{E}[r_{\eta}(z_k)^2]&\leqslant \frac{4\|z_0-z^*\|^2}{N}+\frac{48\eta^2}{N}\sum_{k=0}^{N-1}\frac{\sigma^2}{m_k}+\frac{2\eta^2}{N}\sum_{k=0}^{N-1}\frac{\sigma^2}{m_k}\\
&=\frac{4\|z_0-z^*\|^2}{N}+\frac{50\eta^2}{N}\sum_{k=0}^{N-1}\frac{\sigma^2}{m_k}
\end{aligned}
\end{equation*}
which comes to our conclusion. Here, we use the fact that $\mathbb{E}\|\varepsilon_k\|^2\leqslant\frac{\sigma^2}{m_k}, \mathbb{E}\|\hat{\varepsilon}_k\|^2\leqslant\frac{\sigma^2}{m_k}$ and $\mathbb{E}\langle\hat{z}_k-z^*, \eta\cdot\hat{\varepsilon}_{k-1}\rangle=0$.
\end{proof}

\subsection{Proof of Theorem \ref{thm-aeg}}
We recall that in the $t$-th iteration of Adaptive Extra-Gradient, our update is as follows:
\begin{equation*}
\begin{aligned}
\hat{z}_t &= z_{t-1} + \eta\cdot H_{t-1}^{-1} g_{t-1}\\
z_t &= z_{t-1} + \eta\cdot S_{t-1}^{-1} \hat{g}_{t}.
\end{aligned}
\end{equation*}
Now we begin our proof by introducing several lemmas.
\begin{Lemma}\label{lemma-aeg-1}
\begin{equation*}
\|z_k-z^*\|_{s_{k-1}}^2\leqslant\|z_{k-1}-z^*\|_{s_{k-1}}^2 - \|z_{k-1}-\hat{z}_k\|_{s_{k-1}}^2 + \|z_k-\hat{z}_k\|_{s_{k-1}}^2 +2\langle \eta\cdot \hat{\varepsilon}_k,\hat{z}_k-z^*\rangle.
\end{equation*}
Here, $\hat{\varepsilon}_k=\hat{g}_k-V(\hat{z}_k)=\frac{1}{m}\sum_{i=1}^{m}V(\hat{z}_k;\hat{\xi}_i)-V(\hat{z}_k)$.
\end{Lemma}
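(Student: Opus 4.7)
The plan is to mimic the proof of Lemma \ref{lemma-amsgrad-2} but exploit the fact that AEG has no momentum: the real update directly uses $\hat g_k$ in place of $\hat m_t$, so the $\beta_{1t}$-dependent residual $8\eta\beta_{1t}GD$ disappears and the bound collapses to the four clean terms displayed in the lemma. Throughout I work in the $S_{k-1}$-inner product $\langle a,b\rangle_{S_{k-1}} := a^\top S_{k-1} b$, which is legitimate because $S_{k-1}\succeq \delta I \succ 0$.

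First I would use the update identity $z_k - z_{k-1} = \eta\,S_{k-1}^{-1}\hat g_k$, equivalently $S_{k-1}(z_k-z_{k-1}) = \eta\,\hat g_k$, together with the three-point identity $\|a\|_{S_{k-1}}^2 - \|b\|_{S_{k-1}}^2 = \langle a-b,\,a+b\rangle_{S_{k-1}}$ applied to $a=z_k-z^*$, $b=z_{k-1}-z^*$, to get
\[
\|z_k-z^*\|_{S_{k-1}}^2 = \|z_{k-1}-z^*\|_{S_{k-1}}^2 - \|z_k-z_{k-1}\|_{S_{k-1}}^2 + 2\langle \eta\,\hat g_k,\; z_k-z^*\rangle.
\]
Then I split $z_k-z^* = (z_k-\hat z_k)+(\hat z_k-z^*)$ and decompose $\hat g_k = V(\hat z_k)+\hat\varepsilon_k$. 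By Assumption \ref{assumption-mvi} applied at the shadow iterate $\hat z_k \in \mZ$, one has $\langle V(\hat z_k),\,\hat z_k-z^*\rangle\leqslant 0$, so the ``noiseless MVI cross term'' drops, leaving
\[
\|z_k-z^*\|_{S_{k-1}}^2 \leqslant \|z_{k-1}-z^*\|_{S_{k-1}}^2 - \|z_k-z_{k-1}\|_{S_{k-1}}^2 + 2\langle \eta\,\hat g_k,\,z_k-\hat z_k\rangle + 2\langle \eta\,\hat\varepsilon_k,\,\hat z_k-z^*\rangle.
\]

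The main (and only nontrivial) step is converting the two middle terms into $-\|z_{k-1}-\hat z_k\|_{S_{k-1}}^2+\|z_k-\hat z_k\|_{S_{k-1}}^2$. Using $\eta\,\hat g_k = S_{k-1}(z_k-z_{k-1})$ I rewrite $2\langle \eta\,\hat g_k,\,z_k-\hat z_k\rangle = 2\langle z_k-z_{k-1},\,z_k-\hat z_k\rangle_{S_{k-1}}$, and expand $z_k-z_{k-1} = (z_k-\hat z_k)+(\hat z_k-z_{k-1})$ inside $-\|z_k-z_{k-1}\|_{S_{k-1}}^2$; the cross terms in the expansion cancel exactly against the $2\langle z_k-z_{k-1},\,z_k-\hat z_k\rangle_{S_{k-1}}$ contribution, yielding the identity
\[
-\|z_k-z_{k-1}\|_{S_{k-1}}^2 + 2\langle z_k-z_{k-1},\,z_k-\hat z_k\rangle_{S_{k-1}} = \|z_k-\hat z_k\|_{S_{k-1}}^2 - \|\hat z_k-z_{k-1}\|_{S_{k-1}}^2.
\]
Substituting this into the previous display produces exactly the claimed bound. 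I expect this cancellation to be the subtlest piece, since it is tempting (but wrong) to try a comparison between the matrices $H_{k-1}$ (used for $\hat z_k$) and $S_{k-1}$ (used for $z_k$); the clean derivation above sidesteps that issue entirely by working in a single $S_{k-1}$-norm and only using $\hat z_k$ as an algebraic midpoint rather than as the output of a compatible proximal step.
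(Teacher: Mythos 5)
Your proposal is correct and follows essentially the same route as the paper's proof: the same expansion of $\|z_k-z^*\|_{S_{k-1}}^2$ using $z_k-z_{k-1}=\eta S_{k-1}^{-1}\hat g_k$, the same split of $z_k-z^*$ through the midpoint $\hat z_k$ with the cross terms cancelling to give $-\|z_{k-1}-\hat z_k\|_{S_{k-1}}^2+\|z_k-\hat z_k\|_{S_{k-1}}^2$, and the same use of the MVI condition at $\hat z_k$ to discard $\langle V(\hat z_k),\hat z_k-z^*\rangle$. Your packaged cancellation identity is just a cleaner statement of the chain of equalities the paper writes out term by term, and your observation that no $H_{k-1}$-versus-$S_{k-1}$ comparison is needed is consistent with what the paper actually does.
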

\begin{proof}[Proof of Lemma \ref{lemma-aeg-1}] 
According to our update rule, we know that:
\begin{equation*}
\begin{aligned}
\|z_k-z^*\|_{S_{k-1}}^2 &= \|z_{k-1}+\eta\cdot S_{k-1}^{-1}\hat{g}_k-z^*\|_{S_{k-1}}^2 \\
&= \|z_{k-1}+\eta\cdot S_{k-1}^{-1}\hat{g}_k-z^*\|_{S_{k-1}}^2 - \|z_{k-1}+\eta\cdot S_{k-1}^{-1}\hat{g}_k-z_k\|_{S_{k-1}}^2\\
&= \|z_{k-1}-z^*\|_{S_{k-1}}^2-\|z_{k-1}-z_k\|_{S_{k-1}}^{2} +2\langle \eta\cdot\hat{g}_k, z_k-z^*\rangle\\
&= \|z_{k-1}-z^*\|_{S_{k-1}}^2-\|z_{k-1}-\hat{z}_k+\hat{z}_k-z_k\|_{S_{k-1}}^2 +2\langle \eta\cdot\hat{g}_k, z_k-\hat{z}_k\rangle + 2\langle \eta\cdot\hat{g}_k, \hat{z}_k-z^*\rangle\\
&= \|z_{k-1}-z^*\|_{S_{k-1}}^2-\|z_{k-1}-\hat{z}_k\|_{S_{k-1}}^2 -\|\hat{z}_k-z_k\|_{S_{k-1}}^2-2\langle S_{k-1}(z_{k-1}-\hat{z}_k), \hat{z}_k-z_k\rangle \\
&~~~~+2\langle \eta\cdot\hat{g}_k, z_k-\hat{z}_k\rangle + 2\langle \eta\cdot\hat{g}_k, \hat{z}_k-z^*\rangle\\
&= \|z_{k-1}-z^*\|_{S_{k-1}}^2-\|z_{k-1}-\hat{z}_k\|_{S_{k-1}}^2 -\|\hat{z}_k-z_k\|_{S_{k-1}}^2 + 2\langle \eta\cdot\hat{g}_k, \hat{z}_k-z^*\rangle\\
&~~~~+ 2\langle z_k-\hat{z}_k, S_{k-1}(z_{k-1}-\hat{z}_k+\eta\cdot S_{k-1}^{-1}\hat{g}_k)\rangle\\
&= \|z_{k-1}-z^*\|_{S_{k-1}}^2-\|z_{k-1}-\hat{z}_k\|_{S_{k-1}}^2 -\|\hat{z}_k-z_k\|_{S_{k-1}}^2 + 2\langle \eta\cdot\hat{g}_k, \hat{z}_k-z^*\rangle\\
&~~~~+ 2\langle z_k-\hat{z}_k, S_{k-1}(z_k-\hat{z}_k)\rangle\\
&= \|z_{k-1}-z^*\|_{S_{k-1}}^2-\|z_{k-1}-\hat{z}_k\|_{S_{k-1}}^2 +\|\hat{z}_k-z_k\|_{S_{k-1}}^2 + 2\langle \eta\cdot\hat{g}_k, \hat{z}_k-z^*\rangle
\end{aligned}
\end{equation*}
Notice that $\hat{g}_k = V(\hat{z}_k)+\hat{\varepsilon}_k$. Since $z^*$ is a solution of MVI which means $\langle V(z), z-z^*\rangle\leqslant 0$ holds for $\forall z\in\mZ$, so $\langle V(\hat{z}_k), \hat{z}_k-z^*\rangle\leqslant 0$. Therefore:
\[\langle \eta\cdot\hat{g}_k, \hat{z}_k-z^*\rangle = \langle \eta\cdot(V(\hat{z}_k)+\hat{\varepsilon}_k), \hat{z}_k-z^*\rangle\leqslant\langle \eta\cdot\hat{\varepsilon}_k, \hat{z}_k-z^*\rangle.\]
Combine it with the inequality above, we obtain that:
\[\|z_k-z^*\|_{S_{k-1}}^2\leqslant\|z_{k-1}-z^*\|_{S_{k-1}}^2-\|z_{k-1}-\hat{z}_k\|_{S_{k-1}}^2 +\|\hat{z}_k-z_k\|_{S_{k-1}}^2 + 2\langle \eta\cdot\hat{\varepsilon}_k, \hat{z}_k-z^*\rangle, \]
which comes to our conclusion.
\end{proof}

In the lemma above, it's worth mentioned that the expectation of $\langle \eta\cdot\hat{\varepsilon}_k, \hat{z}_k-z^*\rangle$ is 0. In the next lemma, we are going to deal with the upper bound of $\|\hat{z}_k-z_k\|_{S_{k-1}}^2$. Before that, we notice the following fact:
\[H_0\preceq S_0\preceq H_1\preceq S_1\preceq\ldots\preceq H_N\preceq S_N.\]

\begin{Lemma}\label{lemma-aeg-2}
\begin{equation*}
\begin{aligned}
\|z_k-\hat{z}_k\|_{S_{k-1}}^2 &\leqslant \frac{6\eta^2 L^2}{\delta^2} \|z_{k-1}-\hat{z}_k\|_{S_{k-1}}^2 +6\eta^2\left(\|\varepsilon_{k-1}\|_{H_{k-1}^{-1}}^2+\|\hat{\varepsilon}_{k-1}\|_{S_{k-1}^{-1}}^2\right)\\
&~~~~+ 2\eta^2\cdot\|(S_{k-1}^{-1}-H_{k-1}^{-1})g_{k-1}\|_{S_{k-1}}^2
\end{aligned}
\end{equation*}
\end{Lemma}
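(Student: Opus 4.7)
The natural starting point is the update identity
\[z_k-\hat{z}_k = \eta\bigl(S_{k-1}^{-1}\hat{g}_{k-1}-H_{k-1}^{-1}g_{k-1}\bigr),\]
which I would rewrite in the ``add and subtract $S_{k-1}^{-1}g_{k-1}$'' form
\[z_k-\hat{z}_k \;=\; \eta\,S_{k-1}^{-1}\bigl(\hat{g}_{k-1}-g_{k-1}\bigr) \;+\; \eta\,\bigl(S_{k-1}^{-1}-H_{k-1}^{-1}\bigr)g_{k-1}.\]
Taking the squared $\|\cdot\|_{S_{k-1}}$ norm and applying $\|a+b\|^2\leqslant 2\|a\|^2+2\|b\|^2$ isolates the preconditioner-mismatch term $2\eta^2\|(S_{k-1}^{-1}-H_{k-1}^{-1})g_{k-1}\|_{S_{k-1}}^2$ that already appears on the right-hand side of the lemma, and leaves me to control $2\eta^2\|\hat{g}_{k-1}-g_{k-1}\|_{S_{k-1}^{-1}}^2$, using the fact that $\|\eta S_{k-1}^{-1}w\|_{S_{k-1}}^2 = \eta^2\|w\|_{S_{k-1}^{-1}}^2$ for any $w$.

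For this remaining term I would write each stochastic gradient as its true value plus zero-mean noise, $\hat{g}_{k-1}=V(\hat{z}_k)+\hat{\varepsilon}_{k-1}$ and $g_{k-1}=V(z_{k-1})+\varepsilon_{k-1}$, so that
\[\hat{g}_{k-1}-g_{k-1} \;=\; \bigl(V(\hat{z}_k)-V(z_{k-1})\bigr)+\hat{\varepsilon}_{k-1}-\varepsilon_{k-1}.\]
The three-term Cauchy inequality $\|x+y+z\|_C^2\leqslant 3(\|x\|_C^2+\|y\|_C^2+\|z\|_C^2)$ with $C=S_{k-1}^{-1}$ splits this into a deterministic Lipschitz contribution and the two noise contributions $\|\hat{\varepsilon}_{k-1}\|_{S_{k-1}^{-1}}^2$ and $\|\varepsilon_{k-1}\|_{S_{k-1}^{-1}}^2$; the combined factor $2\eta^2\cdot 3=6\eta^2$ is exactly the prefactor in the statement.

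The statement records the $\varepsilon_{k-1}$ term in the $H_{k-1}^{-1}$ rather than the $S_{k-1}^{-1}$ norm, so at this point I would invoke the nested monotonicity $\delta I\preceq H_0\preceq S_0\preceq H_1\preceq S_1\preceq\ldots$ highlighted right before the lemma to upgrade $S_{k-1}^{-1}\preceq H_{k-1}^{-1}$, giving $\|\varepsilon_{k-1}\|_{S_{k-1}^{-1}}^2\leqslant\|\varepsilon_{k-1}\|_{H_{k-1}^{-1}}^2$. For the Lipschitz piece, I would first use $S_{k-1}^{-1}\preceq\delta^{-1}I$ to pass to the Euclidean norm, then $L$-Lipschitzness of $V$, and then $\delta I\preceq S_{k-1}$ back in the other direction to re-enter $\|\cdot\|_{S_{k-1}}$:
\[\|V(\hat{z}_k)-V(z_{k-1})\|_{S_{k-1}^{-1}}^2 \;\leqslant\; \tfrac{1}{\delta}\|V(\hat{z}_k)-V(z_{k-1})\|_2^2 \;\leqslant\; \tfrac{L^2}{\delta}\|\hat{z}_k-z_{k-1}\|_2^2 \;\leqslant\; \tfrac{L^2}{\delta^2}\|\hat{z}_k-z_{k-1}\|_{S_{k-1}}^2,\]
producing exactly the $\frac{6\eta^2 L^2}{\delta^2}$ coefficient after multiplying by the $6\eta^2$ above. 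Assembling the three pieces yields the inequality.

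I do not foresee a real obstacle here: this is extra-gradient bookkeeping entirely analogous to Lemma \ref{lemma-amsgrad-3}, and the only subtlety is matching the correct preconditioner on each noise piece, for which the right move is always to ``swap down'' $S_{k-1}^{-1}\preceq H_{k-1}^{-1}$ and never the reverse. Unlike the AMSGrad-EG analysis, no momentum term appears in AEG, so there is no $\beta_{1t}$-dependent residual to carry along and the bound is genuinely cleaner.
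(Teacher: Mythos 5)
Your proposal is correct and follows essentially the same route as the paper's proof: the same add-and-subtract decomposition of $z_k-\hat{z}_k$, the same two-term and three-term Cauchy inequalities, the same use of $S_{k-1}^{-1}\preceq H_{k-1}^{-1}$ on the $\varepsilon_{k-1}$ piece, and the same $\delta$-sandwich for the Lipschitz term. The only discrepancy is notational (you write $\hat{g}_{k-1}$ where the paper's proof recap uses $\hat{g}_t$ for the shadow-point gradient), which does not affect the argument.
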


\begin{proof}[Proof of Lemma \ref{lemma-aeg-2}]
Since $\hat{z}_k=z_{k-1}+\eta\cdot H_{k-1}^{-1}g_{k-1}$ and $z_k=z_{k-1}+\eta\cdot S_{k-1}^{-1}\hat{g}_k$, we have:
\begin{equation*}
\begin{aligned}
\|z_k-\hat{z}_k\|_{S_{k-1}}^2 &= \eta^2\cdot\|S_{k-1}^{-1}\hat{g}_k-H_{k-1}^{-1}g_{k-1}\|_{S_{k-1}}^2 = \eta^2\cdot\|S_{k-1}^{-1}\hat{g}_k-S_{k-1}^{-1}g_{k-1}+S_{k-1}^{-1}g_{k-1}-H_{k-1}^{-1}g_{k-1}\|_{S_{k-1}}^2 \\
& \overset{(a)}{\leqslant} 2\eta^2\cdot\left(\|S_{k-1}^{-1}(\hat{g}_k-g_{k-1})\|_{S_{k-1}}^2+\|(S_{k-1}^{-1}-H_{k-1}^{-1})g_{k-1}\|_{S_{k-1}}^2\right)\\
&= 2\eta^2\cdot\left(\|\hat{g}_k-g_{k-1}\|_{S_{k-1}^{-1}}^2+\|(S_{k-1}^{-1}-H_{k-1}^{-1})g_{k-1}\|_{S_{k-1}}^2\right)\\
&= 2\eta^2\cdot \|V(\hat{z}_k)-V(z_{k-1})+\hat{\varepsilon}_{k-1}-\varepsilon_{k-1}\|_{S_{k-1}^{-1}}^2 + 2\eta^2\cdot \|(S_{k-1}^{-1}-H_{k-1}^{-1})g_{k-1}\|_{S_{k-1}}^2\\
&\overset{(b)}{\leqslant} 6\eta^2\cdot\left(\|V(\hat{z}_k)-V(z_{k-1})\|_{S_{k-1}^{-1}}^2+\|\hat{\varepsilon}_{k-1}\|_{S_{k-1}^{-1}}^2+\|\varepsilon_{k-1}\|_{S_{k-1}^{-1}}^2\right)\\
&~~~~+ 2\eta^2\cdot \|(S_{k-1}^{-1}-H_{k-1}^{-1})g_{k-1}\|_{S_{k-1}}^2\\
&\overset{(c)}{\leqslant}  6\eta^2\cdot\left(\frac{L^2}{\delta^2}\|\hat{z}_k-z_{k-1}\|_{S_{k-1}}^2+\|\hat{\varepsilon}_{k-1}\|_{S_{k-1}^{-1}}^2+\|\varepsilon_{k-1}\|_{H_{k-1}^{-1}}^2\right)\\
&~~~~+ 2\eta^2\cdot \|(S_{k-1}^{-1}-H_{k-1}^{-1})g_{k-1}\|_{S_{k-1}}^2,
\end{aligned}
\end{equation*}
which comes to our conclusion. Here, $(a)$ holds since for any norm $\|\cdot\|_A$, we have $\|x+y\|_A^2\leqslant (\|x\|_A+\|y\|_A)^2\leqslant 2(\|x\|_A^2+\|y\|_A^2)$, and $(b)$ holds for the similar reason. $(c)$ holds because of the following two reasons:\\
(1) Since $V$ is $L$-Lipschitz continuous, and $\delta I \preceq S_{k-1}$, we have:
\[\|V(\hat{z}_k)-V(z_{k-1})\|_{S_{k-1}^{-1}}^2\leqslant \frac{1}{\delta} \|V(\hat{z}_k)-V(z_{k-1})\|_2^2\leqslant \frac{L^2}{\delta}\|\hat{z}_k-z_{k-1}\|^2\leqslant  \frac{L^2}{\delta^2}\|\hat{z}_k-z_{k-1}\|_{S_{k-1}}^2.\]
(2) Since $H_{k-1}\preceq S_{k-1}$, we have $S_{k-1}^{-1}\preceq H_{k-1}^{-1}$. Therefore,
\[\|\varepsilon_{k-1}\|_{S_{k-1}^{-1}}^2\leqslant \|\varepsilon_{k-1}\|_{H_{k-1}^{-1}}^2.\]

\end{proof}

Since $\eta\leqslant \frac{\delta}{4L}$, then $\frac{6\eta^2 L^2}{\delta^2}\leqslant \frac{1}{2}$. According to Lemma \ref{lemma-aeg-1} and Lemma \ref{lemma-aeg-2}, we have:
\begin{equation*}
\begin{aligned}
\|z_k-z^*\|_{S_{k-1}}^2 &\leqslant\|z_{k-1}-z^*\|_{S_{k-1}}^2-\|z_{k-1}-\hat{z}_k\|_{S_{k-1}}^2 + 2\langle \eta\cdot\hat{\varepsilon}_{k-1}, \hat{z}_k-z^*\rangle + \frac{1}{2}\|z_{k-1}-\hat{z}_k\|_{S_{k-1}}^2\\
&~~~~+ 6\eta^2\left(\|\varepsilon_{k-1}\|_{H_{k-1}^{-1}}^2+\|\hat{\varepsilon}_{k-1}\|_{S_{k-1}^{-1}}^2\right)+ 2\eta^2\cdot\|(S_{k-1}^{-1}-H_{k-1}^{-1})g_{k-1}\|_{S_{k-1}}^2
\end{aligned}
\end{equation*}
which means:
\begin{equation}\label{eqn-lemma-aeg-1}
\begin{aligned}
\frac{1}{2}\|z_{k-1}-\hat{z}_k\|_{S_{k-1}}^2 &\leqslant\|z_{k-1}-z^*\|_{S_{k-1}}^2-\|z_k-z^*\|_{S_{k-1}}^2 + 2\langle \eta\cdot\hat{\varepsilon}_{k-1}, \hat{z}_k-z^*\rangle \\
&~~~~+ 6\eta^2\left(\|\varepsilon_{k-1}\|_{H_{k-1}^{-1}}^2+\|\hat{\varepsilon}_{k-1}\|_{S_{k-1}^{-1}}^2\right)+ 2\eta^2\cdot\|(S_{k-1}^{-1}-H_{k-1}^{-1})g_{k-1}\|_{S_{k-1}}^2
\end{aligned}
\end{equation}
Notice that, $\|z_{k-1}-\hat{z}_k\|_{S_{k-1}}^2 = \|\eta\cdot H_{k-1}^{-1}g_{k-1}\|_{S_{k-1}}^2$. Therefore, 
\begin{equation}\label{eqn-lemma-aeg-2}
\begin{aligned}
\|g_{k-1}\|_{S_{k-1}^{-1}}^2 &= \|S_{k-1}^{-1}g_{k-1}\|_{S_{k-1}}^2 = \|H_{k-1}^{-1}g_{k-1}+(S_{k-1}^{-1}-H_{k-1}^{-1})g_{k-1}\|_{S_{k-1}}^2\\
&\leqslant 2\left(\|H_{k-1}^{-1}g_{k-1}\|_{S_{k-1}}^2+\|(S_{k-1}^{-1}-H_{k-1}^{-1})g_{k-1}\|_{S_{k-1}}^2\right)\\
&= \frac{2}{\eta^2}\cdot\|z_{k-1}-\hat{z}_k\|_{S_{k-1}}^2 + 2\cdot\|(S_{k-1}^{-1}-H_{k-1}^{-1})g_{k-1}\|_{S_{k-1}}^2
\end{aligned}
\end{equation}
Combine Equation (\ref{eqn-lemma-aeg-1}) and (\ref{eqn-lemma-aeg-2}), we obtain that:
\begin{equation}\label{eqn-lemma-aeg-4}
\begin{aligned}
\|g_{k-1}\|_{S_{k-1}^{-1}}^2&\leqslant \frac{4}{\eta^2}\cdot(\|z_{k-1}-z^*\|_{S_{k-1}}^2-\|z_k-z^*\|_{S_{k-1}}^2) + \frac{8}{\eta}\langle \hat{\varepsilon}_{k-1}, \hat{z}_k-z^*\rangle\\
&~~~~+ 24\cdot\left(\|\varepsilon_{k-1}\|_{H_{k-1}^{-1}}^2+\|\hat{\varepsilon}_{k-1}\|_{S_{k-1}^{-1}}^2\right) + 10\cdot\|(S_{k-1}^{-1}-H_{k-1}^{-1})g_{k-1}\|_{S_{k-1}}^2
\end{aligned}
\end{equation}
In the inequality above, the expectation of $\langle \hat{\varepsilon}_{k-1}, \hat{z}_k-z^*\rangle$ is 0, which can be ignored under expectation. Since we are going to do the summation over $k=1,2,\ldots,N$ in the future, in the following lemmas, we analyze the upper bounds of $\sum_{k=1}^{N}(\|z_{k-1}-z^*\|_{S_{k-1}}^2-\|z_k-z^*\|_{S_{k-1}}^2)$, $\sum_{k=1}^{N}\|(S_{k-1}^{-1}-H_{k-1}^{-1})g_{k-1}\|_{S_{k-1}}^2$, and $\sum_{k=1}^{N}(\|\varepsilon_{k-1}\|_{H_{k-1}^{-1}}^2+\|\hat{\varepsilon}_{k-1}\|_{S_{k-1}^{-1}}^2)$

\begin{Lemma}\label{lemma-aeg-3}
\[\sum_{k=1}^{N}\left(\|z_{k-1}-z^*\|_{S_{k-1}}^2-\|z_k-z^*\|_{S_{k-1}}^2\right)\leqslant 2dD^2\delta\cdot N^{\alpha}.\]
\end{Lemma}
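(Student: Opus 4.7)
The plan is to rewrite the sum by an Abel-style rearrangement so that the telescoping structure in the $S_{k-1}$ matrices becomes visible, then bound each remaining piece using (i) the boundedness of the iterates from Assumption~\ref{assumption-1}(4) and (ii) the bounded cumulative gradient condition that controls the trace of $S_{N-1}$.

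First I would regroup the sum as
\[
\sum_{k=1}^{N}\!\bigl(\|z_{k-1}-z^*\|_{S_{k-1}}^{2}-\|z_{k}-z^*\|_{S_{k-1}}^{2}\bigr)
=\|z_{0}-z^*\|_{S_{0}}^{2}-\|z_{N}-z^*\|_{S_{N-1}}^{2}
+\sum_{k=1}^{N-1}\|z_{k}-z^*\|_{S_{k}-S_{k-1}}^{2},
\]
by shifting the index in the first sub-sum and re-pairing. Dropping the non-positive $-\|z_{N}-z^*\|_{S_{N-1}}^{2}$ term leaves two non-negative contributions. Since the $S_{k}$ are diagonal and monotone non-decreasing (this follows from $S_{k-1}\preceq S_{k}$, the same ordering that was already noted just before Lemma~\ref{lemma-aeg-2}), the difference $S_{k}-S_{k-1}$ is a PSD diagonal matrix, and Assumption~\ref{assumption-1}(4) gives $\|z_{k}-z^*\|_{2}\leqslant D$ for every $k$ on the trajectory. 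Hence each quadratic form is controlled by the corresponding trace:
\[
\|z_{k}-z^*\|_{S_{k}-S_{k-1}}^{2}\leqslant D^{2}\bigl(\mathrm{tr}(S_{k})-\mathrm{tr}(S_{k-1})\bigr),\qquad
\|z_{0}-z^*\|_{S_{0}}^{2}\leqslant D^{2}\,\mathrm{tr}(S_{0}).
\]

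Summing the trace differences telescopes, so the whole bound collapses to $D^{2}\,\mathrm{tr}(S_{N-1})$. It remains to estimate $\mathrm{tr}(S_{N-1})$. By definition $S_{k-1}=\delta I+\mathrm{diag}(\hat{s}_{k-1})$ with $\hat{s}_{k-1,i}=\|(g_{0:k-1,i}~\hat{g}_{1:k,i})\|_{2}$, and the bounded cumulative gradient hypothesis in the statement of Theorem~\ref{thm-aeg} gives $\hat{s}_{k-1,i}\leqslant 2\delta\,k^{\alpha}$ for every coordinate $i$. Therefore $\mathrm{tr}(S_{N-1})\leqslant d\delta+2d\delta\,N^{\alpha}\leqslant 2d\delta\,N^{\alpha}$ (absorbing the $d\delta$ term into the leading $N^{\alpha}$ factor, which is fine since $\alpha\geqslant 0$ and we are free to adjust the constant in the conclusion), yielding exactly the claimed bound $2dD^{2}\delta\,N^{\alpha}$.

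The calculation is otherwise routine; the only subtle point is making sure the telescoping rearrangement is done with the right index alignment between $S_{k-1}$ appearing in both the subtracted and non-subtracted terms, and that the monotonicity $S_{k-1}\preceq S_{k}$ is used to justify treating $S_{k}-S_{k-1}$ as a PSD matrix whose quadratic form is bounded by its trace times $D^{2}$. No other ingredient beyond Assumption~\ref{assumption-1}(4) and the bounded cumulative gradient assumption is needed, so I expect no real obstacle in the argument.
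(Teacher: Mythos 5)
Your proposal is correct and follows essentially the same route as the paper: the same Abel-style regrouping into $\|z_0-z^*\|_{S_0}^2$ plus the quadratic forms in $S_k-S_{k-1}$, the same trace bound via $\|z_k-z^*\|_2\leqslant D$ and monotonicity of the $S_k$, and the same final estimate of $\mathrm{tr}(S_{N-1})$ from the bounded cumulative gradient condition. If anything, you are slightly more careful than the paper in flagging that the additive $d\delta$ from the $\delta I$ part of $S_{N-1}$ must be absorbed into the constant.
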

\begin{proof}[Proof of Lemma \ref{lemma-aeg-3}]
Notice that 
\begin{equation*}
\begin{aligned}
&~~\sum_{k=1}^{N}\left(\|z_{k-1}-z^*\|_{S_{k-1}}^2-\|z_k-z^*\|_{S_{k-1}}^2\right) \leqslant \|z_0-z^*\|_{S_0}^2+\sum_{k=1}^{N-1}\left(\|z_{k}-z^*\|_{S_k}^2-\|z_{k}-z^*\|_{S_{k-1}}^2\right)\\
&= \|z_0-z^*\|_{S_0}^2+\sum_{k=1}^{N-1}\left[(z_{k}-z^*)^{\top}(S_k-S_{k-1})\cdot(z_{k}-z^*)\right]\\
&\leqslant D^2\cdot\mathrm{tr}(S_0)+\sum_{k=1}^{N-1}D^2\cdot(\mathrm{tr}(S_k)-\mathrm{tr}(S_{k-1})) = D^2\cdot\mathrm{tr}(S_{N-1}) < D^2\cdot 2d\delta N^{\alpha},
\end{aligned}
\end{equation*}
which comes to our conclusion.
\end{proof}
\begin{Lemma}\label{lemma-aeg-4}
\[\sum_{k=1}^{N}\|(S_{k-1}^{-1}-H_{k-1}^{-1})g_{k-1}\|_{S_{k-1}}^2\leqslant\frac{2dG^2\cdot N^{\alpha}}{\delta}.\]
\end{Lemma}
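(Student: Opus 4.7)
The plan is to argue coordinate-wise using the diagonality of $H_{k-1}$ and $S_{k-1}$, then telescope along the monotone chain of preconditioners. First, I would expand
\begin{equation*}
\|(S_{k-1}^{-1}-H_{k-1}^{-1})g_{k-1}\|_{S_{k-1}}^{2} = \sum_{i=1}^{d} g_{k-1,i}^{\,2}\cdot S_{k-1,ii}\left(\frac{1}{H_{k-1,ii}} - \frac{1}{S_{k-1,ii}}\right)^{2},
\end{equation*}
and then invoke the elementary inequality $y(1/x-1/y)^{2} \leqslant (y-x)/\delta^{2}$ valid for any $\delta \leqslant x \leqslant y$ (exactly the bound already used in Lemma \ref{lemma-amsgrad-5}), applied with $x=H_{k-1,ii}$ and $y=S_{k-1,ii}$. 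Combined with the almost-sure bound $g_{k-1,i}^{\,2}\leqslant \|g_{k-1}\|_{2}^{2}\leqslant G^{2}$, this gives the pointwise estimate
\begin{equation*}
\|(S_{k-1}^{-1}-H_{k-1}^{-1})g_{k-1}\|_{S_{k-1}}^{2} \leqslant \frac{G^{2}}{\delta^{2}}\bigl(\mathrm{tr}(S_{k-1}) - \mathrm{tr}(H_{k-1})\bigr).
\end{equation*}

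Next, I would sum over $k=1,\dots,N$ and exploit the interleaving chain $H_{0}\preceq S_{0}\preceq H_{1}\preceq S_{1}\preceq \cdots \preceq H_{N-1}\preceq S_{N-1}$, which follows immediately from the recursive definitions of $s_{k-1,i}^{\,2}$ and $\hat{s}_{k-1,i}^{\,2}$ (each is obtained from its predecessor by appending one more nonnegative squared gradient entry). A short telescoping calculation, using that the cross-differences $\mathrm{tr}(S_{k-1})-\mathrm{tr}(H_{k})$ are non-positive, then gives
\begin{equation*}
\sum_{k=1}^{N}\bigl(\mathrm{tr}(S_{k-1}) - \mathrm{tr}(H_{k-1})\bigr) \leqslant \mathrm{tr}(S_{N-1}) - \mathrm{tr}(H_{0}) \leqslant \mathrm{tr}(S_{N-1}).
\end{equation*}

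Finally, I would invoke the bounded cumulative gradient assumption to control the diagonal entries of $S_{N-1}$: since $\hat{s}_{N-1,i} \leqslant 2\delta N^{\alpha}$ by hypothesis, $\mathrm{tr}(S_{N-1}) = d\delta + \sum_{i}\hat{s}_{N-1,i} \leqslant 2d\delta\cdot N^{\alpha}$. Chaining the three estimates yields
\begin{equation*}
\sum_{k=1}^{N}\|(S_{k-1}^{-1}-H_{k-1}^{-1})g_{k-1}\|_{S_{k-1}}^{2} \leqslant \frac{G^{2}}{\delta^{2}}\cdot 2d\delta N^{\alpha} = \frac{2dG^{2}N^{\alpha}}{\delta},
\end{equation*}
which is the claim. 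The scalar inequality is routine and the cumulative-gradient bound plugs in directly once $\mathrm{tr}(S_{N-1})$ has been isolated, so I anticipate no analytic obstacle; the only care needed is book-keeping, namely verifying the interleaving $H_{k-1}\preceq S_{k-1}\preceq H_{k}$ under the exact concatenation conventions of Algorithm~\ref{alg-aeg} and matching the indexing of $\hat{s}_{N-1,i}$ to the norm bound stated in the assumption.
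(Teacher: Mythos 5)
Your proposal is correct and follows essentially the same route as the paper's proof: the same scalar inequality $y(1/x-1/y)^2\leqslant (y-x)/\delta^2$ applied entrywise to the diagonal preconditioners, the same reduction to $\frac{G^2}{\delta^2}(\mathrm{tr}(S_{k-1})-\mathrm{tr}(H_{k-1}))$, the same telescoping via the interleaving $H_0\preceq S_0\preceq H_1\preceq\cdots$, and the same final appeal to the bounded cumulative gradient assumption to control $\mathrm{tr}(S_{N-1})$. The only difference is purely notational (you expand coordinate-wise where the paper writes the quadratic form as a trace), and you even reproduce the paper's harmless sloppiness of absorbing the additive $d\delta$ into $2d\delta N^{\alpha}$.
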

\begin{proof}[Proof of Lemma \ref{lemma-aeg-4}]
For any $\delta \leqslant x<y$, we notice that:
\begin{equation}\label{eqn-lemma-aeg-3}
y\left(\frac{1}{x}-\frac{1}{y}\right)^2=\frac{(y-x)^2}{x^2y}< \frac{y-x}{x^2}\leqslant \frac{y-x}{\delta^2}.
\end{equation}
Therefore, we have:
\begin{equation*}
\begin{aligned}
&~~\sum_{k=1}^{N}\|(S_{k-1}^{-1}-H_{k-1}^{-1})g_{k-1}\|_{S_{k-1}}^2 \leqslant \sum_{k=1}^{N} g_{k-1}^{\top}(H_{k-1}^{-1}-S_{k-1}^{-1})S_{k-1}(H_{k-1}^{-1}-S_{k-1}^{-1}) g_{k-1}\\
&\leqslant \sum_{k=1}^{N}G^2\cdot\mathrm{tr}\left((H_{k-1}^{-1}-S_{k-1}^{-1})S_{k-1}(H_{k-1}^{-1}-S_{k-1}^{-1})\right)\\
&\overset{(a)}{\leqslant}\sum_{k=1}^{N} \frac{G^2}{\delta^2}\cdot[\mathrm{tr}(S_{k-1})-\mathrm{tr}(H_{k-1})]\\
&\leqslant \frac{G^2}{\delta^2}\cdot\mathrm{tr}(S_{N-1}) < \frac{G^2}{\delta^2}\cdot 2d\delta\cdot N^{\alpha} = \frac{2dG^2\cdot N^{\alpha}}{\delta}.
\end{aligned}
\end{equation*}
Here, $(a)$ holds because of Equation (\ref{eqn-lemma-aeg-3}).
\end{proof}
\begin{Lemma}\label{lemma-aeg-5}
\[\mathbb{E}\sum_{k=1}^{N}\left(\|\varepsilon_{k-1}\|_{H_{k-1}^{-1}}^2+\|\hat{\varepsilon}_{k-1}\|_{S_{k-1}^{-1}}^2\right) \leqslant \frac{\delta\sigma^2}{m}+4d\delta\cdot N^{\alpha}.\]
\end{Lemma}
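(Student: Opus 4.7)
The plan is to handle the two noise terms $\|\varepsilon_{k-1}\|_{H_{k-1}^{-1}}^2$ and $\|\hat{\varepsilon}_{k-1}\|_{S_{k-1}^{-1}}^2$ by the same method, splitting each into a ``bias-like'' piece that will eventually be telescoped by an AdaGrad argument (producing the $4d\delta\cdot N^\alpha$ term) and a ``pure-variance'' piece that is controlled by the crude bound $H_{k-1},S_{k-1}\succeq \delta I$ together with the oracle's variance assumption $\mathbb{E}\|V(z;\xi)-V(z)\|^2\leqslant\sigma^2$ (producing the $\sigma^2/m$-type term).

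First, I would introduce the filtration $\mathcal{F}_{k-1}$ consisting of everything known before the fresh samples $\xi_{k-1}^i$ are drawn in iteration $k$, so that $\varepsilon_{k-1}$ is a zero-mean random vector with $\mathbb{E}[\|\varepsilon_{k-1}\|^2\mid\mathcal{F}_{k-1}]\leqslant \sigma^2/m$ and the ``previous-step'' denominator $S_{k-2}$ is $\mathcal{F}_{k-1}$-measurable. Exploiting the monotonicity chain $H_0\preceq S_0\preceq H_1\preceq S_1\preceq\cdots$ already used in Lemma \ref{lemma-aeg-2}, I would replace $H_{k-1}^{-1}$ by the larger $S_{k-2}^{-1}$ (and analogously $S_{k-1}^{-1}$ by $H_{k-1}^{-1}$ after conditioning on the draw of $g_{k-1}$) so that the denominator becomes measurable with respect to the appropriate filtration and can be pulled out of the inner conditional expectation, giving the desired decoupling
\[\mathbb{E}\|\varepsilon_{k-1}\|_{H_{k-1}^{-1}}^2 \;\leqslant\;\mathbb{E}\Bigl[\sum_{i=1}^{d}(S_{k-2}^{-1})_{ii}\,\mathbb{E}[\varepsilon_{k-1,i}^2\mid\mathcal{F}_{k-1}]\Bigr].\]

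Second, for the $4d\delta N^\alpha$ term, the plan is to invoke the standard one-coordinate AdaGrad telescoping inequality
\[\sum_{k=1}^{N}\frac{a_k^2}{\sqrt{\sum_{j\leqslant k}a_j^2}}\;\leqslant\;2\sqrt{\sum_{j\leqslant N}a_j^2}\]
applied with $a_k = g_{k-1,i}$ (and, for the $\hat\varepsilon$ term, $\hat g_{k-1,i}$) and with the denominator read off from the interleaved sequence $g_0,\hat g_1,g_1,\hat g_2,\dots$ that defines $H_{k-1,ii}$ and $S_{k-1,ii}$. Combined with the bounded-cumulative-gradient assumption $\|(g_{1:k,i}\,\hat g_{1:k,i})\|_2\leqslant 2\delta k^{\alpha}$, this telescopes to a per-coordinate bound of $4\delta N^{\alpha}$, and summing over the $d$ coordinates yields $4d\delta N^{\alpha}$. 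To get this piece to dominate $\varepsilon_{k-1,i}^2$ (rather than $g_{k-1,i}^2$) I would use the pointwise split $\varepsilon_{k-1,i}^2\leqslant 2 g_{k-1,i}^2+2V_i(z_{k-1})^2$ and the Jensen-type replacement $V_i(z_{k-1})^2=(\mathbb{E}[g_{k-1,i}\mid\mathcal{F}_{k-1}])^2\leqslant \mathbb{E}[g_{k-1,i}^2\mid\mathcal{F}_{k-1}]$, after which both contributions can be routed through the same AdaGrad telescope.

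The main obstacle is precisely this coupling between the noise $\varepsilon_{k-1}$ and the denominator $H_{k-1}$: because $H_{k-1,ii}$ is built from $g_{k-1,i}$ and $g_{k-1,i}$ in turn contains $\varepsilon_{k-1,i}$, one cannot factor $\mathbb{E}[\varepsilon_{k-1,i}^2/H_{k-1,ii}]$ as $\mathbb{E}[\varepsilon_{k-1,i}^2]\cdot\mathbb{E}[1/H_{k-1,ii}]$ directly. The look-back-one-step device $H_{k-1}^{-1}\preceq S_{k-2}^{-1}$ together with the measurability of $S_{k-2}$ with respect to $\mathcal{F}_{k-1}$ is what makes the variance bound and the AdaGrad telescoping compatible; once this decoupling is established, the two bounds add to produce the stated estimate.
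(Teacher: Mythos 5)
You should know at the outset that the paper does not actually prove this lemma: it is dispatched with a one-line citation to the AdaGrad paper of Duchi et al., which predates this algorithm and cannot contain this statement, so your attempt is already more substantive than what is in the text. Your two ingredients are the right ones: (i) decoupling the noise from the adaptive preconditioner by passing to the previous, $\mathcal{F}_{k-1}$-measurable matrix via $H_{k-1}^{-1}\preceq S_{k-2}^{-1}$ (valid, given the chain $H_0\preceq S_0\preceq H_1\preceq S_1\preceq\cdots$), and (ii) an AdaGrad telescope combined with the bounded-cumulative-gradient assumption to produce the $O(d\delta N^{\alpha})$ term.

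The genuine gap sits exactly at the junction of these two steps, which you name as ``the main obstacle'' but then assert rather than resolve. The telescoping inequality $\sum_{k}a_k^2\big/\sqrt{\sum_{j\leqslant k}a_j^2}\leqslant 2\sqrt{\sum_{j\leqslant N}a_j^2}$ requires the $k$-th summand to sit inside the $k$-th denominator; once you replace $H_{k-1}^{-1}$ by $S_{k-2}^{-1}$ --- which you must do for the Jensen step $V_i(z_{k-1})^2\leqslant\mathbb{E}[g_{k-1,i}^2\mid\mathcal{F}_{k-1}]$ to be legitimate --- the denominator $\delta+\hat{s}_{k-2,i}$ no longer contains $g_{k-1,i}^2$, and the telescope does not apply verbatim. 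To close this you need an additional ingredient, e.g.\ the per-coordinate almost-sure bound $|g_{k-1,i}|\leqslant\delta$ from Assumption 1(3) (or from the cumulative-gradient condition at $k=1$), which gives $\delta+\hat{s}_{k-2,i}\geqslant\hat{s}_{k-2,i}+|g_{k-1,i}|\geqslant s_{k-1,i}$ and restores the telescope at the cost of a constant. Separately, even with this repair your accounting yields a bound of the form $C\,d\delta N^{\alpha}+N\sigma^2/(m\delta)$ with $C$ noticeably larger than $4$, and neither route produces the stated additive term $\delta\sigma^2/m$ --- with $\delta$ in the numerator and no factor of $N$; the crude variance bound, summed over $k$, necessarily carries a factor of $N$. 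That discrepancy is more plausibly a defect of the lemma as stated than of your plan, but a complete proof would have to either obtain the stated constants or restate the lemma (the downstream theorems survive either version up to constants).
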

The proof of this lemma can be found in \cite{duchi2011adaptive}. Combine Lemma \ref{lemma-aeg-3}, \ref{lemma-aeg-4}, \ref{lemma-aeg-5} with Equation (\ref{eqn-lemma-aeg-4}), we obtain that:
\[\mathbb{E}\sum_{k=1}^{N}\|g_{k-1}\|_{S_{k-1}^{-1}}^2 \leqslant \frac{4}{\eta^2}\cdot 2dD^2\delta N^{\alpha} + 24\cdot\left(\frac{\delta\sigma^2}{m}+4d\delta\cdot N^{\alpha}\right) + 10\cdot\frac{2dG^2\cdot N^{\alpha}}{\delta}.\] 
Now, we replace the $g_{k-1}$ with the true gradient $V(z_{k-1})$ as follows:
\begin{equation*}
\begin{aligned}
&~~~\mathbb{E}\sum_{k=1}^{N}\|V(z_{k-1})\|_{H_{k-1}^{-1}}^2 =\mathbb{E}\sum_{k=1}^{N}\|g_{k-1}+\varepsilon_{k-1}\|_{H_{k-1}^{-1}}^2\leqslant 2\cdot\mathbb{E}\sum_{k=1}^{N}\|g_{k-1}\|_{H_{k-1}^{-1}}^2 + 2\cdot\mathbb{E}\sum_{k=1}^{N}\|\varepsilon_{k-1}\|_{H_{k-1}^{-1}}^2\\
&\overset{(a)}{\leqslant} 2\cdot\mathbb{E}\sum_{k=1}^{N}\left[\|g_{k-1}\|_{S_{k-1}^{-1}}^2 + g_{k-1}^{\top} (H_{k-1}^{-1}-S_{k-1}^{-1})\cdot g_{k-1}\right] + 2\cdot\left(\frac{\delta\sigma^2}{m}+4d\delta\cdot N^{\alpha}\right)\\
&\leqslant 2\cdot\mathbb{E}\sum_{k=1}^{N}\|g_{k-1}\|_{S_{k-1}^{-1}}^2 + 2\cdot\left(\frac{\delta\sigma^2}{m}+4d\delta\cdot N^{\alpha}\right) + 2G^2\cdot\sum_{k=1}^{N}\left[\mathrm{tr}(H_{k-1}^{-1})-\mathrm{tr}(S_{k-1}^{-1})\right]\\
&\leqslant \frac{8}{\eta^2}\cdot 2dD^2\delta N^{\alpha} + 50\cdot\left(\frac{\delta\sigma^2}{m}+4d\delta\cdot N^{\alpha}\right) + 20\cdot\frac{2dG^2\cdot N^{\alpha}}{\delta} +2G^2\cdot\mathrm{tr}(H_0^{-1})\\
&= \frac{8}{\eta^2}\cdot 2dD^2\delta N^{\alpha} + 50\cdot\left(\frac{\delta\sigma^2}{m}+4d\delta\cdot N^{\alpha}\right) + 20\cdot\frac{2dG^2\cdot N^{\alpha}}{\delta} +\frac{2G^2d}{\delta}.
\end{aligned}
\end{equation*}
Here, $(a)$ uses the conclusion of Lemma \ref{lemma-aeg-5}. Finally, we multiplies $\frac{1}{N}$ on both sides and finish the proof:
\[\frac{1}{N}\mathbb{E}\sum_{k=0}^{N-1}\|V(z_k)\|_{H_k^{-1}}^2\leqslant \frac{16dD^2\delta/\eta^2+200d\delta+40dG^2/\delta}{N^{1-\alpha}}+\frac{50\delta\sigma^2/m+2G^2d/\delta}{N}.\]

\subsection{Proof of Theorem \ref{thm-aeg-oneside}}
In this proof, we are still using the Adaptive Extra-Gradient (AEG) algorithm, which its $t$-th iteration is:
\begin{equation*}
\begin{aligned}
\hat{z}_t &= z_{t-1}+\eta\cdot H_{t-1}^{-1}g_{t-1}\\
z_t &= z_{t-1}+\eta\cdot S_{t-1}^{-1}\hat{g}_t.
\end{aligned}    
\end{equation*}
For simplicity, we split the update of $x$ variable and $y$ variable. Recall that $H_k = \mathrm{Diag}(A_k, B_k)$ and $S_k = \mathrm{Diag}(C_k, D_k)$ where for each $k\in\mathbb{N}$, $A_k, C_k \in\mathbb{R}^{n_1\times n_1}$ and $B_k, D_k\in\mathbb{R}^{n_2\times n_2}$. Also, we denote $g_k = (g_k^x,g_k^y)$ and $\hat{g}_k=(\hat{g}_k^x, \hat{g}_k^y)$ where $g_k^x, \hat{g}_k^x\in\mathbb{R}^{n_1}$ and $g_k^y, \hat{g}_k^y\in\mathbb{R}^{n_2}$. Then, the update above can be rewritten as:
\begin{equation*}
\begin{aligned}
\hat{x}_t &= x_{t-1}+\eta\cdot A_{t-1}^{-1}g_{t-1}^x\\
\hat{y}_t &= y_{t-1}+\frac{\eta}{t}\cdot B_{t-1}^{-1}g_{t-1}^y\\
x_t &= x_{t-1}+\eta\cdot C_{t-1}^{-1}\hat{g}_t^x\\
y_t &= y_{t-1}+\frac{\eta}{t}\cdot D_{t-1}^{-1}\hat{g}_t^y.
\end{aligned}    
\end{equation*}
Similar to Lemma \ref{lemma-aeg-1}, we only focus on the $x$-s, and then we can get our upper bound for $\|x_k-x^*\|_{C_{k-1}}^2$, which is the first step of our proof.
\begin{Lemma}\label{lemma-aeg-oneside-1}
\[\|x_k-x^*\|_{C_{k-1}}^2\leqslant \|x_{k-1}-x^*\|_{C_{k-1}}^2 - \|x_{k-1}-\hat{x}_k\|_{C_{k-1}}^2 + \|x_k-\hat{x}_k\|_{C_{k-1}}^2+2\langle\eta\cdot \hat{\varepsilon}_{k-1}^x, \hat{x}_k-x^*\rangle.\]
\end{Lemma}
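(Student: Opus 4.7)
The plan is to adapt the argument of Lemma \ref{lemma-aeg-1} to the $x$-block of coordinates, and to replace the single invocation of the full MVI condition with an invocation of the one-sided MVI condition from Assumption \ref{assumption-3}. Since the update $x_k = x_{k-1}+\eta\cdot C_{k-1}^{-1}\hat{g}_k^x$ depends only on $x_{k-1}$, on the scaling matrix $C_{k-1}$, and on the $x$-component of the stochastic gradient $\hat{g}_k^x$, the $x$ and $y$ coordinates decouple cleanly and the entire calculation of Lemma \ref{lemma-aeg-1} can be carried out intrinsically on the $x$-factor.

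First, I would expand $\|x_k-x^*\|_{C_{k-1}}^2$ exactly as in Lemma \ref{lemma-aeg-1}: writing $x_k = x_{k-1}+\eta C_{k-1}^{-1}\hat{g}_k^x$, subtracting the zero quantity $\|x_{k-1}+\eta C_{k-1}^{-1}\hat{g}_k^x - x_k\|_{C_{k-1}}^2$, splitting $x_k-x^* = (x_k-\hat{x}_k)+(\hat{x}_k-x^*)$, and simplifying using $\langle \hat{x}_k - z_{k-1} - \eta C_{k-1}^{-1}\hat{g}_k^x,\cdot\rangle_{C_{k-1}}$-type cross terms. This is purely algebraic manipulation, mirroring the chain of equalities in Lemma \ref{lemma-aeg-1}'s proof, and it yields the identity
\[\|x_k-x^*\|_{C_{k-1}}^2 = \|x_{k-1}-x^*\|_{C_{k-1}}^2 - \|x_{k-1}-\hat{x}_k\|_{C_{k-1}}^2 + \|\hat{x}_k-x_k\|_{C_{k-1}}^2 + 2\langle \eta\cdot\hat{g}_k^x,\hat{x}_k - x^*\rangle.\]

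Second, I would decompose $\hat{g}_k^x = V_x(\hat{z}_k) + \hat{\varepsilon}_{k-1}^x$ inside the last inner product. This is the step where the proof of Lemma \ref{lemma-aeg-1} had used the full MVI condition $\langle V(\hat{z}_k),\hat{z}_k-z^*\rangle\leqslant 0$. Here, since we only assume Assumption \ref{assumption-3}, I would instead apply the one-sided MVI $\langle V_x(z),x-x^*\rangle\leqslant 0$ with $z = \hat{z}_k = (\hat{x}_k,\hat{y}_k)\in\mathcal Z$, obtaining $\langle V_x(\hat{z}_k),\hat{x}_k-x^*\rangle\leqslant 0$. This gives $\langle \eta\cdot\hat{g}_k^x,\hat{x}_k-x^*\rangle\leqslant \langle\eta\cdot\hat{\varepsilon}_{k-1}^x,\hat{x}_k-x^*\rangle$, and substituting back into the identity yields the claimed inequality.

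The main (minor) subtlety is verifying that the one-sided MVI is exactly the right hypothesis: we need nonpositivity of an inner product involving only the $x$-projection of $V$ and the $x$-displacement $\hat{x}_k - x^*$, and Assumption \ref{assumption-3} is tailored to supply precisely this. No genuine obstacle arises, because the $y$-coordinates never enter the calculation of $\|x_k-x^*\|_{C_{k-1}}^2$: the matrix $C_{k-1}$ is the $x$-block of $S_{k-1}$ and acts only on $\mathbb{R}^{n_1}$, so every cross term that could have required control of the $y$-update (as in Lemma \ref{lemma-amsgrad-drd-1}'s slightly more intricate cousin) is absent here.
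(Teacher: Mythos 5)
Your proposal is correct and follows essentially the same route as the paper: the identical chain of algebraic identities from Lemma \ref{lemma-aeg-1} restricted to the $x$-block, followed by the decomposition $\hat{g}_k^x = V_x(\hat{z}_k)+\hat{\varepsilon}_{k-1}^x$ and the substitution of the one-sided MVI bound $\langle V_x(\hat{z}_k),\hat{x}_k-x^*\rangle\leqslant 0$ for the full MVI bound. Your closing observation that the $y$-coordinates never enter the expansion of $\|x_k-x^*\|_{C_{k-1}}^2$ is exactly the point that makes the block-restricted argument go through.
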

\begin{proof}[Proof of Lemma \ref{lemma-aeg-oneside-1}]
According to the update rule of $x_k, \hat{x}_k$, we know that:
\begin{equation*}
\begin{aligned}
\|x_k-x^*\|_{C_{k-1}}^2 &= \|x_{k-1}+\eta\cdot C_{k-1}^{-1}\hat{g}_k^x-x^*\|_{C_{k-1}}^2 \\
&= \|x_{k-1}+\eta\cdot C_{k-1}^{-1}\hat{g}_k^x-x^*\|_{C_{k-1}}^2 - \|x_{k-1}+\eta\cdot C_{k-1}^{-1}\hat{g}_k^x-x_k\|_{C_{k-1}}^2\\
&= \|x_{k-1}-x^*\|_{C_{k-1}}^2-\|x_{k-1}-x_k\|_{C_{k-1}}^{2} +2\langle \eta\cdot\hat{g}_k^x, x_k-x^*\rangle\\
&= \|x_{k-1}-x^*\|_{C_{k-1}}^2-\|x_{k-1}-\hat{x}_k+\hat{x}_k-x_k\|_{C_{k-1}}^2 +2\langle \eta\cdot\hat{g}_k^x, x_k-\hat{x}_k\rangle + 2\langle \eta\cdot\hat{g}_k^x, \hat{x}_k-x^*\rangle\\
&= \|x_{k-1}-x^*\|_{C_{k-1}}^2-\|x_{k-1}-\hat{x}_k\|_{C_{k-1}}^2 -\|\hat{x}_k-x_k\|_{C_{k-1}}^2-2\langle C_{k-1}(x_{k-1}-\hat{x}_k), \hat{x}_k-x_k\rangle \\
&~~~~+2\langle \eta\cdot\hat{g}_k^x, x_k-\hat{x}_k\rangle + 2\langle \eta\cdot\hat{g}_k^x, \hat{x}_k-x^*\rangle\\
&= \|x_{k-1}-x^*\|_{C_{k-1}}^2-\|x_{k-1}-\hat{x}_k\|_{C_{k-1}}^2 -\|\hat{x}_k-x_k\|_{C_{k-1}}^2 + 2\langle \eta\cdot\hat{g}_k^x, \hat{x}_k-x^*\rangle\\
&~~~~+ 2\langle x_k-\hat{x}_k, C_{k-1}(x_{k-1}-\hat{x}_k+\eta\cdot C_{k-1}^{-1}\hat{g}_k^x)\rangle\\
&= \|x_{k-1}-x^*\|_{C_{k-1}}^2-\|x_{k-1}-\hat{x}_k\|_{C_{k-1}}^2 -\|\hat{x}_k-x_k\|_{C_{k-1}}^2 + 2\langle \eta\cdot\hat{g}_k^x, \hat{x}_k-x^*\rangle\\
&~~~~+ 2\langle x_k-\hat{x}_k, C_{k-1}(x_k-\hat{x}_k)\rangle\\
&= \|x_{k-1}-x^*\|_{C_{k-1}}^2-\|x_{k-1}-\hat{x}_k\|_{C_{k-1}}^2 +\|\hat{x}_k-x_k\|_{C_{k-1}}^2 + 2\langle \eta\cdot\hat{g}_k^x, \hat{x}_k-x^*\rangle
\end{aligned}
\end{equation*}
Notice that $\hat{g}_k = V(\hat{z}_k)+\hat{\varepsilon}_{k-1}\Rightarrow \hat{g}_k^x=V_x(\hat{z}_k)+\hat{\varepsilon}_{k-1}^x$. Since $z^*$ satisfies the one-side MVI condition, therefore $\langle V_x(z), x-x^*\rangle\leqslant 0$ holds for $\forall z=(x,y)\in\mZ$, so $\langle V_x(\hat{z}_k), \hat{x}_k-x^*\rangle\leqslant 0$. Therefore:
\[\langle \eta\cdot\hat{g}_k^x, \hat{x}_k-x^*\rangle = \langle \eta\cdot(V_x(\hat{z}_k)+\hat{\varepsilon}_{k-1}^x), \hat{x}_k-x^*\rangle\leqslant\langle \eta\cdot\hat{\varepsilon}_{k-1}^x, \hat{x}_k-x^*\rangle.\]
Combine it with the inequality above, we obtain that:
\[\|x_k-x^*\|_{C_{k-1}}^2\leqslant\|x_{k-1}-x^*\|_{C_{k-1}}^2-\|x_{k-1}-\hat{x}_k\|_{C_{k-1}}^2 +\|\hat{x}_k-x_k\|_{C_{k-1}}^2 + 2\langle \eta\cdot\hat{\varepsilon}_{k-1}^x, \hat{x}_k-x^*\rangle, \]
which comes to our conclusion.
\end{proof}
Notice that the term $2\langle \eta\cdot\hat{\varepsilon}_{k-1}^x, \hat{x}_k-x^*\rangle$ has zero mean, which can be ignored when taking expectation. In the next step, we upper bound the $\|\hat{x}_k-x_k\|_{C_{k-1}}^2$ term. Also, we notice the following facts:
\begin{equation*}
\begin{aligned}
&A_0\preceq C_0\preceq A_1\preceq C_1\preceq\ldots\preceq A_N\preceq C_N,\\
&B_0\preceq D_0\preceq B_1\preceq D_1\preceq\ldots\preceq B_N\preceq D_N.
\end{aligned}    
\end{equation*}

\begin{Lemma}\label{lemma-aeg-oneside-2}
\begin{equation*}
\begin{aligned}
\|x_k-\hat{x}_k\|_{C_{k-1}}^2 &\leqslant \frac{6\eta^2 L^2}{\delta^2} \left(\|x_{k-1}-\hat{x}_k\|_{C_{k-1}}^2+\|y_{k-1}-\hat{y}_k\|_{D_{k-1}}^2\right) +6\eta^2\left(\|\varepsilon_{k-1}\|_{H_{k-1}^{-1}}^2+\|\hat{\varepsilon}_{k-1}\|_{S_{k-1}^{-1}}^2\right)\\
&~~~~+ 2\eta^2\cdot\|(C_{k-1}^{-1}-A_{k-1}^{-1})g_{k-1}^x\|_{C_{k-1}}^2
\end{aligned}
\end{equation*}
\end{Lemma}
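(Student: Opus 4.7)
My plan is to mirror the proof of Lemma \ref{lemma-aeg-2} but restricted to the $x$-block, while being careful that the Lipschitz continuity of $V$ couples the $x$- and $y$-components and therefore produces both an $x$-part and a $y$-part distance on the right-hand side. First I would write $\|x_k - \hat{x}_k\|_{C_{k-1}}^2 = \eta^2 \|C_{k-1}^{-1} \hat{g}_k^x - A_{k-1}^{-1} g_{k-1}^x\|_{C_{k-1}}^2$, add and subtract $C_{k-1}^{-1} g_{k-1}^x$ inside the norm, and apply $\|a+b\|_{C_{k-1}}^2 \leqslant 2(\|a\|_{C_{k-1}}^2 + \|b\|_{C_{k-1}}^2)$ to split it into
\[
2\eta^2 \|\hat{g}_k^x - g_{k-1}^x\|_{C_{k-1}^{-1}}^2 + 2\eta^2 \|(C_{k-1}^{-1} - A_{k-1}^{-1}) g_{k-1}^x\|_{C_{k-1}}^2,
\]
which already produces the last term in the stated bound.

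Next I would decompose $\hat{g}_k^x - g_{k-1}^x = V_x(\hat{z}_k) - V_x(z_{k-1}) + \hat{\varepsilon}_{k-1}^x - \varepsilon_{k-1}^x$ and apply the three-term Cauchy inequality $\|u+v+w\|_{C_{k-1}^{-1}}^2 \leqslant 3(\|u\|_{C_{k-1}^{-1}}^2 + \|v\|_{C_{k-1}^{-1}}^2 + \|w\|_{C_{k-1}^{-1}}^2)$, yielding the $6\eta^2$ factor. For the Lipschitz piece I would use $\delta I \preceq C_{k-1}$ to get $\|V_x(\hat{z}_k)-V_x(z_{k-1})\|_{C_{k-1}^{-1}}^2 \leqslant \frac{1}{\delta}\|V_x(\hat{z}_k)-V_x(z_{k-1})\|_2^2$, then dominate by $\|V(\hat{z}_k)-V(z_{k-1})\|_2^2 \leqslant L^2(\|\hat{x}_k - x_{k-1}\|_2^2 + \|\hat{y}_k - y_{k-1}\|_2^2)$, and finally use $\delta I \preceq C_{k-1}$ and $\delta I \preceq D_{k-1}$ once more to convert these into the weighted norms $\|x_{k-1}-\hat{x}_k\|_{C_{k-1}}^2/\delta$ and $\|y_{k-1}-\hat{y}_k\|_{D_{k-1}}^2/\delta$, producing the $\tfrac{6\eta^2 L^2}{\delta^2}$ factor on the combined $x$- and $y$-distances.

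For the noise pieces, I would exploit the block-diagonal structure $H_{k-1} = \mathrm{Diag}(A_{k-1}, B_{k-1})$ and $S_{k-1} = \mathrm{Diag}(C_{k-1}, D_{k-1})$ together with the interlacing $A_{k-1} \preceq C_{k-1}$, which gives $C_{k-1}^{-1} \preceq A_{k-1}^{-1}$, hence
\[
\|\varepsilon_{k-1}^x\|_{C_{k-1}^{-1}}^2 \leqslant \|\varepsilon_{k-1}^x\|_{A_{k-1}^{-1}}^2 \leqslant \|\varepsilon_{k-1}\|_{H_{k-1}^{-1}}^2,
\qquad
\|\hat{\varepsilon}_{k-1}^x\|_{C_{k-1}^{-1}}^2 \leqslant \|\hat{\varepsilon}_{k-1}\|_{S_{k-1}^{-1}}^2,
\]
so the two $x$-only error terms can be replaced by their full counterparts, matching the statement.

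The main obstacle, and the only real difference from Lemma \ref{lemma-aeg-2}, is that $V_x$ is not Lipschitz in $x$ alone but in the joint variable $z$, so the Lipschitz step leaks a $\|\hat{y}_k - y_{k-1}\|$ contribution into the $x$-bound. This is exactly the $\|y_{k-1}-\hat{y}_k\|_{D_{k-1}}^2$ term on the right; it would be harmful in Lemma \ref{lemma-aeg-2} where $x$ and $y$ have symmetric step sizes, but here it is harmless because the $y$-update carries the $\eta/k$ factor, making $\|y_{k-1}-\hat{y}_k\|_{D_{k-1}}^2$ summable and controllable in the subsequent telescoping step, just as the analogous $(1+\log N)$-type term appeared in the AMSGrad-EG-DRD analysis.
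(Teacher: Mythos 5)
Your proposal is correct and follows essentially the same route as the paper's proof: the same add-and-subtract split producing the $\|(C_{k-1}^{-1}-A_{k-1}^{-1})g_{k-1}^x\|_{C_{k-1}}^2$ term, the same three-term Cauchy bound on $V_x(\hat{z}_k)-V_x(z_{k-1})+\hat{\varepsilon}_{k-1}^x-\varepsilon_{k-1}^x$, and the same use of $L$-Lipschitzness of the joint operator $V$ together with the block-diagonal structure of $S_{k-1}$ to produce the coupled $\|y_{k-1}-\hat{y}_k\|_{D_{k-1}}^2$ term. The only cosmetic difference is that you stay in the $x$-block norms throughout and lift to the full $H_{k-1}^{-1}$, $S_{k-1}^{-1}$ norms at the end, whereas the paper lifts $\|\hat{g}_k^x-g_{k-1}^x\|_{C_{k-1}^{-1}}^2$ to $\|\hat{g}_k-g_{k-1}\|_{S_{k-1}^{-1}}^2$ first; both yield the identical bound.
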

\begin{proof}[Proof of Lemma \ref{lemma-aeg-oneside-2}]
Since $\hat{x}_k=x_{k-1}+\eta\cdot A_{k-1}^{-1}g_{k-1}^x$ and $x_k=x_{k-1}+\eta\cdot C_{k-1}^{-1}\hat{g}_k^x$, we have:
\begin{equation*}
\begin{aligned}
\|x_k-\hat{x}_k\|_{C_{k-1}}^2 &= \eta^2\cdot\|C_{k-1}^{-1}\hat{g}_k^x-A_{k-1}^{-1}g_{k-1}^x\|_{C_{k-1}}^2 = \eta^2\cdot\|C_{k-1}^{-1}\hat{g}_k^x-C_{k-1}^{-1}g_{k-1}^x+C_{k-1}^{-1}g_{k-1}^x-A_{k-1}^{-1}g_{k-1}^x\|_{C_{k-1}}^2 \\
& \overset{(a)}{\leqslant} 2\eta^2\cdot\left(\|C_{k-1}^{-1}(\hat{g}_k^x-g_{k-1}^x)\|_{C_{k-1}}^2+\|(C_{k-1}^{-1}-A_{k-1}^{-1})g_{k-1}^x\|_{C_{k-1}}^2\right)\\
&= 2\eta^2\cdot\left(\|\hat{g}_k^x-g_{k-1}^x\|_{C_{k-1}^{-1}}^2+\|(C_{k-1}^{-1}-A_{k-1}^{-1})g_{k-1}^x\|_{C_{k-1}}^2\right)\\
&\overset{(b)}{\leqslant} 2\eta^2\cdot\left(\|\hat{g}_k-g_{k-1}\|_{S_{k-1}^{-1}}^2+\|(C_{k-1}^{-1}-A_{k-1}^{-1})g_{k-1}^x\|_{C_{k-1}}^2\right)\\
&= 2\eta^2\cdot \|V(\hat{z}_k)-V(z_{k-1})+\hat{\varepsilon}_{k-1}-\varepsilon_{k-1}\|_{S_{k-1}^{-1}}^2 + 2\eta^2\cdot \|(C_{k-1}^{-1}-A_{k-1}^{-1})g_{k-1}^x\|_{C_{k-1}}^2\\
&\overset{(c)}{\leqslant} 6\eta^2\cdot\left(\|V(\hat{z}_k)-V(z_{k-1})\|_{S_{k-1}^{-1}}^2+\|\hat{\varepsilon}_{k-1}\|_{S_{k-1}^{-1}}^2+\|\varepsilon_{k-1}\|_{S_{k-1}^{-1}}^2\right)\\
&~~~~+ 2\eta^2\cdot \|(C_{k-1}^{-1}-A_{k-1}^{-1})g_{k-1}^x\|_{C_{k-1}}^2\\
&\overset{(d)}{\leqslant}  6\eta^2\cdot\left(\frac{L^2}{\delta^2}\|\hat{z}_k-z_{k-1}\|_{S_{k-1}}^2+\|\hat{\varepsilon}_{k-1}\|_{S_{k-1}^{-1}}^2+\|\varepsilon_{k-1}\|_{H_{k-1}^{-1}}^2\right)\\
&~~~~+ 2\eta^2\cdot \|(C_{k-1}^{-1}-A_{k-1}^{-1})g_{k-1}^x\|_{C_{k-1}}^2\\
&\overset{(e)}{=}  6\eta^2\cdot\left(\frac{L^2}{\delta^2}\|\hat{x}_k-x_{k-1}\|_{C_{k-1}}^2+\frac{L^2}{\delta^2}\|\hat{y}_k-y_{k-1}\|_{D_{k-1}}^2+\|\hat{\varepsilon}_{k-1}\|_{S_{k-1}^{-1}}^2+\|\varepsilon_{k-1}\|_{H_{k-1}^{-1}}^2\right)\\
&~~~~+ 2\eta^2\cdot \|(C_{k-1}^{-1}-A_{k-1}^{-1})g_{k-1}^x\|_{C_{k-1}}^2\\ 
\end{aligned}
\end{equation*}
which comes to our conclusion. Here, $(a)$ holds since for any norm $\|\cdot\|_A$, we have $\|x+y\|_A^2\leqslant (\|x\|_A+\|y\|_A)^2\leqslant 2(\|x\|_A^2+\|y\|_A^2)$, and $(c)$ holds for the similar reason. (b) holds because $\|\hat{g}_k-g_{k-1}\|_{S_{k-1}^{-1}}^2=\|\hat{g}_k^x-g_{k-1}^x\|_{C_{k-1}^{-1}}^2+\|\hat{g}_k^y-g_{k-1}^y\|_{D_{k-1}^{-1}}^2\geqslant \|\hat{g}_k^x-g_{k-1}^x\|_{C_{k-1}^{-1}}^2$, and (e) holds for the same reason. $(d)$ holds because of the following two reasons:\\
(1) Since $V$ is $L$-Lipschitz continuous, and $\delta I \preceq S_{k-1}$, we have:
\[\|V(\hat{z}_k)-V(z_{k-1})\|_{S_{k-1}^{-1}}^2\leqslant \frac{1}{\delta} \|V(\hat{z}_k)-V(z_{k-1})\|_2^2\leqslant \frac{L^2}{\delta}\|\hat{z}_k-z_{k-1}\|^2\leqslant  \frac{L^2}{\delta^2}\|\hat{z}_k-z_{k-1}\|_{S_{k-1}}^2.\]
(2) Since $H_{k-1}\preceq S_{k-1}$, we have $S_{k-1}^{-1}\preceq H_{k-1}^{-1}$. Therefore,
\[\|\varepsilon_{k-1}\|_{S_{k-1}^{-1}}^2\leqslant \|\varepsilon_{k-1}\|_{H_{k-1}^{-1}}^2.\]
\end{proof}
Since we require our learning rate $\eta\leqslant \frac{\delta}{4L}$, the coefficient above $\frac{6\eta^2L^2}{\delta^2}<\frac{1}{2}$. After combining Lemma \ref{lemma-aeg-oneside-1} and Lemma \ref{lemma-aeg-oneside-2}, we obtain that:
\begin{equation}\label{eqn-aeg-oneside-1}
\begin{aligned}
\|x_k-x^*\|_{C_{k-1}}^2&\leqslant \|x_{k-1}-x^*\|_{C_{k-1}}^2 - \frac{1}{2}\|x_{k-1}-\hat{x}_k\|_{C_{k-1}}^2 +\frac{1}{2}\|y_{k-1}-\hat{y}_k\|_{D_{k-1}}^2+2\langle \eta\cdot\hat{\varepsilon}_{k-1}^x, \hat{x}_k-x^*\rangle\\
&~~~~+6\eta^2\left(\|\varepsilon_{k-1}\|_{H_{k-1}^{-1}}^2+\|\hat{\varepsilon}_{k-1}\|_{S_{k-1}^{-1}}^2\right)+2\eta^2\cdot\|(C_{k-1}^{-1}-A_{k-1}^{-1})g_{k-1}^x\|_{C_{k-1}}^2,
\end{aligned}
\end{equation}
which means:
\begin{equation}\label{eqn-aeg-oneside-2}
\begin{aligned}
\frac{1}{2}\|x_{k-1}-\hat{x}_k\|_{C_{k-1}}^2&\leqslant \|x_{k-1}-x^*\|_{C_{k-1}}^2-\|x_k-x^*\|_{C_{k-1}}^2 +\frac{1}{2}\|y_{k-1}-\hat{y}_k\|_{D_{k-1}}^2+2\langle \eta\cdot\hat{\varepsilon}_{k-1}^x, \hat{x}_k-x^*\rangle\\
&~~~~+6\eta^2\left(\|\varepsilon_{k-1}\|_{H_{k-1}^{-1}}^2+\|\hat{\varepsilon}_{k-1}\|_{S_{k-1}^{-1}}^2\right)+2\eta^2\cdot\|(C_{k-1}^{-1}-A_{k-1}^{-1})g_{k-1}^x\|_{C_{k-1}}^2,
\end{aligned}
\end{equation}

Notice that, $\|x_{k-1}-\hat{x}_k\|_{C_{k-1}}^2=\|\eta\cdot A_{k-1}^{-1}g_{k-1}^x\|_{C_{k-1}}^2$. Therefore,
\begin{equation}\label{eqn-aeg-oneside-3}
\begin{aligned}
\|g_{k-1}^x\|_{C_{k-1}^{-1}}^2 &= \|C_{k-1}^{-1}g_{k-1}^x\|_{C_{k-1}}^2 = \|A_{k-1}^{-1}g_{k-1}^x+(C_{k-1}^{-1}-A_{k-1}^{-1})g_{k-1}^x\|_{C_{k-1}}^2\\
&\leqslant 2\left(\|A_{k-1}^{-1}g_{k-1}^x\|_{C_{k-1}}^2+\|(C_{k-1}^{-1}-A_{k-1}^{-1})g_{k-1}^x\|_{C_{k-1}}^2\right)\\
&= \frac{2}{\eta^2}\cdot\|x_{k-1}-\hat{x}_k\|_{C_{k-1}}^2+2\cdot\|(C_{k-1}^{-1}-A_{k-1}^{-1})g_{k-1}^x\|_{C_{k-1}}^2.
\end{aligned}    
\end{equation}
Combining Equation (\ref{eqn-aeg-oneside-2}) and Equation (\ref{eqn-aeg-oneside-3}), we get:
\begin{equation}\label{eqn-aeg-oneside-4}
\begin{aligned}
\|g_{k-1}^x\|_{C_{k-1}^{-1}}^2 &\leqslant \frac{4}{\eta^2}\left(\|x_{k-1}-x^*\|_{C_{k-1}}^2-\|x_k-x^*\|_{C_{k-1}}^2\right) +\frac{2}{\eta^2}\|y_{k-1}-\hat{y}_k\|_{D_{k-1}}^2+\frac{8}{\eta^2}\langle \eta\cdot\hat{\varepsilon}_{k-1}^x, \hat{x}_k-x^*\rangle\\
&~~~~+24\left(\|\varepsilon_{k-1}\|_{H_{k-1}^{-1}}^2+\|\hat{\varepsilon}_{k-1}\|_{S_{k-1}^{-1}}^2\right)+10\cdot\|(C_{k-1}^{-1}-A_{k-1}^{-1})g_{k-1}^x\|_{C_{k-1}}^2,
\end{aligned}
\end{equation}
Then, we take summation over $k=1,2,\ldots,N$ and take expectation, and we can obtain that:
\begin{equation}\label{eqn-aeg-oneside-5}
\begin{aligned}
\mathbb{E}\sum_{k=1}^{N}\|g_{k-1}^x\|_{C_{k-1}^{-1}}^2 &\leqslant \frac{4}{\eta^2}\mathbb{E}\sum_{k=1}^{N}\left(\|x_{k-1}-x^*\|_{C_{k-1}}^2-\|x_k-x^*\|_{C_{k-1}}^2\right) +\frac{2}{\eta^2}\mathbb{E}\sum_{k=1}^{N}\|y_{k-1}-\hat{y}_k\|_{D_{k-1}}^2\\
&~+24\cdot\mathbb{E}\sum_{k=1}^{N}\left(\|\varepsilon_{k-1}\|_{H_{k-1}^{-1}}^2+\|\hat{\varepsilon}_{k-1}\|_{S_{k-1}^{-1}}^2\right)+10\cdot\mathbb{E}\sum_{k=1}^{N}\|(C_{k-1}^{-1}-A_{k-1}^{-1})g_{k-1}^x\|_{C_{k-1}}^2.
\end{aligned}
\end{equation}
In the following steps, we will upper bound the four terms above on the right side. The third term $\mathbb{E}\sum_{k=1}^{N}\left(\|\varepsilon_{k-1}\|_{H_{k-1}^{-1}}^2+\|\hat{\varepsilon}_{k-1}\|_{S_{k-1}^{-1}}^2\right)$ can be upper bounded by using Lemma \ref{lemma-aeg-5}:
\begin{equation}\label{eqn-aeg-oneside-6}
\mathbb{E}\sum_{k=1}^{N}\left(\|\varepsilon_{k-1}\|_{H_{k-1}^{-1}}^2+\|\hat{\varepsilon}_{k-1}\|_{S_{k-1}^{-1}}^2\right) \leqslant \frac{\delta\sigma^2}{m}+4d\delta\cdot N^{\alpha}.
\end{equation}
The fourth term $\mathbb{E}\sum_{k=1}^{N}\|(C_{k-1}^{-1}-A_{k-1}^{-1})g_{k-1}^x\|_{C_{k-1}}^2$ can be upper bounded by using Lemma \ref{lemma-aeg-4}:
\[\mathbb{E}\sum_{k=1}^{N}\|(C_{k-1}^{-1}-A_{k-1}^{-1})g_{k-1}^x\|_{C_{k-1}}^2\leqslant \mathbb{E}\sum_{k=1}^{N}\|(S_{k-1}^{-1}-H_{k-1}^{-1})g_{k-1}\|_{S_{k-1}}^2\leqslant \frac{2dG^2\cdot N^{\alpha}}{\delta}.\]
For the first term $\mathbb{E}\sum_{k=1}^{N}\left(\|x_{k-1}-x^*\|_{C_{k-1}}^2-\|x_k-x^*\|_{C_{k-1}}^2\right)$, we can use the similar techniques as Lemma \ref{lemma-aeg-3}.

\begin{Lemma}\label{lemma-aeg-oneside-3}
\[\sum_{k=1}^{N}\left(\|x_{k-1}-x^*\|_{C_{k-1}}^2-\|x_k-x^*\|_{C_{k-1}}^2\right)\leqslant 2dD^2\delta\cdot N^{\alpha}.\]
\end{Lemma}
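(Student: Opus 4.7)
The plan is to mirror the proof of Lemma \ref{lemma-aeg-3} almost verbatim, specialized to the $x$-block, since the argument there only used (i) the telescoping structure of the sum, (ii) monotonicity of the adaptive matrices, and (iii) the bounded cumulative gradient hypothesis, all of which remain available here. No novel ingredient is needed.

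First, I would expand the sum explicitly in order to rearrange which metric is paired with which point. Writing out
\[
\sum_{k=1}^{N}\bigl(\|x_{k-1}-x^*\|_{C_{k-1}}^2-\|x_k-x^*\|_{C_{k-1}}^2\bigr)
= \|x_0-x^*\|_{C_0}^2 + \sum_{k=1}^{N-1}\bigl(\|x_k-x^*\|_{C_k}^2 - \|x_k-x^*\|_{C_{k-1}}^2\bigr) - \|x_N-x^*\|_{C_{N-1}}^2,
\]
I would drop the last (non-positive) term and rewrite each summand on the right as a quadratic form $(x_k-x^*)^\top (C_k - C_{k-1})(x_k - x^*)$.

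Next I would invoke the monotonicity $C_0 \preceq C_1 \preceq \cdots \preceq C_{N-1}$, which follows directly from the construction of $C_k$ in AEG-DRD: each diagonal entry is $\delta$ plus the $\ell_2$ norm of a cumulative-gradient history that only grows with $k$, so the increments $C_k - C_{k-1}$ are PSD. Combined with the uniform bound $\|x_k - x^*\|_2 \le D$ from Assumption \ref{assumption-1}(4), each quadratic form is bounded by $D^2 \cdot \mathrm{tr}(C_k - C_{k-1})$, and the sum telescopes:
\[
\|x_0-x^*\|_{C_0}^2 + \sum_{k=1}^{N-1} D^2\bigl(\mathrm{tr}(C_k)-\mathrm{tr}(C_{k-1})\bigr) \leqslant D^2 \cdot \mathrm{tr}(C_{N-1}).
\]

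Finally, I would bound $\mathrm{tr}(C_{N-1})$ using the bounded cumulative gradient hypothesis $\|(g_{1:k,i}~\hat{g}_{1:k,i})\|_2 \le 2\delta k^\alpha$: each of the $d$ diagonal entries is at most $\delta + 2\delta N^\alpha \le 2\delta N^\alpha$ (up to absorbing a constant, as done in Lemma \ref{lemma-aeg-3}), which yields $\mathrm{tr}(C_{N-1}) \le 2d\delta N^\alpha$ and hence the claimed bound $2dD^2 \delta N^\alpha$. There is no real obstacle: the entire argument is a structural copy of Lemma \ref{lemma-aeg-3} with $(z,S_{k-1})$ replaced by $(x,C_{k-1})$. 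The only thing worth double-checking is that $C_k$ in AEG-DRD is defined from the $\ell_2$ norm of a \emph{concatenation} of both $g_{1:k}$ and $\hat{g}_{1:k}$ coordinates, but this only strengthens monotonicity and is already covered by the cumulative-gradient hypothesis, so the final bookkeeping goes through unchanged.
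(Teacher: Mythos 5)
Your proposal is correct and follows essentially the same route as the paper's proof of Lemma \ref{lemma-aeg-oneside-3}: telescope after reindexing, use $C_k-C_{k-1}\succeq 0$ together with $\|x_k-x^*\|_2\leqslant D$ to reduce to $D^2\cdot\mathrm{tr}(C_{N-1})$, then invoke the bounded cumulative gradient hypothesis (the paper passes through $\mathrm{tr}(C_{N-1})<\mathrm{tr}(S_{N-1})$ before applying it, but this is the same bookkeeping). The slight looseness in absorbing the additive $d\delta$ into $2d\delta N^{\alpha}$ is present in the paper's own argument as well, so nothing further is needed.
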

\begin{proof}[Proof of Lemma \ref{lemma-aeg-oneside-3}]
Notice that 
\begin{equation*}
\begin{aligned}
&~~\sum_{k=1}^{N}\left(\|x_{k-1}-x^*\|_{C_{k-1}}^2-\|x_k-x^*\|_{C_{k-1}}^2\right) \leqslant \|x_0-x^*\|_{C_0}^2+\sum_{k=1}^{N-1}\left(\|x_{k}-x^*\|_{C_k}^2-\|x_{k}-x^*\|_{C_{k-1}}^2\right)\\
&= \|x_0-x^*\|_{C_0}^2+\sum_{k=1}^{N-1}\left[(x_{k}-x^*)^{\top}(C_k-C_{k-1})\cdot(x_{k}-x^*)\right]\\
&\leqslant D^2\cdot\mathrm{tr}(C_0)+\sum_{k=1}^{N-1}D^2\cdot(\mathrm{tr}(C_k)-\mathrm{tr}(C_{k-1})) = D^2\cdot\mathrm{tr}(C_{N-1}) < D^2\cdot\mathrm{tr}(S_{N-1})\leqslant D^2\cdot 2d\delta N^{\alpha},
\end{aligned}
\end{equation*}
which comes to our conclusion.
\end{proof}

For the second term $\mathbb{E}\sum_{k=1}^{N}\|y_{k-1}-\hat{y}_k\|_{D_{k-1}}^2$, notice that:
\begin{equation*}
\begin{aligned}
\sum_{k=1}^{N}\|y_{k-1}-\hat{y}_k\|_{D_{k-1}}^2&=\sum_{k=1}^{N}\frac{\eta^2}{k^2}\cdot\|B_{k-1}^{-1}g_{k-1}^y\|_{D_{k-1}}^2= \sum_{k=1}^{N}\frac{\eta^2}{k^2}\cdot g_{k-1}^{y\top}B_{k-1}^{-1}D_{k-1}B_{k-1}^{-1}g_{k-1}^y\\
&\leqslant \sum_{k=1}^{N}\frac{\eta^2}{k^2}\cdot G^2\mathrm{tr}(B_{k-1}^{-1}D_{k-1}B_{k-1}^{-1}) < \sum_{k=1}^{N}\frac{\eta^2}{k^2}G^2\cdot\frac{2d\delta k^{\alpha}}{\delta^2}\\
&= \frac{2\eta^2 dG^2}{\delta}\sum_{k=1}^{N}\frac{1}{k^{2-\alpha}}
\end{aligned}
\end{equation*}
Here, we notice that:
\[\sum_{k=1}^{N}\frac{1}{k^{2-\alpha}}<1+\int_{1}^{N}\frac{1}{x^{2-\alpha}}dx<1+\int_{1}^{N}\frac{1}{x^{3/2}}dx = 1+2\left(1-\frac{1}{\sqrt{N}}\right)<3.\]
Therefore:
\[\sum_{k=1}^{N}\|y_{k-1}-\hat{y}_k\|_{D_{k-1}}^2 < \frac{6\eta^2 dG^2}{\delta}.\]
According to Equation (\ref{eqn-aeg-oneside-5}):
\begin{equation}\label{eqn-aeg-oneside-7}
\begin{aligned}
\mathbb{E}\sum_{k=1}^{N}\|g_{k-1}^x\|_{C_{k-1}^{-1}}^2&\leqslant \frac{4}{\eta^2}\cdot 2dD^2\delta\cdot N^{\alpha}+\frac{2}{\eta^2}\cdot \frac{6\eta^2 dG^2}{\delta}+24\cdot\left(\frac{\delta\sigma^2}{m}+4d\delta\cdot N^{\alpha}\right)+10\cdot\frac{2dG^2\cdot N^{\alpha}}{\delta}\\
&= N^{\alpha}\cdot\left(\frac{8dD^2\delta}{\eta^2}+96d\delta+\frac{20dG^2}{\delta}\right)+\frac{12dG^2}{\delta}+\frac{24\delta\sigma^2}{m}.
\end{aligned}    
\end{equation}
Finally, we replace the $g_{k-1}^x$ above with the actual gradient $V_x(z_{k-1})$. Since:
\[\|V_x(z_{k-1})\|_{C_{k-1}^{-1}}^2\leqslant \left(\|g_{k-1}^x\|_{C_{k-1}^{-1}}+\|\varepsilon_{k-1}^x\|_{C_{k-1}^{-1}}\right)^2\leqslant 2\left(\|g_{k-1}^x\|_{C_{k-1}^{-1}}^2+\|\varepsilon_{k-1}^x\|_{C_{k-1}^{-1}}^2\right),\]
we can upper bound the target term $\mathbb{E}\sum_{k=1}^{N}\|V_x(z_{k-1})\|_{C_{k-1}^{-1}}^2$:
\begin{equation}\label{eqn-aeg-oneside-8}
\begin{aligned}
&\mathbb{E}\sum_{k=1}^{N}\|V_x(z_{k-1})\|_{C_{k-1}^{-1}}^2 \leqslant \mathbb{E}\sum_{k=1}^{N}2\left(\|g_{k-1}^x\|_{C_{k-1}^{-1}}^2+\|\varepsilon_{k-1}^x\|_{C_{k-1}^{-1}}^2\right)\leqslant 2\mathbb{E}\sum_{k=1}^{N}\|g_{k-1}^x\|_{C_{k-1}^{-1}}^2+2\mathbb{E}\sum_{k=1}^{N}\|\varepsilon_{k-1}^x\|_{C_{k-1}^{-1}}^2\\
&~~~~\overset{(a)}{\leqslant} N^{\alpha}\cdot\left(\frac{16dD^2\delta}{\eta^2}+192d\delta+\frac{40dG^2}{\delta}\right)+\frac{24dG^2}{\delta}+\frac{48\delta\sigma^2}{m}+ 2\mathbb{E}\sum_{k=1}^{N}\left(\|\varepsilon_{k-1}\|_{H_{k-1}^{-1}}^2+\|\hat{\varepsilon}_{k-1}\|_{S_{k-1}^{-1}}^2\right)\\
&~~~~\overset{(b)}{\leqslant} N^{\alpha}\cdot\left(\frac{16dD^2\delta}{\eta^2}+192d\delta+\frac{40dG^2}{\delta}\right)+\frac{24dG^2}{\delta}+\frac{48\delta\sigma^2}{m}+2\left(\frac{\delta\sigma^2}{m}+4d\delta\cdot N^{\alpha}\right)\\
&~~~~= N^{\alpha}\cdot\left(\frac{16dD^2\delta}{\eta^2}+200d\delta+\frac{40dG^2}{\delta}\right)+\frac{24dG^2}{\delta}+\frac{50\delta\sigma^2}{m},
\end{aligned}
\end{equation}
which means that:
\[\frac{1}{N}\mathbb{E}\sum_{k=1}^{N}\|V_x(z_{k-1})\|_{C_{k-1}^{-1}}^2\leqslant \frac{16dD^2\delta/\eta^2+200d\delta+40dG^2/\delta}{N^{1-\alpha}}+\frac{24dG^2/\delta+50\delta\sigma^2/m}{N},\]
which comes to our conclusion.

\end{document}